\newtheorem{theorem}{Theorem}[section]
\newtheorem{lemma}[theorem]{Lemma}
\newcommand{\bA}{\mathbf{A}}
\newcommand{\bB}{\mathbf{B}}
\newcommand{\bX}{\mathbf{X}}
\newcommand{\bY}{\mathbf{Y}}
\newcommand{\bI}{\mathbf{I}}
\newcommand{\bV}{\mathbf{V}}
\newcommand{\bW}{\mathbf{W}}
\newcommand{\tr}{\mathrm{tr}}
\newcommand{\Grad}{\mathrm{Grad}}
\title{Stochastic Variance Reduced Riemannian Eigensolver}
\author{
  Zhiqiang Xu \\
  Institute for Infocomm Research\\
  A*STAR, Singapore\\
  \texttt{zhiqiangxu2001@gmail.com} \\
  \And
  Yiping Ke \\
  Nanyang Technological University \\
  Singapore \\
  \texttt{ypke@ntu.edu.sg}
}
\begin{document}

\maketitle

\begin{abstract}
We study the stochastic Riemannian gradient algorithm for matrix eigen-decomposition. The state-of-the-art stochastic Riemannian algorithm requires the learning rate to 
decay to zero
and thus suffers from slow convergence and sub-optimal solutions. In this paper, we address this issue by deploying the variance reduction (VR) technique of stochastic gradient descent (SGD). The technique was originally developed to solve convex problems in the Euclidean space. We generalize it to Riemannian manifolds and realize it to solve the non-convex eigen-decomposition problem. We are the first to propose and analyze the generalization of SVRG to Riemannian manifolds.
Specifically, we propose the general variance reduction form, SVRRG, in the framework of the stochastic Riemannian gradient optimization. It's then specialized to the problem with eigensolvers and induces the SVRRG-EIGS algorithm. We provide a novel and elegant theoretical analysis on this algorithm. The theory shows that a fixed learning rate can be used in the Riemannian setting with an exponential global convergence rate guaranteed. The theoretical results make a significant improvement over existing studies, with the effectiveness empirically verified.

\end{abstract}

\section{Introduction}\label{sec.intro}
Matrix eigen-decomposition is among the core and long-standing topics in numerical computing \cite{wilkinson1988algebraic}. It plays fundamental roles in various scientific and engineering computing problems (such as numerical computation \cite{Golub:1996:MC:248979,Press:2007:NRE:1403886} and structural analysis \cite{1997IMA....92..135T}) as well as machine learning tasks (such as kernel approximation \cite{Drineas:2005:NMA:1046920.1194916}, dimensionality reduction \cite{citeulike:1154147} and spectral clustering \cite{NIPS2001_2092}). Thus far, there hasn't been many algorithms proposed for this problem. Pioneering ones include the method of power iteration \cite{Golub:1996:MC:248979} and the (block) Lanczos algorithm \cite{Parlett:1998:SEP:280490}, while randomized SVD \cite{Halko:2011:FSR:2078879.2078881} and online learning of eigenvectors \cite{DBLP:conf/icml/GarberHM15} are recently proposed. The problem can also be expressed as a quadratically constrained quadratic program (QCQP), and thus can be approached by various optimization methods, such as trace penalty minimization \cite{wen2013trace} and Riemannian optimization algorithms \cite{Edelman:1999:GAO:305353.305358,absil2008optimization, DBLP:journals/mp/WenY13}. Most of these algorithms perform the batch learning, i.e., using the entire dataset to perform the update at each step. 
This could be well addressed by designing appropriate stochastic algorithms. However, the state-of-the-art stochastic algorithm DSRG-EIGS \cite{zhiqiang2016}
requires the learning rate to repeatedly decay till vanishing in order to guarantee convergence, which results in a slow convergence of sub-linear rate.

We propose a new stochastic Riemannian algorithm that makes a significant breakthrough theoretically. It improves the state-of-the-art sub-linear convergence rate to an exponential convergence one. The algorithm is inspired by the stochastic variance reduced gradient (SVRG) optimization [12], which was originally developed to solve convex problems in the Euclidean space. We propose the general form of variance reduction, called {\bf SVRRG}, in the framework of the stochastic Riemannian gradient (SRG) optimization \cite{DBLP:journals/tac/Bonnabel13}, such that it is able to enjoy the convergence properties (e.g., almost sure local convergence) of the SRG framework.  We then get it specialized to the Riemannian eigensolver (RG-EIGS) problem so that it gives rise to our stochastic variance reduced Riemannian eigensolver, termed as {\bf SVRRG-EIGS}. Our theoretical analysis shows that SVRRG-EIGS can use a constant learning rate, thus eliminating the need of using the decaying learning rate. Moreover, it not only possesses the global convergence in expectation compared to SRG \cite{DBLP:journals/tac/Bonnabel13}, but also gains an accelerated convergence of exponential rate compared to DSRG-EIGS.
To the best of our knowledge, we are the first to propose and analyze the generalization of SVRG to Riemannian manifolds.

The rest of the paper is organized as follows. Section \ref{sec.pre} briefly reviews some preliminary knowledge on matrix eigen-decomposition, stochastic Riemannian gradient optimization and stochastic Riemannian eigensolver. Section \ref{sec.svr-rgeigs} presents our stochastic variance reduced Riemannian eigensolver algorithm, starting from establishing the general form of variance reduction for the stochastic Riemannian gradient optimization. Theoretical analysis is conducted in Section \ref{sec.ans}, followed by the empirical study of our algorithm in Section \ref{sec.exp}. Section 6 discusses related works. Finally, Section \ref{sec.con} concludes the paper.

\section{Preliminaries and Notations}\label{sec.pre}

\subsection{Matrix Eigen-decomposition} \label{sec.ed}
 The eigen-decomposition of a symmetric\footnote{The given matrix $\bA$ is assumed to be symmetric throughout the paper, i.e., $\bA^{\top}=\bA$.} matrix $\bA\in\mathbb{R}^{n\times n}$ can be written as $\bA=\mathbf{U}\mathbf{\Lambda}\mathbf{U}^{\top}$, where  $\mathbf{U}^{\top}\mathbf{U}=\mathbf{U}\mathbf{U}^{\top}=\bI$ (identity matrix), and $\mathbf{\Lambda}$ is a diagonal matrix. The $j$-th column $\mathbf{u}_{j}$ of $\mathbf{U}$ is called the eigenvector corresponding to the eigenvalue $\lambda_{j}$ ($j$-th diagonal element of $\mathbf{\Lambda}$), i.e.,  $\bA\mathbf{u}_{j}=\lambda_{j}\mathbf{u}_{j}$. Assume that $\lambda_{1}\geq\cdots\geq \lambda_{n}$, $\bV= [\mathbf{u}_{1},\cdots,\mathbf{u}_{k}]$ and $\bV_{\perp}= [\mathbf{u}_{k+1},\cdots,\mathbf{u}_{n}]$, $\mathbf{\Sigma}=\mathrm{diag}(\lambda_{1},\cdots,\lambda_{k})$ and $\mathbf{\Sigma}_{\perp}=\mathrm{diag}(\lambda_{k+1},\cdots,\lambda_{n})$.
In practice, matrix eigen-decomposition only aims at the set of top eigenvectors $\bV$. From the optimization perspective, this can be formulated as the following non-convex QCQP problem: 
\begin{eqnarray}
\max_{\bX\in\mathbb{R}^{n\times k}: \bX^{\top}\bX=\bI}(1/2)\tr(\bX^{\top}\bA\bX), \label{opt.problem}
\end{eqnarray}
where $k\ll n$ and $\tr(\cdot)$ represents the trace of a square matrix, i.e., the sum of diagonal elements of a square matrix. It can be easily verified that $\bX=\bV$ maximizes the trace at $(1/2)\sum_{i=1}^{k}\lambda_{i}$.

\subsection{Stochastic Riemannian Gradient Optimizaiton}
Given a Riemmanian manifold $\mathcal{M}$, the tangent space at a point $\bX\in\mathcal{M}$, denoted as $T_{\bX}\mathcal{M}$, is a Euclidean space that locally linearizes $\mathcal{M}$ around $\bX$ \cite{Lee2012}. One iterate of the Riemannian  gradient optimization on $\mathcal{M}$ takes the form similar to that of the Euclidean case \cite{absil2008optimization}:
\begin{eqnarray}
\bX^{(t+1)} = R_{\bX^{(t)}}(\alpha_{t+1}\xi_{\bX^{(t)}}), \label{eqn.rg-related}
\end{eqnarray}
where $\xi_{\bX^{(t)}}\in T_{\bX^{(t)}}\mathcal{M}$ is a tangent vector of $\mathcal{M}$ at $\bX^{(t)}$ and represents the search direction at the $t$-th step, $\alpha_{t+1}>0$ is the learning rate (i.e., step size), and $R_{\bX^{(t)}}(\cdot)$ represents the retraction at $\bX^{(t)}$ that maps a tangent vector $\xi\in T_{\bX^{(t)}}\mathcal{M}$ to a point on $\mathcal{M}$.
Tangent vectors that serve as search directions are generally gradient-related. The gradient of a function $f(\bX)$ on $\mathcal{M}$, denoted as $\Grad f(\bX)$, depends on the Riemannian metric, which is a family of smoothly varying inner products on tangent spaces, i.e., $\langle \xi, \eta\rangle_{\bX}$, where $\xi, \eta\in T_{\bX}\mathcal{M}$ for any $\bX\in\mathcal{M}$. The Riemannian gradient $\Grad f(\bX)\in T_{\bX}\mathcal{M}$ is the unique tangent vector that satisfies
\begin{eqnarray}
\langle \Grad f(\bX),\xi\rangle_{\bX}=Df(\bX)[\xi] \label{grad_def}
\end{eqnarray}
for any $\xi\in T_{\bX}\mathcal{M}$, where $Df(\bX)[\xi]$ represents the directional derivative of $f(\bX)$ in the tangent direction $\xi$. Setting $\xi_{\bX^{(t)}}=\Grad f(\bX^{(t)})$ in (\ref{eqn.rg-related}) leads to the Riemannian gradient (RG) ascent method:
\begin{eqnarray}
\bX^{(t+1)} = R_{\bX^{(t)}}(\alpha_{t+1}\Grad f(\bX^{(t)})). \label{eqn.rg}
\end{eqnarray}
We can also set $\xi_{\bX^{(t)}}=G(y_{t+1},\bX^{(t)})$ in (\ref{eqn.rg-related}) and induce the stochastic Riemannian gradient (SRG) ascent method \cite{DBLP:journals/tac/Bonnabel13}:
\begin{eqnarray}
\bX^{(t+1)} = R_{\bX^{(t)}}(\alpha_{t+1}G(y_{t+1},\bX^{(t)})), \label{eqn.srg}
\end{eqnarray}
where $y_{t+1}$ is an observation of the random variable $y$ at the $t$-th step that follows some distribution and satisfies $\mathbb{E}[f(y,\bX)]=f(\bX)$, and $G(y,\bX)\in T_{\bX}\mathcal{M}$ is the stochastic Riemannian gradient such that $\mathbb{E}[G(y,\bX)]=\Grad f(\bX)$. According to \cite{DBLP:journals/tac/Bonnabel13}, the SRG method possesses the almost sure (local) convergence under certain conditions, including $\sum_{t}\alpha_{t} = \infty $ and $\sum_{t}\alpha_{t}^{2} < \infty$ (the latter condition implies that $\alpha_{t}\rightarrow 0$ as $t\rightarrow \infty$).

\subsection{Stochastic Riemannian Eigensolver} \label{sec.srg-eigs}
The constraint set in problem (\ref{opt.problem}) constitutes a Stiefel manifold, $\mathrm{St}(n,k)=\{\bX\in\mathbb{R}^{n\times k}: \bX^{\top}\bX=\bI\}$, which turns (\ref{opt.problem}) into a Riemannian optimization problem: 
\begin{eqnarray}
\max_{\bX\in\mathrm{St}(n,k)}f(\bX), \label{eqn.eig-prob}
\end{eqnarray}
where $f(\bX)=\frac{1}{2}\tr(\bX^{\top}\bA\bX)$. Note that $\mathrm{St}(n,k)$ is an embedded Riemannian sub-manifold of the Euclidean space $\mathbb{R}^{n\times k}$ \cite{absil2008optimization}.
With the metric inherited from the embedding space $\mathbb{R}^{n\times k}$, i.e., $\langle\xi,\eta\rangle_{\bX}=\tr(\xi^{\top}\eta)$,
and using (\ref{grad_def}), we can get the Riemannian gradient\footnote{Due to the symmetry of $\bA$, the Riemannian gradients under Euclidean metric and canonical metric are the same \cite{DBLP:journals/mp/WenY13}. However, since the orthogonal projector used in the sequel requires the metrics for the embedded Riemannian sub-manifold and the embedding space to be the same, we choose the Euclidean metric here.} $\Grad f(\bX)\in T_{\bX}\mathrm{St}(n,k)$ as: 
\begin{eqnarray*}
\Grad f(\bX) = (\bI-\bX\bX^{\top})\bA\bX.
\end{eqnarray*}
The orthogonal projection onto $T_{\bX}\mathrm{St}(n,k)$ under this metric is given by:
\begin{eqnarray}
P_{\bX}(\zeta) = (\bI-\bX\bX^{\top})\zeta + \bX\mathrm{skew}(\bX^{\top}\zeta) \in T_{\bX}\mathrm{St}(n,k) \label{eqn.proj}
\end{eqnarray}
for any $\zeta\in T_{\bX}\mathbb{R}^{n\times k}\simeq \mathbb{R}^{n\times k}$, where 
$\mathrm{skew}(H)=(H-H^{\top})/2$.
In this paper, we use the retraction \cite{absil2008optimization} 
\begin{eqnarray}
R_{\bX}(\xi)=(\bX+\xi)(\bI + \xi^{\top}\xi)^{-1/2} \label{retraction}
\end{eqnarray}
for any $\xi\in T_{\bX}\mathrm{St}(n,k)$. The deployment of (\ref{eqn.rg}) and (\ref{eqn.srg}) here will then generate the Riemannian eigensolver (denoted as RG-EIGS) and the stochastic Riemannian eigensolver (denoted as SRG-EIGS), respectively. To the best of our knowledge, there is no existing stochastic Riemannian eigensolver that uses this retraction. The closest counterpart is the DSRG-EIGS 
that uses the Cayley transformation based retraction. However, based on the work of DSRG-EIGS, it can be shown that SRG-EIGS possesses the same theoretical properties as DSRG-EIGS, e.g., sub-linear convergence to global solutions.

\section{SVRRG-EIGS}\label{sec.svr-rgeigs}
In this section, we propose the stochastic variance reduced Riemannian gradient (SVRRG) and specialize it to the eigensolver problem.

\subsection{SVRRG}\label{sec.svr-rg}
Recall that the stochastic variance reduced gradient (SVRG) \cite{NIPS2013_4937} is built on the vanilla stochastic gradient and achieves variance reduction through constructing control variates \cite{NIPS2013_5034}. Control variates are stochastic and zero-mean, serving to augment and correct stochastic gradients towards the true gradients. Following \cite{NIPS2013_4937}, SVRG is encoded as
\begin{eqnarray}
g_{t}(\xi_{t},w^{(t-1)}) = \nabla\psi_{i_{t}}(w^{(t-1)}) - (\nabla\psi_{i_{t}}(\tilde{w})-\nabla P(\tilde{w})), \label{eqn.svrg}
\end{eqnarray}
where $\tilde{w}$ is a version of the estimated $w$ that is kept as a snapshot after every $m$ SGD steps, and $\nabla P(\tilde{w})=\frac{1}{n}\sum_{i=1}^{n}\nabla\psi_{i}(\tilde{w})$ is the full gradient at $\tilde{w}$.

Our task here is to develop the Riemannian counterpart SVRRG of SVRG. Denote the SVRRG as $\tilde{G}(y_{t+1},\bX^{(t)})$. A naive adaptation of (\ref{eqn.svrg}) to a Riemannian manifold $\mathcal{M}$ reads
\begin{eqnarray*}
\tilde{G}(y_{t+1},\bX^{(t)}) = G(y_{t+1},\bX^{(t)}) - (G(y_{t+1},\tilde{\bX})-\Grad f(\tilde{\bX})),
\end{eqnarray*}
where $G(y_{t+1},\bX^{(t)})\in T_{\bX^{(t)}}\mathcal{M}$ and $G(y_{t+1},\tilde{\bX}),\Grad f(\tilde{\bX})\in T_{\tilde{\bX}}\mathcal{M}$. However, this adaptation is not sound theoretically: the stochastic Riemannian gradient $G(y_{t+1},\bX^{(t)})$ and the control variate $G(y_{t+1},\tilde{\bX})-\Grad f(\tilde{\bX})$ reside in two different tangent spaces, 
and thus making their difference $\tilde{G}(y_{t+1},\bX^{(t)})$ not well-defined. We rectify this problem by the parallel transport \cite{absil2008optimization}, which moves tangent vectors from one point to another (accordingly from one tangent space to another) along geodesics in parallel. More specifically, we parallel transport the control variate from $\tilde{\bX}$ to $\bX^{(t)}$. For computational efficiency,  the first-order approximation, called vector transport \cite{absil2008optimization}, is used.

Vector transport of a tangent vector from point $\tilde{\bX}$ to point $\bX^{(t)}$, denoted as $\mathcal{T}_{\tilde{\bX}\rightarrow\bX^{(t)} }$, is a mapping from tangent space $T_{\tilde{\bX}}\mathcal{M}$ to tangent space $T_{\bX^{(t)}}\mathcal{M}$. When $\mathcal{M}$ is an embedded Riemannian sub-manifold of a Euclidean space, vector transport can be simply defined as \cite{absil2008optimization}:
\begin{eqnarray*}
\mathcal{T}_{\tilde{\bX}\rightarrow\bX^{(t)} }(\xi_{\tilde{\bX}})= P_{\bX^{(t)}}(\xi_{\tilde{\bX}}),
\end{eqnarray*}
where $P_{\bX^{(t)}}(\cdot)$ represents the orthogonal projector onto $T_{\bX^{(t)}}\mathcal{M}$ for the embedding Euclidean space.
With the vector transport, we obtain the well-defined SVRRG in $T_{\bX^{(t)}}\mathcal{M}$:
\begin{eqnarray*}
\tilde{G}(y_{t+1},\bX^{(t)}) = G(y_{t+1},\bX^{(t)}) - \mathcal{T}_{\tilde{\bX}\rightarrow\bX^{(t)} }(G(y_{t+1},\tilde{\bX})-\Grad f(\tilde{\bX})).
\end{eqnarray*}
We then arrive at our SVRRG method:
\begin{eqnarray}
\bX^{(t+1)} = R_{\bX^{(t)}}(\alpha_{t+1}\tilde{G}(y_{t+1},\bX^{(t)})), \label{eqn.svr-rg}
\end{eqnarray}
by setting $\xi_{\bX^{(t)}}=\tilde{G}(y_{t+1},\bX^{(t)})$ in (\ref{eqn.rg-related}). Note that the SVRRG method (\ref{eqn.svr-rg}) is naturally subsumed into the SRG method (\ref{eqn.srg}), and thus enjoys all the properties of SRG.

\begin{algorithm}
\caption{SVRRG}
\label{algo}
\begin{algorithmic}[1]
\REQUIRE Data $\bA$, initial $\tilde{\bX}^{(0)}$, learning rate $\alpha$, epoch length $m$
\vspace{0.5mm}
\FOR{$s=1,2,\cdots$}
\STATE Compute $\Grad f(\tilde{\bX}^{(s-1)})$
\STATE $\bX^{(0)}=\tilde{\bX}^{(s-1)}$
\FOR{$t=1,2,\cdots,m$}
\STATE Pick $y_{t}$ from the sample space uniformly at random
\STATE Compute $G(y_{t},\bX^{(t-1)})$ and $G(y_{t},\tilde{\bX}^{(s-1)})$
\STATE Compute $\mathcal{T}_{\tilde{\bX}^{(s-1)}\rightarrow\bX^{(t-1)} }(G(y_{t},\tilde{\bX}^{(s-1)})-\Grad f(\tilde{\bX}^{(s-1)}))$
\STATE Compute $G(y_{t},\bX^{(t-1)}) - \mathcal{T}_{\tilde{\bX}^{(s-1)}\rightarrow\bX^{(t-1)} }(G(y_{t},\tilde{\bX}^{(s-1)})-\Grad f(\tilde{\bX}^{(s-1)}))$
\STATE Compute $\bX^{(t)} = R_{\bX^{(t-1)}}(\alpha\tilde{G}(y_{t},\bX^{(t-1)}))$
\ENDFOR
\STATE $\tilde{\bX}^{(s)}=\bX^{(m)}$
\ENDFOR
\vspace{0.5mm}
\end{algorithmic}
\end{algorithm}

\subsection{SVRRG-EIGS}
With the SVRRG described above, we can now proceed to develop an effective eigensolver by specializing (\ref{eqn.eig-prob}). This new eigensolver is named SVRRG-EIGS.
The update can be written as
\begin{eqnarray}
\bX^{(t+1)} = (\bX^{(t)}+\alpha_{t+1}\tilde{G}(y_{t+1},\bX^{(t)}))(\bI + \alpha_{t+1}^{2}\tilde{G}^{\top}(y_{t+1},\bX^{(t)})\tilde{G}(y_{t+1},\bX^{(t)}))^{-1/2},
\end{eqnarray}
which can be decomposed into two substeps: $\bY^{(t+1)}\triangleq\bX^{(t)}+\alpha_{t+1}\tilde{G}(y_{t+1},\bX^{(t)})$ and $\bX^{(t+1)} = \bY^{(t+1)}(\bI + \alpha_{t+1}^{2}\tilde{G}^{\top}(y_{t+1},\bX^{(t)})\tilde{G}(y_{t+1},\bX^{(t)}))^{-1/2}$.
Intuitively, the first substep moves
along the direction $\tilde{G}(y_{t+1},\bX^{(t)})$ from the current point $\bX^{(t)}$ to the intermediate point $\bY^{(t+1)}$ in the tangent space $T_{\bX^{(t)}}\mathrm{St}(n,k)$. The second substep then
gets the intermediate point $\bY^{(t+1)}$ retracted back onto the Stiefel manifold $\mathrm{St}(n,k)$ to reach the next point $\bX^{(t+1)}$.

Let's delve into the first substep. Except for the vector transport inside $\tilde{G}(y_{t+1},\bX^{(t)})$, it looks much like an SVRG step since it works in the Euclidean tangent space. Assume that we have $\bA=\frac{1}{L}\sum_{l=1}^{L}\bA^{(l)}$, $y$ is a random variable taking values in $\{1,2,\cdots,L\}$, $\bA_{t+1} = \bA^{(y_{t+1})}$, and stochastic gradient takes the form $G(y_{t+1},\bX)=(\bI-\bX\bX^{\top})\bA_{t+1}\bX$ (i.e., sampling over data $\bA$). We can get the control variate as
\begin{eqnarray*}
G(y_{t+1},\tilde{\bX})-\Grad f(\tilde{\bX})= (\bI-\tilde{\bX}\tilde{\bX}^{\top})(\bA_{t+1}-\bA)\tilde{\bX}.
\end{eqnarray*}
By using the orthogonal projector in
(\ref{eqn.proj}), the transported control variate can be written as
\begin{eqnarray*}
&&\mathcal{T}_{\tilde{\bX}\rightarrow\bX^{(t)} }(G(y_{t+1},\tilde{\bX})-\Grad f(\tilde{\bX}))\\
&=& (\bI-\bX^{(t)}\bX^{(t)^{\top}})(\bI-\tilde{\bX}\tilde{\bX}^{\top})(\bA_{t+1}-\bA)\tilde{\bX}+ \bX^{(t)}\mathrm{skew}(\bX^{(t)^{\top}}(\bI-\tilde{\bX}\tilde{\bX}^{\top})(\bA_{t+1}-\bA)\tilde{\bX})\qquad\\
&=& (\bI-\bX^{(t)}\bX^{(t)^{\top}})(\bA_{t+1}-\bA)\tilde{\bX}-(\bI-\bX^{(t)}\bX^{(t)^{\top}})\tilde{\bX}\tilde{\bX}^{\top}(\bA_{t+1}-\bA)\tilde{\bX}+\\
&&\bX^{(t)}\mathrm{skew}(\bX^{(t)^{\top}}(\bI-\tilde{\bX}\tilde{\bX}^{\top})(\bA_{t+1}-\bA)\tilde{\bX}).
\end{eqnarray*}
Accordingly, we have the SVRRG expressed as
\begin{eqnarray*}
\tilde{G}(y_{t+1},\bX^{(t)})
&=&(\bI-\bX^{(t)}\bX^{(t)^{\top}})\bA_{t+1}\bX^{(t)} - \mathcal{T}_{\tilde{\bX}\rightarrow\bX^{(t)} }(G(y_{t+1},\tilde{\bX})-\Grad f(\tilde{\bX}))\\
&=& (\bI-\bX^{(t)}\bX^{(t)^{\top}})\bA\bX^{(t)} + (\bI-\bX^{(t)}\bX^{(t)^{\top}})(\bA_{t+1}-\bA)(\bX^{(t)}-\tilde{\bX})+\\
&&(\bI-\bX^{(t)}\bX^{(t)^{\top}})\tilde{\bX}\tilde{\bX}^{\top}(\bA_{t+1}-\bA)\tilde{\bX}- \bX^{(t)}\mathrm{skew}(\bX^{(t)^{\top}}(\bI-\tilde{\bX}\tilde{\bX}^{\top})(\bA_{t+1}-\bA)\tilde{\bX})\qquad\\
&\triangleq& \Grad f(\bX^{(t)}) + \bW^{(t+1)},
\end{eqnarray*}
where $\bW^{(t+1)}\in T_{\bX^{(t)}}\mathrm{St}(n,k)$ is a stochastic zero-mean term conditioned on $\bX^{(t)}$. Note that the factor $(\bX^{(t)}-\tilde{\bX})$ in $\bW^{(t+1)}$ might be theoretically harsh\footnote{It works well empirically.}, because an eigenspace could have distinct representations which are the same up to a $k\times k$ orthogonal matrix, and thus it is only expected that $\bX^{(t)}$ and $\tilde{\bX}$ have the same column space at convergence. The ideal replacement would be $(\textrm{col}(\bX^{(t)})-\textrm{col}(\tilde{\bX}))$ where $\textrm{col}(\cdot)$ represents the column space. Numerically it could be achieved by replacing $\tilde{\bX}$ with $\tilde{\bX}\bB^{(t)}$ where $\bB^{(t)}=\mathbf{Q}_{2}\mathbf{Q}_{1}^{\top}$ and $\bX^{(t)^{\top}}\tilde{\bX}=\mathbf{Q}_{1}\mathbf{\Omega}\mathbf{Q}_{2}^{\top}$ is the SVD of $\bX^{(t)^{\top}}\tilde{\bX}$ \cite{shamir2015fast}.

The first substep can now be rewritten as
\begin{eqnarray*}
\textrm{SVRRG-EIGS}:\; \bY^{(t+1)}=(\bX^{(t)}+\alpha_{t+1}\Grad f(\bX^{(t)})) + \alpha_{t+1}\bW^{(t+1)}.
\end{eqnarray*}
As a comparison, we can similarly decompose the update steps (\ref{eqn.rg}) and (\ref{eqn.srg}) of RG-EIGS and SRG-EIGS into two substeps and then have:
\begin{eqnarray*}
\textrm{RG-EIGS}:\; \bY^{(t+1)}&=&\bX^{(t)}+\alpha_{t+1}\Grad f(\bX^{(t)}),\\
\textrm{SRG-EIGS}:\; \bY^{(t+1)}&=&(\bX^{(t)}+\alpha_{t+1}\Grad f(\bX^{(t)})) + \alpha_{t+1}(\bI-\bX^{(t)}\bX^{(t)^{\top}})(\bA_{t+1}-\bA)\bX^{(t)}.
\end{eqnarray*}
Compared to that in
RG-EIGS, each step in both SRG-EIGS and SVRRG-EIGS amounts to taking one Riemannian gradient step in the tangent space, adding a stochastic zero-mean term in the tangent space, and then retracting back to the manifold.
However, the stochastic zero-mean term $(\bI-\bX^{(t)}\bX^{(t)^{\top}})(\bA_{t+1}-\bA)\bX^{(t)}$ in SRG-EIGS has a constant variance. Therefore it needs the learning rate $\alpha_{t}$ to decay to zero to reduce the variance and to ensure the convergence, and consequently compromises on the convergence rate. In contrast, SVRRG-EIGS keeps boosting the variance reduction of the stochastic zero-mean term $\bW^{(t+1)}$ during iterations. The variance of $\bW^{(t+1)}$ is not constant but dominated by three quantities $\|\bX^{(t)}-\tilde{\bX}\bB^{(t)}\|$, $\|(\bI-\bX^{(t)}\bX^{(t)^{\top}})\tilde{\bX}\bB^{(t)}\|$ and $\|\bX^{(t)^{\top}}(\bI-\tilde{\bX}\tilde{\bX}^{\top})\|$. These quantities repeatedly decay till vanishing in expectation, as $\bX^{(t)}$ and $\tilde{\bX}\bB^{(t)}$ are expected to get closer and closer to each other gradually. This induces a decaying variance
without the learning rate involved. Therefore, SVRRG-EIGS is able to use a fixed learning rate $\alpha_{t}=\alpha$ and achieve a much faster convergence rate.

\section{Theoretical Analysis}\label{sec.ans}
We give the main theoretical results in this section. The proofs are provided in the supplementary material.
\begin{theorem} \label{thm}
Consider a symmetric matrix $\bA\in\mathbb{R}^{n\times n}$ which can be written as $\bA=\frac{1}{L}\sum_{l=1}^{L}\bA^{(l)}$ such that $\max_{l}\|A^{(l)}\|_{2}\leq 1$. The eigen-decomposition of $\bA$ is as defined in Section \ref{sec.ed}. And the eigen-gap $\tau=\lambda_{k}-\lambda_{k+1}>0$. Then the top $k$ eigenvectors $\bV$ can be approximated to arbitrary accuracy $\varepsilon\in (0,1)$ and with any confidence level $\varphi\in (0, \frac{1}{\lceil\log_{2}(1/\varepsilon)\rceil})$ by running $T=\lceil\frac{\log(1/\varepsilon)}{\log(2/\varphi)}\rceil$ epochs of our SVRRG-EIGS algorithm, in the sense that the potential function $\mathbf{\Theta}(\tilde{\bX}^{(T)})=k-\|\bV^{\top}\tilde{\bX}^{(T)}\|_{F}^{2}\leq\varepsilon$ with probability at least $1-\lceil\log_{2}(1/\varepsilon)\rceil\varphi$, provided that the following conditions about initial iterate $\tilde{\bX}^{(0)}$, fixed learning rate $\alpha$ and epoch length $m$, are simultaneously satisfied:
\begin{eqnarray*}
\tilde{b}_{0}=k-\|\bV^{\top}\tilde{\bX}^{(0)}\|_{F}^{2}<\frac{1}{2},\quad \alpha\in(0,\min\{c_{0}\tau,\frac{c_{1}}{8c_{2}}\tau\varphi^{2}\}),\\
m\geq \frac{3\log(2/\varphi)}{c_{1}\alpha\tau},\quad c_{3}km\alpha^{2} + c_{5}k\sqrt{m\alpha^{2}\log(2/\varphi)}\leq \frac{1}{2}-\tilde{b}_{0},
\end{eqnarray*}
where the constants are positive and defined as
\begin{eqnarray*}
c_{0}&=&\min\{\frac{1}{32\sqrt{3k\tau^{2}}},\frac{1}{c_{1}\tau^{2}},\frac{-(118406+144k^{2})+\sqrt{(118406+144k^{2})^{2}+18\tau(1+24k^{2})}}{24\tau(1+24k^{2})}\},\\
c_{1}&=& \frac{2}{\tau}(\frac{1}{8}\tau-2\alpha(1+2\alpha)(1+24k^{2})-\frac{118400}{3}\alpha),\quad c_{2} = 96(k^{2}(1+2\alpha)+823),\\
c_{3}&=& 4(1+2\alpha)+192(k^{2}(1+2\alpha)+\frac{7400}{9}),\quad c_{4}=\frac{20}{1-5c_{0}\tau}+c_{0}c_{3}\tau,\quad c_{5}=\sqrt{2c_{4}}.
\end{eqnarray*}
\end{theorem}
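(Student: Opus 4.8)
The plan is to reduce the global statement to a single per-epoch contraction and then chain over epochs with a union bound. Using the invariance of $f$ and $\mathrm{St}(n,k)$ under $\bX\mapsto\bX\mathbf Q$ together with a simultaneous orthogonal change of basis, I would first pass to the eigenbasis of $\bA$, so that $\bV=[\bI_k;\mathbf 0]$ and $\mathbf\Theta(\bX)=\|\bV_{\perp}^{\top}\bX\|_F^2$ is exactly the energy of $\bX$ in $\mathrm{col}(\bV)^{\perp}$. Fix an epoch $s$, condition on the snapshot $\tilde\bX=\tilde\bX^{(s-1)}$, abbreviate $b_t=\mathbf\Theta(\bX^{(t)})$ and $\tilde b=\mathbf\Theta(\tilde\bX)$ (so $b_0=\tilde b$), and aim at the per-epoch claim: if $\tilde b<\tfrac12$ then, with probability at least $1-\varphi$, both $b_m\le\tfrac{\varphi}{2}\tilde b$ and $b_m<\tfrac12$ hold. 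Granting this and noting that $\log(2/\varphi)\ge\log 2$ forces $T\le\lceil\log_2(1/\varepsilon)\rceil$, a union bound over the $T$ epochs gives total failure probability at most $\lceil\log_2(1/\varepsilon)\rceil\varphi$, while on the complementary event $\mathbf\Theta(\tilde\bX^{(T)})\le(\varphi/2)^{T}\tilde b_0\le\varepsilon\tilde b_0<\varepsilon$, which is the theorem.

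The technical core is a one-step drift inequality on $b_t$. Write $\bX^{(t+1)}=\bY^{(t+1)}(\bI+\alpha^2\tilde G^{\top}\tilde G)^{-1/2}$ with $\bY^{(t+1)}=(\bX^{(t)}+\alpha\Grad f(\bX^{(t)}))+\alpha\bW^{(t+1)}$ and $\mathbb E[\bW^{(t+1)}\mid\mathcal F_t]=\mathbf 0$. Since $(\bI+\alpha^2\tilde G^{\top}\tilde G)^{-1}\preceq\bI$, the normalizer can only shrink $\|\bV_{\perp}^{\top}\cdot\|_F$, so $b_{t+1}\le\|\bV_{\perp}^{\top}\bY^{(t+1)}\|_F^2$; expanding this square and taking $\mathbb E[\cdot\mid\mathcal F_t]$ kills the cross term and leaves a deterministic power-iteration-type part plus $\alpha^2\,\mathbb E[\|\bV_{\perp}^{\top}\bW^{(t+1)}\|_F^2\mid\mathcal F_t]$. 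For the deterministic part, writing $\bX^{(t)}$ in blocks $[\bX_1;\bX_2]$ along $[\bV,\bV_{\perp}]$ and $M=\bX^{(t)\top}\bA\bX^{(t)}$, the $\bX_2$-block evolves by the Sylvester-type map $\bX_2\mapsto(\bI+\alpha\mathbf{\Sigma}_{\perp})\bX_2-\alpha\bX_2M$; the delicate point is that the crude bound $\lambda_{\min}(M)\ge\lambda_k-2b_t$ yields a contraction only when $b_t\lesssim\tau$, so one instead retains the term $\bX_2^{\top}\mathbf{\Sigma}_{\perp}\bX_2$ inside $M\succeq\lambda_k(\bI-\bX_2^{\top}\bX_2)+\bX_2^{\top}\mathbf{\Sigma}_{\perp}\bX_2$, whereupon the cross-eigenvalue contributions telescope to $-\tau\big(b_t-\|\bX_2^{\top}\bX_2\|_F^2\big)$ and, using $\|\bX_2^{\top}\bX_2\|_F^2\le b_t^2\le\tfrac12 b_t$ and $\alpha<c_0\tau$, the deterministic part is $\le(1-c_1\alpha\tau)b_t$. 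For the noise part, the explicit form of $\bW^{(t+1)}$, the hypothesis $\max_l\|\bA^{(l)}\|_2\le1$ (so $\|\bA_{t+1}-\bA\|_2\le2$), and submultiplicativity of $\|\cdot\|_F$ bound $\|\bV_{\perp}^{\top}\bW^{(t+1)}\|_F^2$ by a $k$-dependent constant times the three quantities $\|\bX^{(t)}-\tilde\bX\bB^{(t)}\|^2$, $\|(\bI-\bX^{(t)}\bX^{(t)\top})\tilde\bX\bB^{(t)}\|^2$, $\|\bX^{(t)\top}(\bI-\tilde\bX\tilde\bX^{\top})\|^2$, each of which is a squared principal-angle distance and hence $\lesssim b_t+\tilde b$ by the triangle inequality through $\mathrm{col}(\bV)$. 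Collecting, $\mathbb E[b_{t+1}\mid\mathcal F_t]\le(1-c_1\alpha\tau)b_t+c_3k\alpha^2(b_t+\tilde b)$ whenever $b_t<\tfrac12$.

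To convert this into the per-epoch claim I would stop the chain at $\sigma=\min\{t:b_t\ge\tfrac12\}$, so the drift holds for $t<\sigma$. A Bernstein/Freedman-type martingale inequality applied to the increments $b_t-\mathbb E[b_t\mid\mathcal F_{t-1}]$ (bounded increments, with conditional variances controlled by the same quantities) shows that $\max_{t\le m}b_t\le\tilde b_0+c_3km\alpha^2+c_5k\sqrt{m\alpha^2\log(2/\varphi)}$ except on an event of probability $\le\tfrac{\varphi}{2}$; the last hypothesis makes this upper bound $\le\tfrac12$, so $\sigma>m$ with probability $\ge1-\tfrac{\varphi}{2}$, and on $\{\sigma>m\}$ the drift is valid throughout the epoch. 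Unrolling it over $t=0,\dots,m-1$ with $b_0=\tilde b$, the condition $m\ge 3\log(2/\varphi)/(c_1\alpha\tau)$ makes the homogeneous term $(1-c_1\alpha\tau)^m\le(\varphi/2)^3$ and the condition $\alpha\le c_1\tau\varphi^2/(8c_2)$ makes the accumulated variance $O(\varphi^2)\tilde b$, so $\mathbb E[b_m\mathbf 1_{\{\sigma>m\}}\mid\mathcal F_0]\le\tfrac{\varphi^2}{2}\tilde b$. Markov's inequality together with the bound on $\Pr[\sigma\le m]$ then gives $b_m\le\tfrac{\varphi}{2}\tilde b$ and $b_m<\tfrac12$ except on an event of probability at most $\varphi$ (with the split of $\varphi$ across the two failure modes chosen appropriately, which the stated strict inequalities in the hypotheses leave room for).

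The main obstacle is the one-step drift, and within it the deterministic contraction: a naive operator-norm estimate of the Sylvester map is useful only under the far stronger assumption $b_t\lesssim\tau$, and the fix --- keeping $\bX_2^{\top}\mathbf{\Sigma}_{\perp}\bX_2$ so that the loss in $\lambda_{\min}(M)$ caused by the leaked directions is cancelled by the faster contraction occurring along exactly those directions --- is what lets $\tilde b_0<\tfrac12$ (rather than something of order $\tau$) suffice, and it is what pins down $c_0$ and $c_1$. Two secondary matters need care: the analysis should be run with the aligned snapshot $\tilde\bX\bB^{(t)}$ rather than $\tilde\bX$ (as flagged after the definition of $\bW^{(t+1)}$) so that the three controlling quantities are genuine subspace distances; and the Bernstein/Freedman step must be tuned so that its slack is precisely $c_3km\alpha^2+c_5k\sqrt{m\alpha^2\log(2/\varphi)}$, which is where the viability of a constant step size is ultimately paid for.
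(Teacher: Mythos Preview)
Your proposal is correct and follows the same three-stage architecture as the paper: a one-iteration drift inequality, a one-epoch analysis combining supermartingale concentration (to keep $b_t\le\tfrac12$) with Markov's inequality (to extract the geometric factor), and a union bound over epochs. Your identification of the key obstacle --- that a crude bound on $\lambda_{\min}(M)$ only works when $b_t\lesssim\tau$, and that one must retain the $\bX_2^\top\mathbf\Sigma_\perp\bX_2$ contribution so that the leaked energy cancels --- is exactly the content of the paper's Lemma~A.10 (the von Neumann trace-inequality step that isolates the factor $\tau$).

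The one genuine difference is in how you handle the retraction normalizer $(\bY^\top\bY)^{-1/2}$. You use the pathwise bound $b_{t+1}\le\|\bV_\perp^\top\bY^{(t+1)}\|_F^2$, which follows from $(\bY^\top\bY)^{-1}\preceq\bI$, and then the conditional expectation splits cleanly into a deterministic Sylvester part and $\alpha^2\,\mathbb E\|\bV_\perp^\top\bW\|_F^2$. The paper instead works with $\|\bV^\top\bX'\|_F^2=\tr\big((B_1+Z_1)(B_2+Z_2)^{-1}\big)$ and carries the normalizer through explicitly, invoking three auxiliary lemmas (the $\tr(BC^{-1})\ge\tr(B(2I-C))$ trick, a technical $\tr((B_1+Z_1)(B_2+Z_2)^{-1})$ lemma to remove the stochastic pieces from numerator and denominator simultaneously, and a monotonicity-under-inverse lemma). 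Your route is shorter and avoids those lemmas; the price is only that it does not reproduce the paper's specific numerical constants $c_0,\dots,c_5$ (which are artifacts of that longer analysis), but would yield analogous --- in fact tighter --- constants, from which the stated theorem follows a fortiori. The paper also uses Azuma--Hoeffding (with the uniform increment bound from its Lemma~A.6) where you invoke Bernstein/Freedman; either works and the resulting slack has exactly the form $c_3km\alpha^2+c_5k\sqrt{m\alpha^2\log(2/\varphi)}$ you record.
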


Note that we have no loss of generality from assuming that $\max_{l}\|\bA^{(l)}\|_{2}\leq 1$ in the theorem. In fact, if $\max_{l}\|\bA^{(l)}\|_{2}\leq r$ with $r>1$ (which could be estimated by, e.g., Gershgorin circle theorem), we could replace $\bA^{(l)}$ with $\frac{1}{r}\bA^{(l)}$ to get $\max_{l}\|\bA^{(l)}\|_{2}\leq 1$ and arrive at the same eigen-space. Another way of addressing this generality is to adopt the idea of \cite{shamir2015fast}, that is, replacing the learning rate $\alpha$ with $r\alpha$ and the eigen-gap $\tau$ with $\tau/r$, with some of the constants in the theorem re-derived. 
The condition on the initial iterate, i.e., $k-\|\bV^{\top}\tilde{\bX}^{(0)}\|_{F}^{2}<\frac{1}{2}$, is theoretically non-trivial. However, empirically this condition can be well satisfied by running other stochastic algorithms (e.g., SRG-EIGS or DSRG-EIGS) or a few steps of deterministic iterative algorithms (e.g., RG-EIGS), because they are good at finding sub-optimal solutions.
In our experiments, we use SRG-EIGS for this purpose, which makes the theorem amount to a convergence analysis at a later stage of the hybrid algorithm (e.g., starting from $t_{0}>0$ instead of $t_{0}=0$). The convergence rate of our algorithm can be roughly identified by the iteration number $O(mT)=O(m\lceil\frac{\log(1/\varepsilon)}{\log(2/\varphi)}\rceil)$ which establishes an exponential global convergence rate. Compared to the sub-linear rate $O(1/\varepsilon)$ of DSRG-EIGS by \cite{zhiqiang2016},
it achieves a significant improvement since the complexity of a single iteration in the two algorithms only differs by constants.
In summary, initialized by a low-precision eigensolver, our SVRG-EIGS algorithm would obtain a high-precision solution in a limited number of epochs (data passes), which is theoretically guaranteed by Theorem \ref{thm}.

We provide an elegant proof of Theorem \ref{thm} in Appendix, though it is a bit involved. For ease of exposition and understanding, we decompose this course into three steps in a way similar to \cite{shamir2015fast}, including the analysis on one iteration, one epoch and one run of the algorithm. Among them, the first step (i.e., one iteration analysis) lies at the core of the main proof, where the techniques we use are dramatically different from those in \cite{NIPS2013_5132,shamir2015fast} due to our new context of Rimannian manifolds, or more precisely, Stiefel manifolds. This inherently different context requires new techniques, which in turn yield an improved exponential global convergence and accordingly bring more improvements over the convergence of sub-linear rate \cite{zhiqiang2016}.

\section{Experiments}\label{sec.exp}
In this section, we empirically verify the exponential convergence rate of our SVRRG-EIGS algorithm and demonstrate its capability of finding solutions of high precision when combined with other algorithms of low precision. Specifically, we use SRG-EIGS to generate a low-precision solution for initializing SVRRG-EIGS, and do the comparison with both RG-EIGS and SRG-EIGS. Among various implementations of RG-EIGS with different choices of metric and retraction in (\ref{eqn.rg-related}), we choose the one with canonical metric and Cayley transformation based retraction \cite{DBLP:journals/mp/WenY13} since its code is publically available\footnote{\url{optman.blogs.rice.edu/}}. This version of RG-EIGS uses the non-monotone line search with the well-known Barzilai-Borwein step size, which significantly reduces the iteration number, and performs well in practice. Both RG-EIGS and SRG-EIGS are fed with the same random initial value of $\bX$, where each entry is sampled from the standard normal distribution $\mathcal{N}(0,1)$ and then all entries as a whole are orthogonalized. SRG-EIGS uses the decaying learning rate $\alpha_{t}=\frac{\eta}{t}$ where $\eta$ will be tuned.

We verify the properties of our algorithm on a real symmetric matrix, {\bf Schenk}\footnote{\url{www.cise.ufl.edu/research/sparse/matrices/}}, of $10,728\times 10,728$ size, with $85,000$ nonzero entries. We partition $\bA$ into column blocks with block size equal to $100$ so that we can write $\bA=\frac{1}{L}\sum_{l=1}^{L}\bA^{(l)}$ with $L=\lceil\frac{10728}{100}\rceil$ and each $\bA^{(l)}$ having only one column block of $\bA$ and all others zero. We set $k=3$. For SVRRG-EIGS, we are able to use a fixed learning rate $\alpha$ and adopt the heuristic $\alpha=\frac{\zeta}{\|\bA\|_{1}\sqrt{n}}$ ($\|\cdot\|_{1}$ represents the matrix $1$-norm), similar to that in \cite{shamir2015fast}. We set $\zeta=4.442$ and epoch length $m=\frac{1}{2}L$, i.e., each epoch takes $1.5$ passes over $\bA$ (including one pass for computing the full gradient). Accordingly, the epoch length of SRG-EIGS is set to $m=\frac{3}{2}L$. In addition, we set $\bB^{(t)}=\bI$.

The performance of different algorithms is evaluated using three quality measures: feasibility $\|\bX^{(t)^{\top}}\bX^{(t)})-\bI\|_{F}$, relative error function $E(\bX)\triangleq1-\frac{\frac{1}{2}\tr(\bX^{(t)^{\top}}\bA\bX^{(t)})}{\max_{\bX\in\mathrm{St}(n,k)}\frac{1}{2}\tr(\bX^{(t)^{\top}}\bA\bX^{(t)}))}$, and normalized potential function $\mathbf{\Theta}(\tilde{\bX}^{(t)})/k=1-\frac{\|\bV^{\top}\tilde{\bX}^{(t)}\|_{F}^{2}}{k}$. The ground truths in these measures, including both $\bV$ and $\max_{\bX\in\mathrm{St}(n,k)}\frac{1}{2}\tr(\bX_{t}^{\top}\bA\bX_{t})$ that is set to $(1/2)\sum_{i=1}^{k}\lambda_{i}$, are obtained using Matlab's EIGS function for benchmarking. For each measure, lower values indicate higher quality.

Given a solution $\bX^{(0)}$ of low precision\footnote{This low precision could be problem dependent.} at $E(\bX^{(0)})\leq 10^{-6}$,
our SVRRG-EIGS targets a double precision, that is, $E(\bX)\leq 10^{-12}$ or $\mathbf{\Theta}(\tilde{\bX}^{(t)})/k\leq 10^{-12}$. Each algorithm terminates when the precision requirement is met or the maximum number of epoches (set as $20$) is reached.

\begin{figure*}[!ht]
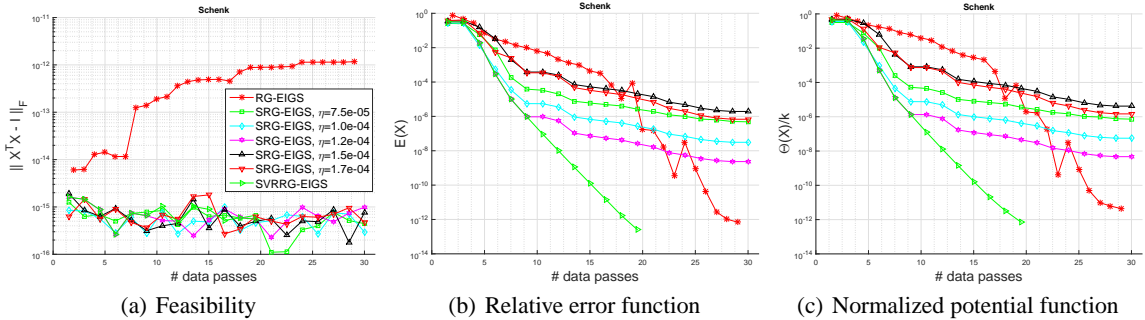

\centerline{
\subfigure[Feasibility]{\includegraphics[width =.355\textwidth]{fea.eps}\label{fig:feasi}}\hspace{1.2mm}
\subfigure[Relative error function]{\includegraphics[width =.35\textwidth]{fval.eps}\label{fig:fval}}\hspace{1.2mm}
\subfigure[Normalized potential function]{\includegraphics[width =.35\textwidth]{ktr.eps}\label{fig:ktr}}
}
\caption{Performance on Schenk. Note that the y-axis in each figure is in log scale.}
\label{fig}
\vspace{-1mm}
\end{figure*}

We report the convergence curves in terms of each measure, on which empirical convergence rates of the algorithms can be observed. Figure \ref{fig} reports the performance of different algorithms. In terms of feasibility, both SRG-EIGS and SVRRG-EIGS perform well, while RG-EIGS produces much poorer results.
This is because the Cayley transformation based retraction used therein relies heavily on the Sherman-Morrison-Woodbury formula, which suffers from the numerical instability. From Figures \ref{fig:fval} and \ref{fig:ktr}, we observe similar convergence trends for each algorithm under the two different measures. All three algorithms improve their solutions with more iteration. There are several exceptions in RG-EIGS. This is due to the non-monotone step size used in its implementation. 
We also observe that SRG-EIGS presents an exponential convergence rate at an early stage thanks to a relatively large learning rate. However, it subsequently steps into a long period of sub-exponential convergence, which leads to small progress towards the optimal solution. In contrast, our SVRRG-EIGS inherits the initial momentum from SRG-EIGS and keeps the exponential convergence rate throughout the entire process. This enables it to approach the optimal solution at a fast speed. RG-EIGS has a different trend. It converges sub-exponentially at the beginning and performs the worst. Though it converges fast at a later stage, it still needs more passes over data than SVRRG-EIGS in order to achieve a high precision.

\section{Related Work} \label{related_work}
Existing methods on eigensolvers include the power method \cite{Golub:1996:MC:248979}, the (block) Lanczos algorithms \cite{Cullum:2002:LAL:640589}, Randomized SVD \cite{Halko:2011:FSR:2078879.2078881},
Riemannian methods \cite{1997IMA....92..135T,absil2008optimization}, and so on. All these methods performs the batch learning, while our focus in this paper is on stochastic algorithms. From this perspective, few existing works include online learning of eigenvectors \cite{DBLP:conf/icml/GarberHM15} which aims at the leading eigenvector, i.e., $k=1$, and doubly stochastic Riemannian method (DSRG-EIGS) \cite{zhiqiang2016}
where the learning rate has to decay to zero. \cite{DBLP:conf/icml/GarberHM15} provides the regret analysis without empirical verification for their method, while
DSRG-EIGS
belongs to one of implementations of SRG-EIGS in this paper where the double stochasticity comes from sampling over both data and coordinates of Riemmanian gradients. On the other hand, since the work of \cite{NIPS2013_4937}, variance reduction (SVRG) has become an appealing technique to stochastic optimization. There are quite some variants developed from different perspectives, such as practical SVRG \cite{NIPS2015_5711}, second-order SVRG \cite{kolte2015accelerating}, distributed or asynchronous SVRG \cite{DBLP:journals/corr/LeeML15,NIPS2015_5821}, and non-convex SVRG \cite{shamir2015fast,DBLP:journals/corr/ReddiHSPS16}. Our SVRRG belongs to non-convex SVRG, but is addressed from the Riemannian optimization perspective. The core techniques we use are dramatically different from existing ones due to our new context.

\section{Conclusion}\label{sec.con}
In this paper, we proposed the generalization of SVRG to Riemannian manifolds, and established the general framework of SVRG in this setting, SVRRG, which requires the key ingredient, vector transport, to make itself well-defined. It is then deployed to the eigensolver problem and induces the SVRRG-EIGS algorithm. We analyzed its theoretical properties in detail. As suggested by our theoretical results, the proposed algorithm is guaranteed to find high-precision solutions at an exponential convergence rate.
The theoretical implications are verified on a real dataset. For future work, we will explore the possibility of addressing the limitations of SVRRG-EIGS, e.g., dependence on eigen-gap and non-trivial initialization. We may also conduct more empirical investigations on the performance of SVRRG-EIGS.

\appendix
\section*{APPENDIX: Supplementary Material}

\section{Useful Lemmas}
In this section, some definition, basics, and a group of useful lemmas are provided. All the matrices are assumed to be real.

\subsection{Definitions and Basics}
\subsubsection{Matrix facts: symmetry, positive semi-definiteness, trace, norm and orthogonality}
The matrix $B\succeq 0$ ($\succ 0$) represents that $B$ is symmetric and positive semidefinite (definite), and if $B\succ 0$ then $B^{-1}\succ 0$ as well. The trace of a square matrix, $\tr(B)$, is the sum of diagonal entries of $B$. A useful fact about trace is the circular property, e.g., $\tr(BCD)=\tr(CDB)=\tr(DBC)$ for matrices $B,C,D$.  $\|B\|_{F}^{2}=\tr(B^{\top}B)=\tr(BB^{\top})=\|B^{\top}\|_{F}^{2}$ and $\|B\|_{2}=\sqrt{\lambda_{\max}(B^{\top}B)}=\sigma_{\max}(B)$ represents the Frobenious-norm and spectral norm (i.e., matrix $2$-norm) of matrix $B$, respectively. Here $\lambda_{\max}(\cdot)$ represents the maximum eigenvalue of an $n\times n$ matrix, $\sigma_{\max}(\cdot)$ represents the maximum singular value of an $n\times m$ matrix. Note that $BC$ and $CB$ have the same set of nonzero eigenvalues for two matrices $B\in\mathbb{R}^{n\times m}$ and $C\in\mathbb{R}^{m\times n}$. Thus, $\|B\|_{2}=\|B^{T}\|_{2}$. In this document, we always assume that the eigenvalues of an $n\times n$ matrix $B$ takes the form $\lambda_{1}(B)\geq \lambda_{2}(B)\geq \cdots \geq \lambda_{n}(B)$. Thus $\lambda_{\max}(B)=\lambda_{1}(B)$ and $|\lambda_{i}(B)|\leq\rho(B)\triangleq\max_{i}|\lambda_{i}|\leq\|B\|_{2}$ for any $i\in\{1,2,\cdots,n\}$, where $\rho(B)$ is called the spectral radius of a square matrix $B$. And also $\tr(B)=\sum_{i}\lambda_{i}(B)$, which in turn implies $\|C\|_{2}\leq \|C\|_{F}$ for any matrix $C$. For any two matrices $B$ and $C$ that make $BC$ well-defined, $\|BC\|\leq\|B\|\|C\|$ holds for both Frobenious-norm and spectral norm, and $\|BC\|_{F}\leq\|B\|_{F}\|C\|_{2}$ holds. Furthermore, the orthogonal invariance also holds for both Frobenious-norm and spectral norm, i.e., $\|PCQ^{\top}\|=\|C\|$ for column-orthonormal matrices $P$ and $Q$ (i.e., $P^{\top}P=I$ and $Q^{\top}Q=I$).  For $X\in\mathrm{St}(n,k)$, let $X_{\perp}$ represent its orthogonal complement, i.e., $[X, X_{\perp}][X, X_{\perp}]^{\top}=I$ which implies $X^{\top}X_{\perp}=0$ and $X_{\perp}\in\mathrm{St}(n,n-k)$.
\subsubsection{Martingale}
The filtration, defined on a measurable probability space, is an increasing sequence of sub-sigma algebras $\{\mathcal{F}_{t}\}$ for $t\geq 0$, meaning that $\mathcal{F}_{s}\subset \mathcal{F}_{t}$ for all $s\leq t$. In our context, $\mathcal{F}_{t}$ encodes the set of all the random variables seen thus far (i.e., from $0$ to $t$).  In this document, conditioned on $X^{(t)}$ refers to conditioned on $\mathcal{F}_{t}$ for brevity. Let $H=\{H_{t}\}$ and $\mathcal{F}=\{\mathcal{F}_{t}\}$ be a stochastic process and a filtration, respectively, on the same probability space. Then $H$ is called a martingale (super-martingale) with respect to $\mathcal{F}$ if for each $t$, $H_{t}$ is $\mathcal{F}_{t}$-measurable, $\mathbb{E}[|H_{t}|]<\infty$, and $\mathbb{E}[H_{t+1}|\mathcal{F}_{t}]=H_{t}$ ($\mathbb{E}[H_{t+1}|\mathcal{F}_{t}]\leq H_{t}$). Given a random variable $X\geq 0$ and a constant $a>0$, the probability $P(X\geq a)\leq \mathbb{E}[X]/a$ (Markov inequality). Let $X_{0},X_{1},\cdots,X_{T}$ be a martingale or supermartingale such that $|X_{t}-X_{t-1}|\leq d_{t}$ (i.e., bounded difference) where $d_{t}$ is a deterministic function of $t$. Then for all $t\geq 0$ and any $a>0$, the probability $P(X_{t}-X_{0}\geq a)\leq \exp\{-a^{2}/(2\sum_{s=1}^{t}d_{t}^{2})\}$ (Azuma-Hoeffding inequality) \cite{Mitzenmacher:2005:PCR:1076315}.

\subsection{Lemmas}
The proofs of Lemma \ref{lemma1}-\ref{lemma5} can be found in \cite{shamir2015fast}.

\begin{lemma} \label{lemma1}
For any $B,C,D\succeq 0$, it holds that
$$\tr(BC)\geq\tr(B(C-D)) \quad\textrm{and}\quad \tr(BC)\geq\tr((B-D)C).$$
\end{lemma}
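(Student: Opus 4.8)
The plan is to reduce both inequalities to the single elementary fact that $\tr(BD)\ge 0$ whenever $B,D\succeq 0$. For the first inequality, linearity of the trace gives $\tr(B(C-D))=\tr(BC)-\tr(BD)$, so $\tr(BC)\ge\tr(B(C-D))$ is equivalent to $\tr(BD)\ge 0$. For the second inequality, the same manipulation yields $\tr((B-D)C)=\tr(BC)-\tr(DC)$, and by the cyclic property of trace $\tr(DC)=\tr(CD)$, so we again only need $\tr(CD)\ge 0$, which is the same statement with the roles of the two PSD matrices interchanged. Thus a single claim handles both cases.

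To establish $\tr(BD)\ge 0$ for $B,D\succeq 0$, I would use the symmetric PSD square root: since $B\succeq 0$, there is a symmetric $B^{1/2}\succeq 0$ with $B=B^{1/2}B^{1/2}$. By the cyclic property, $\tr(BD)=\tr(B^{1/2}B^{1/2}D)=\tr(B^{1/2}DB^{1/2})$. The matrix $B^{1/2}DB^{1/2}$ is symmetric and positive semidefinite, because for any $x$ we have $x^{\top}B^{1/2}DB^{1/2}x=(B^{1/2}x)^{\top}D(B^{1/2}x)\ge 0$ using $D\succeq 0$. Finally, the trace of a symmetric PSD matrix equals the sum of its nonnegative eigenvalues, so $\tr(B^{1/2}DB^{1/2})\ge 0$, hence $\tr(BD)\ge 0$, completing the argument.

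There is essentially no obstacle here; the proof uses only the standard facts recalled in the "Matrix facts" subsection — linearity and the cyclic property of the trace, existence of a PSD square root, and $\tr(\cdot)=\sum_i\lambda_i(\cdot)$ with nonnegative eigenvalues for PSD matrices. If one prefers to avoid the square root, an equally immediate alternative is to diagonalize $B=\sum_i\lambda_i(B)\,u_iu_i^{\top}$ with $\lambda_i(B)\ge 0$, so that $\tr(BD)=\sum_i\lambda_i(B)\,u_i^{\top}Du_i$, a sum of products of nonnegative numbers.
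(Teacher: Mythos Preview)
Your proof is correct. The paper itself does not supply a proof of this lemma but simply cites \cite{shamir2015fast}; your argument via $\tr(BD)=\tr(B^{1/2}DB^{1/2})\ge 0$ is the standard one and uses only the basic facts already listed in the paper's preliminaries.
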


\begin{lemma} \label{lemma2}
If $B\succeq 0$ and $C\succ 0$, then $\tr(BC^{-1})\geq\tr(B(2I-C))$.
\end{lemma}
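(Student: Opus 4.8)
The plan is to rewrite the desired inequality as the nonnegativity of the trace of a product of two positive semidefinite matrices. Subtracting the right-hand side from the left and using linearity of the trace, it suffices to show
\[
\tr(BC^{-1}) - \tr\bigl(B(2I-C)\bigr) = \tr\bigl(B(C^{-1} - 2I + C)\bigr) \geq 0 .
\]
So the whole task reduces to (i) exhibiting $C^{-1} - 2I + C \succeq 0$ and (ii) applying the fact that the trace of a product of two positive semidefinite matrices is nonnegative.

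For step (i), I would identify $C^{-1} - 2I + C$ as a perfect square. Since $C \succ 0$, its symmetric square root $C^{1/2}$ and inverse square root $C^{-1/2}$ are well defined, and
\[
(C^{1/2} - C^{-1/2})^{2} = C - I - I + C^{-1} = C - 2I + C^{-1}.
\]
Being the square of a symmetric matrix, this is positive semidefinite; equivalently, diagonalizing $C = Q\Lambda Q^{\top}$ with $\Lambda = \mathrm{diag}(\lambda_{1},\dots,\lambda_{n})$ and $\lambda_{i} > 0$, the matrix equals $Q\,\mathrm{diag}\bigl((\lambda_{i}-1)^{2}/\lambda_{i}\bigr)Q^{\top} \succeq 0$. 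For step (ii), I would use that $\tr(BD) = \tr(B^{1/2} D B^{1/2}) \geq 0$ whenever $B, D \succeq 0$, since $B^{1/2} D B^{1/2} \succeq 0$ and its trace is a sum of nonnegative eigenvalues. Applying this with $D = C^{-1} - 2I + C$ yields exactly the reduced inequality.

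There is essentially no hard part here: the argument is a one-line trace reduction plus two textbook facts. The only points to be careful about are that the hypothesis $C \succ 0$ is genuinely needed — both to form $C^{-1}$ and to guarantee that $C^{-1/2}$ exists — and that $2I - C$ itself need not be positive semidefinite, so one cannot apply Lemma \ref{lemma1} verbatim; the positive semidefinite object is the combination $C^{-1} + C - 2I$, not $2I - C$.
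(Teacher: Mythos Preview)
Your proof is correct. The paper does not supply its own proof of this lemma; it simply cites \cite{shamir2015fast} for Lemmas~\ref{lemma1}--\ref{lemma5}, so there is no in-paper argument to compare against, and your self-contained derivation via $C^{-1}-2I+C=(C^{1/2}-C^{-1/2})^{2}\succeq 0$ together with $\tr(BD)=\tr(B^{1/2}DB^{1/2})\geq 0$ is a clean and complete substitute.
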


\begin{lemma} \label{lemma3}
Let $B_{1},B_{2},Z_{1},Z_{2}$ be $k\times k$ square matrix, where $B_{1},B_{2}$ are fixed and $Z_{1},Z_{2}$ are stochastic zero-mean. Furthermore, suppose that for some fixed $\beta,\gamma,\delta>0$, it holds with probability $1$ that
\begin{itemize}
  \item For all $\nu\in [0,1]$, $B_{2}+\nu Z_{2}\succeq\delta I$
  \item $\max\{\|Z_{1}\|_{F},\|Z_{2}\|_{F}\}\leq\beta$
  \item $\|B_{1}+\alpha Z_{1}\|_{2}\leq\gamma$
\end{itemize}
Then
$$
\mathbb{E}[\tr((B_{1}+Z_{1})(B_{2}+Z_{2})^{-1})]\geq\tr(B_{1}B_{2}^{-1})-\frac{\beta^{2}(1+\gamma/\delta)}{\delta^{2}}.
$$
\end{lemma}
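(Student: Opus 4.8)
\textbf{Proof plan for Lemma \ref{lemma3}.}
The plan is to expand the product $(B_1+Z_1)(B_2+Z_2)^{-1}$, take the trace, take expectations, and exploit that $Z_1,Z_2$ are zero-mean to kill the first-order terms, leaving only a second-order correction that is controlled by $\beta,\gamma,\delta$. Concretely, I would first write $(B_2+Z_2)^{-1} = B_2^{-1} - B_2^{-1}Z_2(B_2+Z_2)^{-1}$, the standard resolvent (first-order Neumann) identity, which is valid because the second hypothesis ($B_2 + \nu Z_2 \succeq \delta I$ at $\nu=0$ and $\nu=1$) guarantees both $B_2$ and $B_2+Z_2$ are invertible. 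Substituting this into $\tr((B_1+Z_1)(B_2+Z_2)^{-1})$ and distributing gives
\begin{eqnarray*}
\tr((B_1+Z_1)(B_2+Z_2)^{-1}) &=& \tr(B_1 B_2^{-1}) + \tr(Z_1 B_2^{-1}) \\
&& -\;\tr((B_1+Z_1)B_2^{-1}Z_2(B_2+Z_2)^{-1}).
\end{eqnarray*}
Taking expectations, $\mathbb{E}[\tr(Z_1 B_2^{-1})] = \tr(\mathbb{E}[Z_1]B_2^{-1}) = 0$ since $B_2$ is fixed and $Z_1$ is zero-mean; so the entire burden is to lower-bound $-\mathbb{E}[\tr((B_1+Z_1)B_2^{-1}Z_2(B_2+Z_2)^{-1})]$, equivalently to upper-bound its absolute value by $\beta^2(1+\gamma/\delta)/\delta^2$.

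For that bound I would use $|\tr(M)| \le \|M\|_F \cdot (\text{something})$; more cleanly, repeatedly apply the submultiplicativity facts stated in the ``Matrix facts'' subsection, namely $\|BC\|_F \le \|B\|_F\|C\|_2$ and $\|BC\|_2 \le \|B\|_2\|C\|_2$, together with $|\tr(BC)| \le \|B\|_F\|C\|_F$ (Cauchy--Schwarz for the trace inner product). Write the offending term as $\tr\big((B_1+Z_1)\,[B_2^{-1}Z_2(B_2+Z_2)^{-1}]\big)$ and bound
\[
|\tr((B_1+Z_1)B_2^{-1}Z_2(B_2+Z_2)^{-1})| \le \|B_1+Z_1\|_2 \cdot \|B_2^{-1}\|_2 \cdot \|Z_2\|_F \cdot \|(B_2+Z_2)^{-1}\|_F.
\]
Now I plug in the three hypotheses: $\|B_1+Z_1\|_2 \le \gamma$ (but note the hypothesis is stated with $\alpha$, i.e.\ $\|B_1+\alpha Z_1\|_2\le\gamma$ — I will need to reconcile this, presumably the intended statement sets $\alpha=1$ here or one rescales); $\|B_2^{-1}\|_2 \le 1/\delta$ because $B_2 \succeq \delta I \succ 0$; $\|Z_2\|_F \le \beta$; and $\|(B_2+Z_2)^{-1}\|_2 \le 1/\delta$ because $B_2+Z_2\succeq\delta I$, hence $\|(B_2+Z_2)^{-1}\|_F \le \sqrt{k}/\delta$ — which would produce an extra $\sqrt k$. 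To avoid that, I would instead keep one factor in spectral norm and split more carefully: bound $|\tr((B_1+Z_1)B_2^{-1}Z_2(B_2+Z_2)^{-1})| \le \|(B_1+Z_1)B_2^{-1}\|_2 \,\|Z_2(B_2+Z_2)^{-1}\|_{\text{tr-pairing}}$, or realize the factor $(1+\gamma/\delta)$ in the target suggests splitting $B_1+Z_1$ as $B_1$ plus $Z_1$ and handling $\mathbb{E}\|Z_1\|_F\|Z_2\|_F \le \beta^2$ via independence-free Cauchy--Schwarz $\mathbb{E}[\|Z_1\|_F\|Z_2\|_F]\le \beta^2$. That is the key bookkeeping subtlety.

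The main obstacle is assembling the constant to be \emph{exactly} $\beta^2(1+\gamma/\delta)/\delta^2$ rather than something with a stray $\sqrt{k}$ or a factor $2$. Getting this right hinges on (i) using the resolvent identity so that the leftover term carries $Z_2$ \emph{and} a factor $(B_2+Z_2)^{-1}$, each contributing one power of $1/\delta$, plus a $\|B_2^{-1}\|_2=1/\delta$, giving $1/\delta^2$ after one $\|Z_2\|_F=\beta$ is absorbed; (ii) splitting $B_1+Z_1 = B_1 + Z_1$, bounding the $B_1$-part by $\|B_1\|_2 \le \gamma - 0$... actually by $\gamma$ via the third hypothesis with the remaining $\|Z_2\|_F\le\beta$, and the $Z_1$-part by $\mathbb{E}[\|Z_1\|_F\|Z_2\|_F]\le\tfrac12\mathbb{E}[\|Z_1\|_F^2+\|Z_2\|_F^2]\le\beta^2$ times a spectral-norm $1/\delta^2$, which yields the ``$+1$'' (i.e.\ the $\beta^2/\delta^2$ term) while the $B_1$-part yields the ``$\gamma/\delta$'' term (i.e.\ $\gamma\beta^2/\delta^3$). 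Since this lemma's proof is cited to \cite{shamir2015fast}, I expect the intended argument is precisely this resolvent-expansion-plus-triangle-inequality, and the only real work is the careful norm accounting to land on the stated constant; I would present it in that order: resolvent identity, expectation kills linear terms, triangle inequality with the three hypotheses, collect constants.
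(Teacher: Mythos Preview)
The paper does not actually prove this lemma; it states that the proof can be found in \cite{shamir2015fast}. So the comparison is against the argument in that reference (essentially the resolvent-expansion idea you sketch), and the question is whether your sketch reconstructs it correctly.

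Your plan has a genuine gap. After the first-order resolvent identity and after splitting $B_{1}+Z_{1}$ into $B_{1}$ and $Z_{1}$, the $Z_{1}$-piece $\tr(Z_{1}B_{2}^{-1}Z_{2}(B_{2}+Z_{2})^{-1})$ is indeed bounded by $\beta^{2}/\delta^{2}$ as you say. But the $B_{1}$-piece is
\[
\mathbb{E}\bigl[\tr\bigl(B_{1}B_{2}^{-1}Z_{2}(B_{2}+Z_{2})^{-1}\bigr)\bigr],
\]
and this expectation is \emph{not} zero, because $(B_{2}+Z_{2})^{-1}$ depends on $Z_{2}$. A direct norm bound here yields $\|B_{1}\|_{2}\,\|B_{2}^{-1}\|_{2}\,\|Z_{2}\|_{F}\,\|(B_{2}+Z_{2})^{-1}\|_{2}\le \gamma\beta/\delta^{2}$, which has only one power of $\beta$, not two; this cannot be absorbed into the target $\beta^{2}(1+\gamma/\delta)/\delta^{2}$ when $\beta$ is small. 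Your sentence ``the $B_{1}$-part yields the $\gamma/\delta$ term (i.e.\ $\gamma\beta^{2}/\delta^{3}$)'' is exactly where the argument breaks: you have not explained how a second factor of $\beta$ appears.

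The missing step is a \emph{second} application of the resolvent identity on that term: expand $(B_{2}+Z_{2})^{-1}=B_{2}^{-1}-B_{2}^{-1}Z_{2}(B_{2}+Z_{2})^{-1}$ once more inside $\tr(B_{1}B_{2}^{-1}Z_{2}(B_{2}+Z_{2})^{-1})$. The new first-order part $\tr(B_{1}B_{2}^{-1}Z_{2}B_{2}^{-1})$ is linear in $Z_{2}$ with fixed coefficients and hence has mean zero; the remainder $\tr(B_{1}B_{2}^{-1}Z_{2}B_{2}^{-1}Z_{2}(B_{2}+Z_{2})^{-1})$ now carries two factors of $Z_{2}$ and is bounded by $\gamma\cdot(1/\delta)\cdot\beta\cdot(1/\delta)\cdot\beta\cdot(1/\delta)=\gamma\beta^{2}/\delta^{3}$. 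Combining with the $\beta^{2}/\delta^{2}$ from the $Z_{1}$-piece gives exactly $\beta^{2}(1+\gamma/\delta)/\delta^{2}$. (Equivalently, one can use a second-order expansion of $(B_{2}+Z_{2})^{-1}$ from the outset; the hypothesis $B_{2}+\nu Z_{2}\succeq\delta I$ for all $\nu\in[0,1]$ is there precisely so that such expansions, or an integral mean-value form, are well-controlled.) Once you add this second expansion step, your outline becomes the proof in \cite{shamir2015fast}.
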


\begin{lemma} \label{lemma4}
Let $B$ be a $k\times k$ matrix with minimal singular value $\sigma$ and $\|B\|_{2}\leq 1$. Then
$$
1-\frac{\|B^{\top}B\|_{F}^{2}}{\|B\|_{F}^{2}}\geq \frac{\sigma^{2}}{k}(k-\|B\|_{F}^{2}).
$$
\end{lemma}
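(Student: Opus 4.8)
\textbf{Proof plan for Lemma \ref{lemma4}.}
The statement is a purely linear-algebraic inequality relating $1-\|B^{\top}B\|_F^2/\|B\|_F^2$ to the deficiency $k-\|B\|_F^2$, valid for any $k\times k$ matrix $B$ with $\|B\|_2\le 1$ and smallest singular value $\sigma$. The natural move is to diagonalize via the singular value decomposition: write $B=U\Sigma W^{\top}$ with $U,W$ orthogonal and $\Sigma=\mathrm{diag}(\sigma_1,\dots,\sigma_k)$, where $\sigma=\min_i\sigma_i$ and $\sigma_{\max}=\|B\|_2\le 1$. Both norms appearing are orthogonally invariant, so $\|B\|_F^2=\sum_i\sigma_i^2$ and $B^{\top}B=W\Sigma^2 W^{\top}$, hence $\|B^{\top}B\|_F^2=\sum_i\sigma_i^4$. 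The inequality therefore reduces to a one-dimensional-per-coordinate statement about the numbers $s_i=\sigma_i^2\in[0,1]$: namely
\[
1-\frac{\sum_i s_i^2}{\sum_i s_i}\ \ge\ \frac{\min_i s_i}{k}\Bigl(k-\sum_i s_i\Bigr).
\]

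To prove this, I would clear the (positive) denominator $\sum_i s_i$ and rearrange. The left side times $\sum_i s_i$ equals $\sum_i s_i-\sum_i s_i^2=\sum_i s_i(1-s_i)$. Writing $\sigma^2=\min_i s_i$ and using $k-\sum_i s_i=\sum_i(1-s_i)$, the claim becomes
\[
\sum_i s_i(1-s_i)\ \ge\ \frac{\sigma^2}{k}\Bigl(\sum_j s_j\Bigr)\Bigl(\sum_i(1-s_i)\Bigr).
\]
Now bound the right-hand side from above: since each $s_j\le 1$ (because $\|B\|_2\le 1$), we have $\sum_j s_j\le k$, so the prefactor $\frac{1}{k}\sum_j s_j\le 1$, giving
\[
\frac{\sigma^2}{k}\Bigl(\sum_j s_j\Bigr)\Bigl(\sum_i(1-s_i)\Bigr)\ \le\ \sigma^2\sum_i(1-s_i)\ =\ \sum_i\sigma^2(1-s_i).
\]
Finally, comparing termwise, it suffices that $s_i(1-s_i)\ge\sigma^2(1-s_i)$ for each $i$, i.e. $(s_i-\sigma^2)(1-s_i)\ge 0$; this holds because $s_i\ge\min_j s_j=\sigma^2$ and $s_i\le 1$, so both factors are nonnegative. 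Summing over $i$ closes the argument.

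The only subtlety — and the one place to be careful rather than the "hard part" — is the step $\sum_j s_j\le k$, which is exactly where the hypothesis $\|B\|_2\le 1$ enters (it forces every $\sigma_j^2\le 1$); without it the prefactor could exceed $1$ and the chain would break. Everything else is elementary: SVD plus orthogonal invariance of the two norms, clearing a positive denominator, and a termwise comparison of two sums of nonnegative numbers. No convexity, no martingale machinery, no Riemannian structure is needed here. I would present the proof in roughly the order above: (1) reduce to singular values by SVD and orthogonal invariance; (2) rewrite the target inequality in the $s_i=\sigma_i^2$ variables and clear the denominator; (3) upper-bound the right side using $\sum_i s_i\le k$; (4) conclude by the termwise inequality $(s_i-\sigma^2)(1-s_i)\ge 0$.
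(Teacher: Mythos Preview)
Your argument is correct. The SVD reduction to the scalar inequality in the variables $s_i=\sigma_i^2\in[0,1]$, followed by clearing the denominator and the termwise bound $(s_i-\sigma^2)(1-s_i)\ge 0$, is clean and complete; the only cosmetic gap is the degenerate case $\|B\|_F=0$, which is immediate since then $\sigma=0$ and the right-hand side vanishes.

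As for comparison: the paper does not actually prove this lemma. It states explicitly that the proofs of Lemmas~\ref{lemma1}--\ref{lemma5} ``can be found in \cite{shamir2015fast}'' and gives no argument of its own. Your SVD-and-termwise proof is the natural one and is essentially what one finds in that reference, so there is nothing substantively different to contrast.
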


\begin{lemma} \label{lemma5}
For any $n\times k$ matrices $C,D$ with orthonormal columns, let $B^{\star}=\mathrm{arg}\min_{B^{\top}B=I}\|C-DB\|_{F}^{2}$. Then
$$
B^{\star}=Q_{2}Q_{1}^{\top},\quad \|C-DB^{\star}\|_{F}^{2}\leq \|C-DB\|_{F}^{2}\quad\textrm{and}\quad
\|C-DB^{\star}\|_{F}^{2} \leq 2(k-\|C^{\top}D\|_{F}^{2}),
$$
where $C^{\top}D=Q_{1}\Omega Q_{2}^{\top}$ is the SVD of $C^{\top}D$.
\end{lemma}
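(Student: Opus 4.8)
The plan is to recognize Lemma \ref{lemma5} as the classical orthogonal Procrustes problem and to solve it by reducing the minimization over the orthogonal group to a trace maximization. First I would expand the objective, using that $C$ and $D$ have orthonormal columns ($C^{\top}C=D^{\top}D=\bI$) and that $B^{\top}B=\bI$:
\[
\|C-DB\|_{F}^{2}=\tr(C^{\top}C)-2\tr\!\big((C^{\top}D)B\big)+\tr(B^{\top}D^{\top}DB)=2k-2\tr\!\big((C^{\top}D)B\big),
\]
where I used $\tr(B^{\top}D^{\top}C)=\tr\!\big((C^{\top}DB)^{\top}\big)=\tr\!\big((C^{\top}D)B\big)$. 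Hence minimizing $\|C-DB\|_{F}^{2}$ over orthogonal $B$ is equivalent to maximizing $\tr\!\big((C^{\top}D)B\big)$.

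Next I would substitute the SVD $C^{\top}D=Q_{1}\Omega Q_{2}^{\top}$ with $\Omega=\mathrm{diag}(\omega_{1},\dots,\omega_{k})$, $\omega_{i}\geq 0$, and apply the circular property of the trace to get $\tr\!\big((C^{\top}D)B\big)=\tr\!\big(\Omega\, Q_{2}^{\top}BQ_{1}\big)=\tr(\Omega W)$ with $W\triangleq Q_{2}^{\top}BQ_{1}$, which is again a $k\times k$ orthogonal matrix (product of orthogonal matrices, since $W^{\top}W=Q_{1}^{\top}B^{\top}Q_{2}Q_{2}^{\top}BQ_{1}=\bI$). Because every diagonal entry of an orthogonal matrix has modulus at most $1$, $\tr(\Omega W)=\sum_{i}\omega_{i}W_{ii}\leq\sum_{i}\omega_{i}=\tr(\Omega)$, and equality is attained by $W=\bI$, i.e. $B=Q_{2}Q_{1}^{\top}$. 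This yields $B^{\star}=Q_{2}Q_{1}^{\top}$, the inequality $\|C-DB^{\star}\|_{F}^{2}\leq\|C-DB\|_{F}^{2}$ for every orthogonal $B$, and the exact value $\|C-DB^{\star}\|_{F}^{2}=2k-2\tr(\Omega)=2\big(k-\tr(\Omega)\big)$.

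Finally, for the stated bound $2\big(k-\|C^{\top}D\|_{F}^{2}\big)$ I would compare $\tr(\Omega)$ with $\|C^{\top}D\|_{F}^{2}=\tr(\Omega^{2})=\sum_{i}\omega_{i}^{2}$. Since $C$ and $D$ have orthonormal columns, $\|C\|_{2}=\|D\|_{2}=1$, so $\|C^{\top}D\|_{2}\leq\|C\|_{2}\|D\|_{2}=1$ and therefore $\omega_{i}\leq 1$ for all $i$; hence $\omega_{i}^{2}\leq\omega_{i}$ and $\sum_{i}\omega_{i}^{2}\leq\sum_{i}\omega_{i}=\tr(\Omega)$. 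Consequently $k-\tr(\Omega)\leq k-\|C^{\top}D\|_{F}^{2}$, and combining with the previous display gives $\|C-DB^{\star}\|_{F}^{2}=2\big(k-\tr(\Omega)\big)\leq 2\big(k-\|C^{\top}D\|_{F}^{2}\big)$.

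I do not expect a genuine obstacle here; the lemma is standard. The only step needing a word of care is the elementary trace inequality $\tr(\Omega W)\leq\tr(\Omega)$ for orthogonal $W$ and diagonal $\Omega\succeq 0$, which follows from $|W_{ii}|\leq 1$ (the columns of an orthogonal matrix are unit vectors). Everything else is bookkeeping with the trace circular property, the fact that the SVD diagonalizes $(C^{\top}D)^{\top}(C^{\top}D)$, and sub-multiplicativity of the spectral norm.
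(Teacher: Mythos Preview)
Your argument is correct and is the standard treatment of the orthogonal Procrustes problem: expand the Frobenius norm, reduce to maximizing $\tr((C^{\top}D)B)$, diagonalize via the SVD, and use $|W_{ii}|\leq 1$ for orthogonal $W$; the final bound then follows from $\omega_i\leq 1$. The paper does not give its own proof of this lemma but simply cites \cite{shamir2015fast}, so there is no in-paper argument to compare against; your self-contained derivation is entirely adequate.
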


\begin{lemma} \label{lemma6}
Let $Y^{(t)}$ and $X^{(t+1)}$ be as defined in Section 3.2 of the main paper. Assume $\max_{l}\|A^{(l)}\|_{2}\leq 1$ and $\alpha<1/5$. Then for any $n\times k$ matrix $\bV$ with orthonormal columns, it holds that
$$
\Big|\|V^{\top}X^{(t+1)}\|_{F}^{2}-\|V^{\top}X^{(t)}\|_{F}^{2}\Big|\leq \frac{20k\alpha}{1-5\alpha}.
$$
\end{lemma}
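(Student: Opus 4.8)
The plan is to track how one SVRRG-EIGS iteration changes the quantity $\|V^{\top}X^{(t)}\|_F^2$. Recall from Section 3.2 that the iteration has two substeps: $\bY^{(t+1)}=(\bX^{(t)}+\alpha\Grad f(\bX^{(t)}))+\alpha\bW^{(t+1)}$ and then $\bX^{(t+1)}=\bY^{(t+1)}(\bI+\alpha^{2}\tilde G^{\top}\tilde G)^{-1/2}$, where $\tilde G=\tilde G(y_{t+1},\bX^{(t)})=\Grad f(\bX^{(t)})+\bW^{(t+1)}$. I would first bound $\|\tilde G\|_2$ (equivalently $\|\tilde G\|_F$, up to the factor $\sqrt k$) uniformly: each of the three pieces making up $\Grad f(\bX^{(t)})$ and $\bW^{(t+1)}$ is a product of projectors $\bI-\bX\bX^{\top}$, orthonormal factors, and a difference $\bA_{t+1}-\bA$ or $\bA_{t+1}$, all of spectral norm $O(1)$ under the hypothesis $\max_l\|\bA^{(l)}\|_2\le 1$; so $\|\alpha\tilde G\|_2\le c\alpha$ for an absolute constant $c$, and with $\alpha<1/5$ this is strictly less than $1$, which keeps the retraction $(\bI+\alpha^2\tilde G^{\top}\tilde G)^{-1/2}$ well-behaved.

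Next I would split the change into the two substeps. For the first substep, write $V^{\top}\bY^{(t+1)}=V^{\top}\bX^{(t)}+\alpha V^{\top}(\Grad f(\bX^{(t)})+\bW^{(t+1)})=V^{\top}\bX^{(t)}+\alpha V^{\top}\tilde G$, so
$$\bigl|\,\|V^{\top}\bY^{(t+1)}\|_F^2-\|V^{\top}\bX^{(t)}\|_F^2\,\bigr|\le 2\alpha\|V^{\top}\bX^{(t)}\|_F\|V^{\top}\tilde G\|_F+\alpha^2\|V^{\top}\tilde G\|_F^2,$$
and then use $\|V^{\top}\bX^{(t)}\|_F\le\sqrt k$, $\|V^{\top}\tilde G\|_F\le\|\tilde G\|_F\le c\sqrt k$ to get a bound of the form $(2c+c^2\alpha)k\alpha$. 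For the second substep, set $\bM=(\bI+\alpha^2\tilde G^{\top}\tilde G)^{-1/2}$; since $\bX^{(t+1)}=\bY^{(t+1)}\bM$ and $\|\bM-\bI\|_2\le \tfrac{\alpha^2\|\tilde G\|_2^2}{1+\alpha^2\|\tilde G\|_2^2}\le\alpha^2\|\tilde G\|_2^2=O(\alpha^2)$, we have $\|V^{\top}\bX^{(t+1)}\|_F^2-\|V^{\top}\bY^{(t+1)}\|_F^2=\tr(\bM\bY^{(t+1)\top}VV^{\top}\bY^{(t+1)}\bM)-\tr(\bY^{(t+1)\top}VV^{\top}\bY^{(t+1)})$, which is controlled by $\|\bM^2-\bI\|_2\|V^{\top}\bY^{(t+1)}\|_F^2=O(\alpha^2)\cdot O(k)$. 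Adding the two pieces and absorbing everything into a single constant, I would massage the bound into the stated clean form $\tfrac{20k\alpha}{1-5\alpha}$ — the denominator $1-5\alpha$ is exactly what appears when one keeps track of $\|\bM\|_2\le(1-\alpha^2\|\tilde G\|_2^2)^{-1/2}$ and the $O(\alpha)$ terms honestly rather than discarding higher-order ones, so matching that specific constant is really a bookkeeping matter once the spectral-norm bounds on the gradient pieces are pinned down.

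The main obstacle is the first step: getting a \emph{sharp} uniform bound on $\|\tilde G(y_{t+1},\bX^{(t)})\|_2$ with a small explicit constant. The SVRRG term $\bW^{(t+1)}$ has four summands (involving $\bX^{(t)}-\tilde\bX$, a $\tilde\bX\tilde\bX^{\top}$ term, and a $\mathrm{skew}(\cdot)$ term), and naively bounding $\|\bX^{(t)}-\tilde\bX\|_2\le 2\sqrt k$ already costs a factor; one must use $\|\bI-\bX\bX^{\top}\|_2=1$, $\|\bA_{t+1}-\bA\|_2\le 2$, orthonormality of $\tilde\bX$, and $\|\mathrm{skew}(H)\|_2\le\|H\|_2$ carefully to avoid blowing the constant past what $20/(1-5\alpha)$ allows after multiplication by the $2\sqrt k\cdot\sqrt k$ from the cross term. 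Everything after that — the two Taylor-type expansions in $\alpha$ and the final consolidation — is routine, so I expect the write-up to hinge entirely on that norm estimate and the choice of how loosely to bound $\|\bW^{(t+1)}\|_2$ while still landing on the constant $20$.
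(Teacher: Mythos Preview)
Your plan is sound and would go through, but it differs from the paper's proof in two respects worth noting. First, the paper does not carry out your two-substep expansion at all: it simply records that $Y^{(t+1)}=X^{(t)}+\alpha N$ with $N=\tilde G(y_{t+1},X^{(t)})$ and $X^{(t)\top}N=0$ (hence $Y^{(t+1)\top}Y^{(t+1)}=I+\alpha^{2}N^{\top}N$), and then invokes Lemma~9 of \cite{shamir2015fast} as a black box, which already yields the conclusion once $\|N\|_{2}\le 5$. Your inline argument amounts to re-proving that lemma, which is fine and more self-contained, but longer. Second---and this is what dissolves the ``main obstacle'' you flag---the paper does \emph{not} bound $\|\tilde G\|_{2}$ through the $\Grad f(X^{(t)})+W^{(t+1)}$ decomposition. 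It reverts to the earlier three-term form (stochastic gradient minus transported control variate),
\[
\tilde G \;=\; X_{\perp}^{(t)} X_{\perp}^{(t)\top} A_{t+1}X^{(t)} \;-\; X_{\perp}^{(t)} X_{\perp}^{(t)\top}\tilde X_{\perp}\tilde X_{\perp}^{\top}(A_{t+1}-A)\tilde X \;-\; X^{(t)}\,\mathrm{skew}\bigl(X^{(t)\top}\tilde X_{\perp}\tilde X_{\perp}^{\top}(A_{t+1}-A)\tilde X\bigr),
\]
in which no $X^{(t)}-\tilde X$ factor ever appears; the three pieces bound termwise by $1,2,2$ in spectral norm and sum to $5$ with no effort. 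With your four-term decomposition the constant genuinely inflates (and incidentally $\|X^{(t)}-\tilde X\|_{2}\le 2$, not $2\sqrt{k}$---the $\sqrt{k}$ is a Frobenius artifact), so your instinct that the bottleneck is the sharp spectral bound on $\tilde G$ is exactly right; the paper just sidesteps it by choosing the cleaner representation.
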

\begin{proof}
Note that $I + \alpha_{t}^{2}\tilde{G}^{\top}(y_{t+1},X^{(t)})\tilde{G}(y_{t+1},X^{(t)})=Y^{(t+1)^{\top}}Y^{(t+1)}$ since $\tilde{G}(y_{t+1},X^{(t)})\in T_{\bX^{(t)}}\mathrm{St}(n,k)$ and thus $X^{(t)^{\top}}\tilde{G}(y_{t+1},X^{(t)})=0$ \cite{absil2008optimization}. Based on the proof of Lemma 9 in \cite{shamir2015fast}, it suffices for us to show that $Y^{(t+1)}=X^{(t)}+\alpha N$ and $\|N\|_{2}\leq 5$. In fact, from Section 3.2 of the main paper, we have
\begin{eqnarray*}
N&=&\tilde{G}(y_{t+1},X^{(t)})\\
&=&(I-X^{(t)}X^{(t)^{\top}})A_{t+1}X^{(t)} - \\
&&(I-X^{(t)}X^{(t)^{\top}})(I-\tilde{X}\tilde{X}^{\top})(A_{t+1}-A)\tilde{X} - X^{(t)}\mathrm{skew}(X^{(t)^{\top}}(I-\tilde{X}\tilde{X}^{\top})(A_{t+1}-A)\tilde{X})\\
&=& X_{\perp}^{(t)}X_{\perp}^{(t)^{\top}}A_{t+1}X^{(t)} - X_{\perp}^{(t)}X_{\perp}^{(t)^{\top}}\tilde{X}_{\perp}\tilde{X}_{\perp}^{\top}(A_{t+1}-A)\tilde{X} - X^{(t)}\mathrm{skew}(X^{(t)^{\top}}\tilde{X}_{\perp}\tilde{X}_{\perp}^{\top}(A_{t+1}-A)\tilde{X}).
\end{eqnarray*}
Since $\max_{l}\|A^{(l)}\|_{2}\leq 1$, we have $\|A_{t+1}\|_{2}\leq 1$, $\|A\|_{2}\leq 1$ and thus $\|A_{t+1}-A\|_{2}\leq \|A_{t+1}\|_{2} + \|A\|_{2}\leq 2$. Note that $\|X^{(t)}\|_{2}=\|X_{\perp}^{(t)}\|_{2}=\|\tilde{X}\|_{2}=\|\tilde{X}_{\perp}\|_{2}=1$. Then we have
\begin{eqnarray*}
\|X_{\perp}^{(t)}X_{\perp}^{(t)^{\top}}A_{t+1}X^{(t)}\|_{2}\leq \|X_{\perp}^{(t)}\|_{2}\|X_{\perp}^{(t)^{\top}}\|_{2}\|A_{t+1}\|_{2}\|X^{(t)}\|_{2}\leq 1.
\end{eqnarray*}
Similarly,
\begin{eqnarray*}
\|X_{\perp}^{(t)}X_{\perp}^{(t)^{\top}}\tilde{X}_{\perp}\tilde{X}_{\perp}^{\top}(A_{t+1}-A)\tilde{X} \|_{2}\leq 2,\\
\|X^{(t)}\mathrm{skew}(X^{(t)^{\top}}\tilde{X}_{\perp}\tilde{X}_{\perp}^{\top}(A_{t+1}-A)\tilde{X})\|_{2}\leq 2.
\end{eqnarray*}
Thus, $\|N\|_{2}\leq 5$.
\end{proof}

\begin{lemma} \label{lemma7}
If $B\succeq 0$ and $D\succeq C\succ 0$, then $\tr(BC^{-1})\geq\tr(BD^{-1})$.
\end{lemma}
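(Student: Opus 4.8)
The plan is to reduce the trace inequality to the Löwner operator inequality $C^{-1}\succeq D^{-1}$ (matrix inversion reverses the order on positive definite matrices), and then invoke the elementary fact that the trace of the product of two positive semidefinite matrices is nonnegative.

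First I would record the auxiliary fact: for any $P\succeq 0$ and $Q\succeq 0$ one has $\tr(PQ)\geq 0$. Indeed, writing $P=P^{1/2}P^{1/2}$ and using the circular property of the trace, $\tr(PQ)=\tr(P^{1/2}QP^{1/2})$, and $P^{1/2}QP^{1/2}\succeq 0$, so its trace, being the sum of nonnegative eigenvalues, is nonnegative.

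Next comes the core step, showing $C^{-1}-D^{-1}\succeq 0$. Since $C\succ 0$, the symmetric positive definite square root $C^{-1/2}$ is well defined. Conjugating the hypothesis $D\succeq C$ by $C^{-1/2}$ gives $C^{-1/2}DC^{-1/2}\succeq I$. Put $M=C^{-1/2}DC^{-1/2}$; then $M$ is symmetric with all eigenvalues at least $1$, hence $M^{-1}$ has all eigenvalues in $(0,1]$, so $M^{-1}\preceq I$. Since $M^{-1}=C^{1/2}D^{-1}C^{1/2}$, this reads $C^{1/2}D^{-1}C^{1/2}\preceq I$, and conjugating back by $C^{-1/2}$ yields $D^{-1}\preceq C^{-1}$, i.e. $C^{-1}-D^{-1}\succeq 0$. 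Finally, applying the auxiliary fact with $P=B$ and $Q=C^{-1}-D^{-1}$ gives $\tr\bigl(B(C^{-1}-D^{-1})\bigr)\geq 0$, which rearranges by linearity of the trace to $\tr(BC^{-1})\geq\tr(BD^{-1})$.

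There is no substantial obstacle; the only point requiring a little care is the noncommutativity — one should resist the temptation to argue from the identity $C^{-1}-D^{-1}=C^{-1}(D-C)D^{-1}$, which is correct but not manifestly a symmetric positive semidefinite matrix, and instead use the square-root conjugation above, which makes the positivity transparent. (Alternatively, one could simply cite Lemma~\ref{lemma2}: since $D\succeq C$ implies $2I-C^{1/2}D^{-1}C^{1/2}\succeq\ldots$, but the direct argument is cleaner and self-contained.)
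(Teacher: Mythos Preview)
Your proof is correct, and it takes a genuinely different route from the paper's. The paper writes $C=D-(D-C)$, factors
\[
BC^{-1}=BD^{-1/2}\bigl(I-D^{-1/2}(D-C)D^{-1/2}\bigr)^{-1}D^{-1/2},
\]
uses the cyclic property of the trace to bring $D^{-1/2}BD^{-1/2}$ together, then applies Lemma~\ref{lemma2} (the bound $\tr(PQ^{-1})\geq\tr(P(2I-Q))$ for $P\succeq 0$, $Q\succ 0$) to replace the inverse by its first-order expansion, and finally Lemma~\ref{lemma1} to drop the nonnegative term $D^{-1/2}(D-C)D^{-1/2}$. Your approach instead proves the Löwner antitonicity of inversion directly via the $C^{-1/2}$-conjugation, so that $C^{-1}-D^{-1}\succeq 0$ holds at the operator level, and then traces against $B$; your auxiliary fact $\tr(PQ)\geq 0$ for $P,Q\succeq 0$ is essentially Lemma~\ref{lemma1}, but you avoid Lemma~\ref{lemma2} altogether. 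The paper's argument has the minor virtue of reusing machinery already imported from \cite{shamir2015fast}; yours is the standard textbook proof, is self-contained, and yields the stronger operator-level conclusion $C^{-1}\succeq D^{-1}$ rather than just the traced inequality.
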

\begin{proof}
\begin{eqnarray*}
BC^{-1}&=&B(D-(D-C))^{-1}\\
&=&BD^{-1/2}(I-D^{-1/2}(D-C)D^{-1/2})^{-1}D^{-1/2}.
\end{eqnarray*}
By Lemma \ref{lemma1}-\ref{lemma2}, we have
\begin{eqnarray*}
\tr(BC^{-1})&=&\tr(BD^{-1/2}(I-D^{-1/2}(D-C)D^{-1/2})^{-1}D^{-1/2})\\
&=&\tr(D^{-1/2}BD^{-1/2}(I-D^{-1/2}(D-C)D^{-1/2})^{-1})\\
&\geq & \tr(D^{-1/2}BD^{-1/2}(I+D^{-1/2}(D-C)D^{-1/2}))\\
&\geq & \tr(D^{-1/2}BD^{-1/2}) = \tr(BD^{-1})
\end{eqnarray*}
\end{proof}

\begin{lemma}[von Neumann's trace inequality \cite{Lewis96convexanalysis}] \label{lemma8}
For two symmetric $n\times n$ matrices $B$ and $C$, 
it holds that
$$\tr(BC)\leq\sum_{i=1}^{n}\lambda_{i}(B)\lambda_{i}(C).$$
\end{lemma}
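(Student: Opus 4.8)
The plan is to prove the inequality by reducing $\tr(BC)$ to a bilinear form in the two sorted eigenvalue vectors whose coefficient matrix is doubly stochastic, and then invoking the Birkhoff--von Neumann theorem together with the rearrangement inequality. Write the spectral decompositions $B = U\Lambda U^{\top}$ and $C = W M W^{\top}$, where $U,W$ are orthogonal, $\Lambda = \mathrm{diag}(\lambda_{1}(B),\dots,\lambda_{n}(B))$ and $M = \mathrm{diag}(\lambda_{1}(C),\dots,\lambda_{n}(C))$ have their diagonal entries in non-increasing order. Using the circular property of the trace and orthogonality of $U,W$, one gets $\tr(BC) = \tr(U\Lambda U^{\top}WMW^{\top}) = \tr(\Lambda Q M Q^{\top})$ with $Q = U^{\top}W$ orthogonal, and expanding the product gives the first key display $\tr(BC) = \sum_{i=1}^{n}\sum_{j=1}^{n}\lambda_{i}(B)\lambda_{j}(C)\,Q_{ij}^{2}$. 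This reduction is routine given the trace facts recorded in the Basics subsection.

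First I would observe that the matrix $S$ with entries $S_{ij} = Q_{ij}^{2}$ is doubly stochastic: each row of $Q$ is a unit vector, so $\sum_{j}S_{ij} = \sum_{j}Q_{ij}^{2} = 1$, and likewise each column, so $\sum_{i}S_{ij} = 1$, while clearly $S_{ij}\ge 0$. Hence $\tr(BC) = \sum_{i,j}\lambda_{i}(B)\lambda_{j}(C)S_{ij}$ is the value at $S$ of a linear functional over the Birkhoff polytope of $n\times n$ doubly stochastic matrices. By the Birkhoff--von Neumann theorem, that polytope is the convex hull of the permutation matrices, and a linear functional over a polytope attains its maximum at an extreme point, so $\tr(BC)\le\max_{\pi}\sum_{i=1}^{n}\lambda_{i}(B)\lambda_{\pi(i)}(C)$, the maximum being over permutations $\pi$ of $\{1,\dots,n\}$.

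Next, since both $(\lambda_{1}(B),\dots,\lambda_{n}(B))$ and $(\lambda_{1}(C),\dots,\lambda_{n}(C))$ are arranged in non-increasing order, the rearrangement inequality gives $\sum_{i}\lambda_{i}(B)\lambda_{\pi(i)}(C)\le\sum_{i}\lambda_{i}(B)\lambda_{i}(C)$ for every permutation $\pi$; this holds for arbitrary real numbers, so the possible negativity of some eigenvalues is harmless. Combining the two bounds yields $\tr(BC)\le\sum_{i=1}^{n}\lambda_{i}(B)\lambda_{i}(C)$, which is the claim.

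The main obstacle --- really the only nontrivial ingredient --- is the Birkhoff--von Neumann theorem itself; it is classical, but if one prefers a self-contained argument one can bypass it. An alternative route for the second step argues directly from majorization: for a doubly stochastic $S$ the vector $S\mu$ with $\mu = (\lambda_{1}(C),\dots,\lambda_{n}(C))$ is majorized by $\mu$, and then $\langle\lambda, S\mu\rangle\le\langle\lambda,\mu\rangle$ whenever $\lambda = (\lambda_{1}(B),\dots,\lambda_{n}(B))$ is non-increasing, which follows by Abel summation from the partial-sum inequalities defining majorization. Either path completes the proof, with all remaining steps being the routine bookkeeping of the first display.
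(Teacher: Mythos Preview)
Your proof is correct and follows the standard route to von Neumann's trace inequality: reduce via spectral decompositions to $\tr(BC)=\sum_{i,j}\lambda_i(B)\lambda_j(C)Q_{ij}^2$, observe that $(Q_{ij}^2)$ is doubly stochastic, and then bound by the similarly-ordered sum using either Birkhoff--von Neumann plus rearrangement or a direct majorization argument. All steps are valid, including the point that the rearrangement inequality applies to arbitrary real sequences.

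As for comparison with the paper: there is nothing to compare against. The paper does not prove this lemma; it simply states it with a citation to \cite{Lewis96convexanalysis} and uses it as a black box (to derive Lemma~\ref{lemma9} by substituting $-B$ or $-C$). So your write-up goes strictly beyond what the paper provides, supplying a self-contained argument where the paper relies on an external reference.
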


\begin{lemma} \label{lemma9}
For two symmetric $n\times n$ matrices $B$ and $C$, it holds that
$$\tr(BC)\geq\max\{\sum_{i=1}^{n}\lambda_{n-i+1}(B)\lambda_{1}(C),\sum_{i=1}^{n}\lambda_{i}(B)\lambda_{n-i+1}(C)\}.$$
\end{lemma}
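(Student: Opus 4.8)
The plan is to obtain this lower bound from von Neumann's trace inequality (Lemma~\ref{lemma8}) by a sign-flipping argument. The key observation is that, under the ordering convention $\lambda_1(\cdot)\ge\cdots\ge\lambda_n(\cdot)$ fixed earlier in this appendix, negating a symmetric matrix reverses its ordered spectrum: $\lambda_i(-C)=-\lambda_{n-i+1}(C)$ for every $i$, and likewise for $B$. Lemma~\ref{lemma8} gives an \emph{upper} bound on a trace; applying it to a matrix with its sign flipped turns that into a \emph{lower} bound on $\tr(BC)$.

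Concretely, I would first apply Lemma~\ref{lemma8} to the pair $(B,-C)$, giving $\tr\bigl(B(-C)\bigr)\le\sum_{i=1}^{n}\lambda_i(B)\lambda_i(-C)=-\sum_{i=1}^{n}\lambda_i(B)\lambda_{n-i+1}(C)$; multiplying through by $-1$ (which reverses the inequality) yields $\tr(BC)\ge\sum_{i=1}^{n}\lambda_i(B)\lambda_{n-i+1}(C)$. Next I would apply Lemma~\ref{lemma8} to the pair $(-B,C)$ and argue symmetrically, using $\lambda_i(-B)=-\lambda_{n-i+1}(B)$, to get $\tr(BC)\ge\sum_{i=1}^{n}\lambda_{n-i+1}(B)\lambda_i(C)$. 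Finally, re-indexing the last sum by $j=n-i+1$ shows it coincides with $\sum_{j=1}^{n}\lambda_j(B)\lambda_{n-j+1}(C)$, so both expressions inside the maximum equal the same quantity and the stated inequality follows.

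There is no real obstacle here: the whole argument is a short corollary of Lemma~\ref{lemma8}. The only points that need care are the bookkeeping that $\lambda_i(-M)=-\lambda_{n-i+1}(M)$ for a symmetric matrix $M$ (so that the flipped eigenvalues are re-sorted correctly), and keeping track of the direction of the inequality when multiplying by $-1$. Beyond that, nothing further is required.
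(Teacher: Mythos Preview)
Your proposal is correct and takes essentially the same approach as the paper, whose proof is the one-line remark that the result follows from von Neumann's trace inequality (Lemma~\ref{lemma8}) by replacing $B$ with $-B$ or $C$ with $-C$. Your additional observation that the two terms inside the maximum coincide after re-indexing is also correct and simply makes explicit something the paper leaves unstated.
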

\begin{proof}
The proof is done by replacing $B$ with $-B$ or replacing $C$ with $-C$ in von Neumann's trace inequality.
\end{proof}

\section{Main Proof}
The proof of the theorem is a bit involved. For ease of exposition and understanding, we decompose this course into three steps in a way similar to \cite{shamir2015fast}, including the analysis on one iteration, one epoch and one run of the algorithm. Among them, the first step (i.e., one iteration analysis) lies at the core of the main proof, where the techniques we use are dramatically different from those in \cite{NIPS2013_5132,shamir2015fast} due to our new context of Rimannian manifolds, more precisely, Stiefel manifolds. This inherently different context requires new techniques,
which yield an improved exponential global convergence and accordingly bring more improvements over the convergence of sub-linear rate by \cite{zhiqiang2016}.

\subsection{One Iteration Analysis}
In the first step, we consider a single iteration $t$ of our SVRRG-EIGS algorithm. The goal here is to establish a stochastic recurrence relation on $\|V^{\top}X^{(t)}\|_{F}^{2}$ such that $\|V^{\top}X^{(t)}\|_{F}^{2}$ tends to $k$ as $t$ goes to infinity with high probability (w.h.p.). Note that $\|V^{\top}X^{(t)}\|_{F}^{2}\xrightarrow[\mathrm{w.h.p.}]{t\rightarrow\infty} k$
implies that $X^{(t)}$ converges to the global solution $V$ up to a $k\times k$ orthogonal matrix w.h.p., which is exactly one of our ultimate goals (i.e., convergence to global solutions w.h.p., fixed learning rate and exponential convergence rate). For brevity, we omit the lengthy superscripts by letting $X=X^{(t)}$, $X'=X^{(t+1)}$, $B=B^{(t)}$, and $\tilde{X} = \tilde{X}^{(s)}$. And assume that $\max_{l}\|A^{(l)}\|_{2}\leq 1$.

\begin{lemma} \label{lemma10}
Follow the notations and assumptions made in Lemma \ref{lemma13}. Then it holds that
\begin{eqnarray*}
\tr(X^{\top}VV^{\top}X_{\perp}X_{\perp}^{\top}AX) \geq \tau (\|V^{\top}X\|_{F}^{2}-\|X^{\top}VV^{\top}X\|_{F}^{2})
\end{eqnarray*}
\end{lemma}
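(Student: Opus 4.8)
The plan is to prove the inequality
\begin{eqnarray*}
\tr(X^{\top}VV^{\top}X_{\perp}X_{\perp}^{\top}AX) \geq \tau (\|V^{\top}X\|_{F}^{2}-\|X^{\top}VV^{\top}X\|_{F}^{2})
\end{eqnarray*}
by decomposing $A$ along the eigenspaces $V$ and $V_{\perp}$ and exploiting the eigen-gap $\tau=\lambda_k-\lambda_{k+1}$. First I would write $A = V\Sigma V^{\top} + V_{\perp}\Sigma_{\perp}V_{\perp}^{\top}$, where $\Sigma=\mathrm{diag}(\lambda_1,\dots,\lambda_k)$ and $\Sigma_{\perp}=\mathrm{diag}(\lambda_{k+1},\dots,\lambda_n)$, and substitute this into the left-hand side. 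Since the factor $X_{\perp}X_{\perp}^{\top} = I - XX^{\top}$ is a projector orthogonal to $X$, and the outermost factors involve $VV^{\top}$, the key observation is that only the cross interaction between the top eigenspace $V$ appearing on the left and the full $A$ in the middle survives in a useful way. I would aim to rewrite the LHS as a difference of two trace terms, one weighted by $\Sigma$ (at least $\lambda_k$ on the relevant subspace) and one weighted by $\Sigma_{\perp}$ (at most $\lambda_{k+1}$), so that the gap $\lambda_k - \lambda_{k+1} = \tau$ can be factored out.

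The key steps, in order: (1) Substitute $A = V\Sigma V^{\top} + V_{\perp}\Sigma_{\perp}V_{\perp}^{\top}$ and use the trace circular property together with $V^{\top}V_{\perp}=0$ and $X_{\perp}X_{\perp}^{\top} = I - XX^{\top}$ to split the LHS into a $\Sigma$-part and a $\Sigma_{\perp}$-part. (2) For the $\Sigma$-part, recognize $\tr(X^{\top}VV^{\top}(I-XX^{\top})V\Sigma V^{\top}X) = \tr(\Sigma V^{\top}X X^{\top}VV^{\top}X X^{\top}V) $ ... more carefully, collect the term as $\tr(\Sigma(V^{\top}XX^{\top}V - (V^{\top}XX^{\top}V)^2))$ using $V^{\top}VV^{\top}=V^{\top}$; since $V^{\top}XX^{\top}V \succeq 0$ and $V^{\top}XX^{\top}V - (V^{\top}XX^{\top}V)^2 \succeq 0$ (eigenvalues of $V^{\top}XX^{\top}V$ lie in $[0,1]$), apply Lemma~\ref{lemma9} or a direct eigenvalue argument to lower-bound this by $\lambda_k\,\tr(V^{\top}XX^{\top}V - (V^{\top}XX^{\top}V)^2) = \lambda_k(\|V^{\top}X\|_F^2 - \|X^{\top}VV^{\top}X\|_F^2)$. (3) For the $\Sigma_{\perp}$-part, similarly reduce it to a positive-semidefinite expression weighted by $\Sigma_{\perp}$ and upper-bound it by $\lambda_{k+1}(\|V^{\top}X\|_F^2 - \|X^{\top}VV^{\top}X\|_F^2)$; here I would use von Neumann's trace inequality (Lemma~\ref{lemma8}) or Lemma~\ref{lemma9} in the opposite direction. (4) Subtract: the LHS is at least $(\lambda_k - \lambda_{k+1})(\|V^{\top}X\|_F^2 - \|X^{\top}VV^{\top}X\|_F^2) = \tau(\|V^{\top}X\|_F^2 - \|X^{\top}VV^{\top}X\|_F^2)$, which is exactly the claim.

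The main obstacle I anticipate is step (2)--(3): getting the algebra to cleanly separate into a single scalar $(\|V^{\top}X\|_F^2 - \|X^{\top}VV^{\top}X\|_F^2)$ multiplied by the eigenvalue weights. The matrix $M \triangleq V^{\top}XX^{\top}V$ is the crucial object; one needs $\|V^{\top}X\|_F^2 = \tr(M)$ and $\|X^{\top}VV^{\top}X\|_F^2 = \tr(M^2)$, and then the LHS must be shown to equal $\tr(\Sigma(M - M^2)) - \tr(\Theta)$ for some $\Theta\succeq 0$ that is controlled by $\Sigma_{\perp}$ — verifying this identity requires care with the cross terms $V^{\top}X_{\perp}X_{\perp}^{\top}V_{\perp}$ and the fact that $X X^{\top} + X_{\perp}X_{\perp}^{\top} = I$ and $V V^{\top} + V_{\perp}V_{\perp}^{\top} = I$. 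Once that identity is pinned down, the eigenvalue bounds are routine because $M - M^2 \succeq 0$ has eigenvalues in $[0,1/4]$ and the trace-inequality lemmas apply directly. I would also need to confirm that $\|V^{\top}X\|_F^2 - \|X^{\top}VV^{\top}X\|_F^2 \geq 0$, which follows since $\tr(M) - \tr(M^2) = \sum_i \mu_i(1-\mu_i) \geq 0$ with $\mu_i \in [0,1]$ the eigenvalues of $M$.
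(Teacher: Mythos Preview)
Your proposal is correct and follows essentially the same approach as the paper: decompose $A=V\Sigma V^{\top}+V_{\perp}\Sigma_{\perp}V_{\perp}^{\top}$, recognize the $\Sigma$-part as $\tr(\Sigma(M-M^{2}))$ with $M=V^{\top}XX^{\top}V$, apply the von Neumann-type trace inequalities (Lemma~\ref{lemma9}) to extract $\lambda_{k}$ and $\lambda_{k+1}$, and combine. The one point you should be aware of is that the $\Sigma_{\perp}$-part is naturally expressed in the $V_{\perp},X_{\perp}$ coordinates as $-\tr\big(\Sigma_{\perp}(N-N^{2})\big)$ with $N=V_{\perp}^{\top}X_{\perp}X_{\perp}^{\top}V_{\perp}$, and the paper verifies the non-obvious identity $\tr(N)-\tr(N^{2})=\tr(M)-\tr(M^{2})$ by direct expansion using $XX^{\top}+X_{\perp}X_{\perp}^{\top}=I$ and $VV^{\top}+V_{\perp}V_{\perp}^{\top}=I$; this is exactly the ``careful cross-term'' computation you anticipated.
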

\begin{proof}
Based on Section 2.1 of the main paper, the eigen-decomposition of matrix $A$ can be written as $A=V\Sigma V^{\top} + V_{\perp}\Sigma_{\perp} V_{\perp}^{\top}$. Then
\begin{eqnarray*}
&&\tr(X^{\top}VV^{\top}X_{\perp}X_{\perp}^{\top}AX) \\
&=& \tr(X^{\top}VV^{\top}X_{\perp}X_{\perp}^{\top}V\Sigma V^{\top}X) + \tr(X^{\top}VV^{\top}X_{\perp}X_{\perp}^{\top}V_{\perp}\Sigma_{\perp} V_{\perp}^{\top}X)\\
&=& \tr(V^{\top}XX^{\top}VV^{\top}X_{\perp}X_{\perp}^{\top}V\Sigma) + \tr(V_{\perp}^{\top}XX^{\top}VV^{\top}X_{\perp}X_{\perp}^{\top}V_{\perp}\Sigma_{\perp}).
\end{eqnarray*}
By Lemma \ref{lemma9}, we have
\begin{eqnarray*}
\tr(V^{\top}XX^{\top}VV^{\top}X_{\perp}X_{\perp}^{\top}V\Sigma)
&\geq& \sum_{i=1}^{k}\lambda_{i}(V^{\top}XX^{\top}VV^{\top}X_{\perp}X_{\perp}^{\top}V)\lambda_{k-i+1}(\Sigma),\; \textrm{and}\\
\tr(V_{\perp}^{\top}XX^{\top}VV^{\top}X_{\perp}X_{\perp}^{\top}V_{\perp}\Sigma_{\perp})
&\geq& \sum_{i=1}^{n-k}\lambda_{i}(V_{\perp}^{\top}XX^{\top}VV^{\top}X_{\perp}X_{\perp}^{\top}V_{\perp})\lambda_{n-k-i+1}(\Sigma_{\perp}).
\end{eqnarray*}
Note that both matrices above, i.e., $V^{\top}XX^{\top}VV^{\top}X_{\perp}X_{\perp}^{\top}V$ and $V_{\perp}^{\top}XX^{\top}VV^{\top}X_{\perp}X_{\perp}^{\top}V_{\perp}$ are symmetric and thus Lemma \ref{lemma9} can be applied. In fact,
\begin{eqnarray*}
V^{\top}XX^{\top}VV^{\top}X_{\perp}X_{\perp}^{\top}V
&=& V^{\top}XX^{\top}VV^{\top}(I-XX^{\top})V\\
&=& V^{\top}XX^{\top}V - V^{\top}XX^{\top}VV^{\top}XX^{\top}V\\
&=& V^{\top}XX^{\top}V - (V^{\top}XX^{\top}V)^{2},
\end{eqnarray*}
which is symmetric. Furthermore, it is positive seme-definite, because
$$
\rho(V^{\top}XX^{\top}V)\leq \|V^{\top}XX^{\top}V\|_{2}\leq (\|X^{\top}\|_{2}\|V\|_{2})^{2}=1,
$$
and thus
$$
\lambda_{i}(V^{\top}XX^{\top}VV^{\top}X_{\perp}X_{\perp}^{\top}V)= \lambda_{i}(V^{\top}XX^{\top}V) - \lambda_{i}^{2}(V^{\top}XX^{\top}V)\geq 0.
$$
Likewise, we have
\begin{eqnarray*}
V_{\perp}^{\top}XX^{\top}VV^{\top}X_{\perp}X_{\perp}^{\top}V_{\perp}
&=& V_{\perp}^{\top}(I-X_{\perp}X_{\perp}^{\top})VV^{\top}X_{\perp}X_{\perp}^{\top}V_{\perp}\\
&=& -V_{\perp}^{\top}X_{\perp}X_{\perp}^{\top}(I-V_{\perp}V_{\perp}^{\top})X_{\perp}X_{\perp}^{\top}V_{\perp}\\
&=& - (V_{\perp}^{\top}X_{\perp}X_{\perp}^{\top}V_{\perp}-(V_{\perp}^{\top}X_{\perp}X_{\perp}^{\top}V_{\perp})^{2}),
\end{eqnarray*}
which is symmetric but negative semi-definite, because
$$
\rho(V_{\perp}^{\top}X_{\perp}X_{\perp}^{\top}V_{\perp})\leq \|V_{\perp}^{\top}X_{\perp}X_{\perp}^{\top}V_{\perp}\|_{2}\leq(\|X_{\perp}^{\top}\|_{2}\|V_{\perp}\|_{2})^{2}=1,
$$
and thus
$$
\lambda_{i}(V_{\perp}^{\top}XX^{\top}VV^{\top}X_{\perp}X_{\perp}^{\top}V_{\perp})= -(\lambda_{i}(V_{\perp}^{\top}X_{\perp}X_{\perp}^{\top}V_{\perp})-\lambda_{i}^{2}(V_{\perp}^{\top}X_{\perp}X_{\perp}^{\top}V_{\perp}))\leq 0.
$$
We now can write
\begin{eqnarray*}
&&\tr(V^{\top}XX^{\top}VV^{\top}X_{\perp}X_{\perp}^{\top}V\Sigma)+\tr(V_{\perp}^{\top}XX^{\top}VV^{\top}X_{\perp}X_{\perp}^{\top}V_{\perp}\Sigma_{\perp})\\
&\geq& \sum_{i=1}^{k}\lambda_{i}(V^{\top}XX^{\top}VV^{\top}X_{\perp}X_{\perp}^{\top}V)\lambda_{k-i+1}(\Sigma)+ \sum_{i=1}^{n-k}\lambda_{i}(V_{\perp}^{\top}XX^{\top}VV^{\top}X_{\perp}X_{\perp}^{\top}V_{\perp})\lambda_{n-k-i+1}(\Sigma_{\perp})\\
&\geq & \sum_{i=1}^{k}\lambda_{i}(V^{\top}XX^{\top}VV^{\top}X_{\perp}X_{\perp}^{\top}V)\lambda_{k}(A)+ \sum_{i=1}^{n-k}\lambda_{i}(V_{\perp}^{\top}XX^{\top}VV^{\top}X_{\perp}X_{\perp}^{\top}V_{\perp})\lambda_{k+1}(A),
\end{eqnarray*}
in which, we find that
\begin{eqnarray*}
\sum_{i=1}^{k}\lambda_{i}(V^{\top}XX^{\top}VV^{\top}X_{\perp}X_{\perp}^{\top}V)
&=& \sum_{i=1}^{k}\lambda_{i}(V^{\top}XX^{\top}V) - \sum_{i=1}^{k}\lambda_{i}^{2}(V^{\top}XX^{\top}V)\\
&=& \tr(V^{\top}XX^{\top}V) - \tr((V^{\top}XX^{\top}V)^{2})\\
&=& \|X^{\top}V\|_{F}^{2} - \|V^{\top}XX^{\top}V\|_{F}^{2}
\end{eqnarray*}
and similarly
\begin{eqnarray*}
\sum_{i=1}^{k}\lambda_{i}(V_{\perp}^{\top}XX^{\top}VV^{\top}X_{\perp}X_{\perp}^{\top}V_{\perp})
&=& -(\|X_{\perp}^{\top}V_{\perp}\|_{F}^{2} - \|V_{\perp}^{\top}X_{\perp}X_{\perp}^{\top}V_{\perp}\|_{F}^{2}).
\end{eqnarray*}
Note that
\begin{eqnarray*}
\|X_{\perp}^{\top}V_{\perp}\|_{F}^{2} & = & \tr(V_{\perp}^{\top}X_{\perp}X_{\perp}^{\top}V_{\perp})=\tr(V_{\perp}V_{\perp}^{\top}X_{\perp}X_{\perp}^{\top})= \tr((I-VV^{\top})(I-XX^{\top}))\\
&=&\tr(I-VV^{\top}-XX^{\top}+VV^{\top}XX^{\top})\\
&=& n-2k + \|X^{\top}V\|_{F}^{2}
\end{eqnarray*}
and
\begin{eqnarray*}
&&\|V_{\perp}^{\top}X_{\perp}X_{\perp}^{\top}V_{\perp}\|_{F}^{2} \\
& = &
\tr((I-VV^{\top})(I-XX^{\top})(I-VV^{\top})(I-XX^{\top}))\\
&=&\tr((I-VV^{\top})(I-XX^{\top})(I-VV^{\top}-XX^{\top}+VV^{\top}XX^{\top}))\\
&=&\tr((I-VV^{\top})(I-XX^{\top})(I+VV^{\top}XX^{\top}))\\
&=&\tr(I-VV^{\top}-XX^{\top}+VV^{\top}XX^{\top}+ (I-VV^{\top}-XX^{\top}+VV^{\top}XX^{\top})VV^{\top}XX^{\top})\\
&=&\tr(I-VV^{\top}-XX^{\top}+VV^{\top}XX^{\top}VV^{\top}XX^{\top})\\
&=& n-2k + \|V^{\top}XX^{\top}V\|_{F}^{2}.
\end{eqnarray*}
Therefore, we arrive at
\begin{eqnarray*}
&&\tr(V^{\top}XX^{\top}VV^{\top}X_{\perp}X_{\perp}^{\top}V\Sigma)+\tr(V_{\perp}^{\top}XX^{\top}VV^{\top}X_{\perp}X_{\perp}^{\top}V_{\perp}\Sigma_{\perp})\\
&\geq & \lambda_{k}\sum_{i=1}^{k}\lambda_{i}(V^{\top}XX^{\top}VV^{\top}X_{\perp}X_{\perp}^{\top}V)+ \lambda_{k+1}\sum_{i=1}^{n-k}\lambda_{i}(V_{\perp}^{\top}XX^{\top}VV^{\top}X_{\perp}X_{\perp}^{\top}V_{\perp})\\
&=& (\lambda_{k}-\lambda_{k+1})(\|X^{\top}V\|_{F}^{2} - \|V^{\top}XX^{\top}V\|_{F}^{2})\\
&=& \tau (\|V^{\top}X\|_{F}^{2} - \|X^{\top}VV^{\top}X\|_{F}^{2}).
\end{eqnarray*}
\end{proof}

\begin{lemma} \label{lemma11}
Let $B_{1}$ and $B_{2}$ be defined by (\ref{eqn.B1_def}) and (\ref{eqn.B2_def}), respectively, and follow the notations and assumptions made in Lemma \ref{lemma13}. Then it holds that
\begin{eqnarray*}
\tr(B_{1}B_{2}^{-1})&\geq & \|V^{\top}X\|_{F}^{2} + 2\alpha\tau (\|V^{\top}X\|_{F}^{2}-\|X^{\top}VV^{\top}X\|_{F}^{2}) - \\
&& \alpha^{2}(1+2\alpha)(4(k - \|V^{\top}X\|_{F}^{2}) + k^{2}\kappa_{F}^{2}).
\end{eqnarray*}
\end{lemma}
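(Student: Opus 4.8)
The plan is to peel off the matrix inverse with Lemma~\ref{lemma2}, and then recognise the surviving first-order term as exactly the quantity estimated in Lemma~\ref{lemma10}. Write $\hat Y \triangleq X + \alpha\Grad f(X)$ for the noiseless part of the intermediate iterate. Using the tangent-space identity $X^\top\Grad f(X) = X^\top(I - XX^\top)AX = 0$, one gets $\hat Y^\top\hat Y = I + \alpha^2\Grad f(X)^\top\Grad f(X)$, so from the definitions (\ref{eqn.B1_def}), (\ref{eqn.B2_def}) we may take $B_1 = \hat Y^\top VV^\top\hat Y \succeq 0$ (the nonnegative term $\alpha^2 W^\top VV^\top W$ having been discarded for a lower bound via Lemma~\ref{lemma1}) and $B_2 = I + \alpha^2 M$ with $M \triangleq \Grad f(X)^\top\Grad f(X) + \mathbb{E}[W^\top W\mid X] \succeq 0$, where $W$ is the zero-mean term of the SVRRG-EIGS step. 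Since $B_2 \succeq I \succ 0$, Lemma~\ref{lemma2} gives
\begin{eqnarray*}
\tr(B_1 B_2^{-1}) \;\geq\; \tr\big(B_1(2I - B_2)\big) \;=\; \tr(B_1) \;-\; \alpha^2\tr(B_1 M),
\end{eqnarray*}
so it remains to lower bound $\tr(B_1)$ and upper bound $\alpha^2\tr(B_1 M)$.

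For $\tr(B_1)$, expanding the square and discarding a nonnegative term,
\[
\tr(B_1) \;=\; \|V^\top\hat Y\|_F^2 \;\geq\; \|V^\top X\|_F^2 \;+\; 2\alpha\,\tr\big(X^\top VV^\top\Grad f(X)\big).
\]
Because $\Grad f(X) = (I - XX^\top)AX = X_\perp X_\perp^\top AX$, the first-order term equals $\tr(X^\top VV^\top X_\perp X_\perp^\top AX)$, which is precisely the left-hand side of Lemma~\ref{lemma10}; invoking that lemma yields $\tr(X^\top VV^\top\Grad f(X)) \geq \tau(\|V^\top X\|_F^2 - \|X^\top VV^\top X\|_F^2)$, producing the progress term $2\alpha\tau(\|V^\top X\|_F^2 - \|X^\top VV^\top X\|_F^2)$. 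This is the only structurally substantive step, and it is where the eigen-gap $\tau$ is injected.

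For $\alpha^2\tr(B_1 M)$, since $B_1, M \succeq 0$ we have $\tr(B_1 M) \leq \|B_1\|_2\,\tr(M)$. The operator norm is controlled cheaply via $VV^\top \preceq I$ and $\hat Y^\top\hat Y = I + \alpha^2\Grad f(X)^\top\Grad f(X)$: $\|B_1\|_2 \leq \|\hat Y^\top\hat Y\|_2 = 1 + \alpha^2\|\Grad f(X)\|_2^2 \leq 1 + \alpha^2 \leq 1 + 2\alpha$, using $\|\Grad f(X)\|_2 \leq \|A\|_2 \leq 1$, which follows from $\max_l\|A^{(l)}\|_2 \leq 1$. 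For $\tr(M) = \|\Grad f(X)\|_F^2 + \mathbb{E}[\|W\|_F^2\mid X]$: substituting $A = V\Sigma V^\top + V_\perp\Sigma_\perp V_\perp^\top$ and using $\|\Sigma\|_2,\|\Sigma_\perp\|_2\leq 1$ together with $\|V^\top X_\perp\|_F^2 = \|V_\perp^\top X\|_F^2 = k - \|V^\top X\|_F^2$, one gets $\|\Grad f(X)\|_F = \|X_\perp^\top AX\|_F \leq 2\sqrt{k - \|V^\top X\|_F^2}$, hence $\|\Grad f(X)\|_F^2 \leq 4(k - \|V^\top X\|_F^2)$; and $\mathbb{E}[\|W\|_F^2\mid X] \leq k^2\kappa_F^2$ is the variance estimate supplied under the assumptions of Lemma~\ref{lemma13}, obtained by bounding each of the three vector-transported pieces of $W$ by a constant multiple of $\sqrt k$ times one of the decaying quantities $\|X - \tilde X B\|$, $\|(I - XX^\top)\tilde X B\|$, $\|X^\top(I - \tilde X\tilde X^\top)\|$. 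Thus $\alpha^2\tr(B_1 M) \leq \alpha^2(1+2\alpha)\big(4(k - \|V^\top X\|_F^2) + k^2\kappa_F^2\big)$, and subtracting this from the bound on $\tr(B_1)$ gives exactly the claimed inequality. The main obstacle is not any single inequality above but the variance estimate $\mathbb{E}[\|W\|_F^2\mid X]\leq k^2\kappa_F^2$ imported here: its derivation requires a delicate, term-by-term bound on the three vector-transported components of $W$, organised so that the emerging constants match the definition of $\kappa_F$ in Lemma~\ref{lemma13} and the numerology of $c_0,\dots,c_5$ in Theorem~\ref{thm}.
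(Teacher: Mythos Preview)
Your overall strategy---apply Lemma~\ref{lemma2} to linearise the inverse, extract the first-order progress term via Lemma~\ref{lemma10}, and bound the quadratic remainder---is exactly the paper's. However, you have misread the definition of $B_{2}$. By (\ref{eqn.B2_def}), $B_{2}=I+\alpha^{2}X^{\top}AX_{\perp}X_{\perp}^{\top}AX+\alpha^{2}\kappa_{F}^{2}I$, where $\kappa_{F}$ is simply a parameter (the almost-sure bound $\|W\|_{F}\leq\kappa_{F}$ hypothesised in Lemma~\ref{lemma13}). There is no conditional expectation $\mathbb{E}[W^{\top}W\mid X]$ in $B_{2}$; the lemma is entirely deterministic once $\kappa_{F}$ is fixed. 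Your identification of the ``main obstacle'' as a delicate variance estimate $\mathbb{E}[\|W\|_{F}^{2}\mid X]\leq k^{2}\kappa_{F}^{2}$ is therefore misplaced: no expectation appears anywhere in this lemma (the stochastic work is isolated in Lemma~\ref{lemma12}), and the factor multiplying $\kappa_{F}^{2}$ arises purely from a trace, not from any variance computation.

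With the correct $B_{2}$, your argument goes through cleanly. Writing $M=X^{\top}AX_{\perp}X_{\perp}^{\top}AX+\kappa_{F}^{2}I$, Lemma~\ref{lemma2} gives $\tr(B_{1}B_{2}^{-1})\geq\tr(B_{1})-\alpha^{2}\tr(B_{1}M)$; your bound $\tr(B_{1}M)\leq\|B_{1}\|_{2}\tr(M)\leq(1+2\alpha)\big(4(k-\|V^{\top}X\|_{F}^{2})+k\kappa_{F}^{2}\big)$ is valid (note $\tr(\kappa_{F}^{2}I_{k})=k\kappa_{F}^{2}$), and in fact yields $k\kappa_{F}^{2}$ rather than $k^{2}\kappa_{F}^{2}$, slightly sharper than the stated inequality. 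The paper handles the remainder a bit differently: it first uses Lemma~\ref{lemma1} to drop the nonnegative piece $\alpha^{2}X^{\top}AX_{\perp}X_{\perp}^{\top}VV^{\top}X_{\perp}X_{\perp}^{\top}AX$ from $B_{1}$, obtaining a simpler matrix $C_{1}$, and then bounds $\tr(C_{1})$, $\tr(C_{1}X^{\top}AX_{\perp}X_{\perp}^{\top}AX)$, and $\kappa_{F}^{2}\tr(C_{1})$ separately (the $k^{2}$ in the statement comes from their bound on $\tr(C_{1})$). Your single inequality $\tr(B_{1}M)\leq\|B_{1}\|_{2}\tr(M)$ is shorter; both routes are valid once you use the right $B_{2}$.
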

\begin{proof}
Note that
\begin{eqnarray*}
B_{1} &=& (X^{\top}(I+\alpha AX_{\perp}X_{\perp}^{\top})VV^{\top}(I+\alpha X_{\perp}X_{\perp}^{\top}A)X)\succeq 0 \\
&=& X^{\top}VV^{\top}X + \alpha X^{\top}VV^{\top}X_{\perp}X_{\perp}^{\top}AX + \alpha X^{\top}AX_{\perp}X_{\perp}^{\top}VV^{\top}X +\\ &&\alpha^{2}X^{\top}AX_{\perp}X_{\perp}^{\top}VV^{\top}X_{\perp}X_{\perp}^{\top}AX
\end{eqnarray*}
and
\begin{eqnarray*}
B_{2} = I + \alpha^{2}X^{\top}AX_{\perp}X_{\perp}^{\top}AX + \alpha^{2}\kappa_{F}^{2}I \succ 0.
\end{eqnarray*}
Then by Lemma \ref{lemma2}, we get
\begin{eqnarray*}
\tr(B_{1}B_{2}^{-1}) &\geq & \tr(B_{1}(2I-B_{2}))\\
&=& \tr(B_{1}(I - \alpha^{2}X^{\top}AX_{\perp}X_{\perp}^{\top}AX - \alpha^{2}\kappa_{F}^{2}I)).
\end{eqnarray*}
Since $\alpha\leq\frac{1}{4}$, $\alpha\kappa_{F}\leq\frac{1}{4}$ and
$$
\|X^{\top}AX_{\perp}X_{\perp}^{\top}AX\|_{2}\leq (\|X_{\perp}^{\top}\|_{2}\|A\|_{2}\|X\|_{2})^{2}=\|A\|_{2}^{2}\leq 1,
$$
we have
\begin{eqnarray*}
&&I - \alpha^{2}X^{\top}AX_{\perp}X_{\perp}^{\top}AX - \alpha^{2}\kappa_{F}^{2}I\\
&\succeq & I - \alpha^{2}\|X^{\top}AX_{\perp}X_{\perp}^{\top}AX\|_{2}I- \alpha^{2}\kappa_{F}^{2}I\\
&\succeq & I - \alpha^{2} I - \alpha^{2}\kappa_{F}^{2}I \succeq (1-\frac{1}{16}-\frac{1}{16})I = \frac{7}{8}I\succ 0.
\end{eqnarray*}
And note that $\alpha^{2}X^{\top}AX_{\perp}X_{\perp}^{\top}VV^{\top}X_{\perp}X_{\perp}^{\top}AX\succeq 0$. Then by Lemma \ref{lemma1}, we can arrive at
\begin{eqnarray*}
&&\tr(B_{1}B_{2}^{-1})\\
&\geq & \tr((B_{1}-\alpha^{2}X^{\top}AX_{\perp}X_{\perp}^{\top}VV^{\top}X_{\perp}X_{\perp}^{\top}AX)(I - \alpha^{2}X^{\top}AX_{\perp}X_{\perp}^{\top}AX - \alpha^{2}\kappa_{F}^{2}I)).
\end{eqnarray*}
To simplify above inequality, define
\begin{eqnarray*}
C_{1}&\triangleq & B_{1}-\alpha^{2}X^{\top}AX_{\perp}X_{\perp}^{\top}VV^{\top}X_{\perp}X_{\perp}^{\top}AX\\
&=& X^{\top}VV^{\top}X + \alpha X^{\top}VV^{\top}X_{\perp}X_{\perp}^{\top}AX + \alpha X^{\top}AX_{\perp}X_{\perp}^{\top}VV^{\top}X.
\end{eqnarray*}
Then
\begin{eqnarray*}
\tr(B_{1}B_{2}^{-1})
&\geq & \tr(C_{1}(I - \alpha^{2}X^{\top}AX_{\perp}X_{\perp}^{\top}AX - \alpha^{2}\kappa_{F}^{2}I))\\
&=& \tr(C_{1})-\alpha^{2}\tr(C_{1}X^{\top}AX_{\perp}X_{\perp}^{\top}AX)-\alpha^{2}\kappa_{F}^{2}\tr(C_{1}).
\end{eqnarray*}
we now lower bound each of three items above. On one hand, by Lemma \ref{lemma10}, we get
\begin{eqnarray*}
\tr(C_{1})
&=& \tr(X^{\top}VV^{\top}X) + 2\alpha \tr(X^{\top}VV^{\top}X_{\perp}X_{\perp}^{\top}AX)\\
&\geq& \|V^{\top}X\|_{F}^{2} + 2\alpha \tau (\|V^{\top}X\|_{F}^{2} - \|X^{\top}VV^{\top}X\|_{F}^{2}).
\end{eqnarray*}
On the other hand, by Cauchy-Schwarz inequality, we can obtain
\begin{eqnarray*}
\tr(C_{1})
&=& \tr(X^{\top}VV^{\top}X) + 2\alpha \tr(X^{\top}VV^{\top}X_{\perp}X_{\perp}^{\top}AX)\\
&\leq& \|X^{\top}VV^{\top}\|_{F}\|X\|_{F} + 2\alpha \|X^{\top}VV^{\top}X_{\perp}X_{\perp}^{\top}A\|_{F}\|X\|_{F}\\
&\leq& \|X^{\top}\|_{F}\|VV^{\top}\|_{2}\|X\|_{F} + 2\alpha \|X^{\top}\|_{F}\|VV^{\top}X_{\perp}X_{\perp}^{\top}A\|_{2}\|X\|_{F}\\
&\leq& \|VV^{\top}\|_{2}\|X\|_{F}^{2} + 2\alpha \|VV^{\top}\|_{2}\|X_{\perp}X_{\perp}^{\top}\|_{2}\|A\|_{2}\|X\|_{F}^{2}\\
&\leq& (1+2\alpha)\|X\|_{F}^{2}=(1+2\alpha)k^{2}.
\end{eqnarray*}
For the middle term, noting that $\|\Sigma\|_{2}\leq \|A\|_{2}$ and $\|\Sigma_{\perp}\|_{2}\leq \|A\|_{2}$, then it can be derived as follows
\begin{eqnarray*}
&& \tr(C_{1}X^{\top}AX_{\perp}X_{\perp}^{\top}AX)\\
&\leq & \|C_{1}X^{\top}AX_{\perp}\|_{F}\|X_{\perp}^{\top}AX\|_{F}
\leq  \|C_{1}\|_{2}\|X^{\top}AX_{\perp}\|_{F}\|X_{\perp}^{\top}AX\|_{F}\\
&\leq & (\|X^{\top}VV^{\top}X\|_{2} + 2\alpha \|X^{\top}VV^{\top}X_{\perp}X_{\perp}^{\top}AX\|_{2})\|X^{\top}AX_{\perp}\|_{F}^{2}\\
&\leq & (1+2\alpha)\|X^{\top}AX_{\perp}\|_{F}^{2}\\
&=&(1+2\alpha)\|X^{\top}(V\Sigma V^{\top} + V_{\perp}\Sigma_{\perp} V_{\perp}^{\top})X_{\perp}\|_{F}^{2}\\
&\leq & (1+2\alpha)(\|X^{\top}V\Sigma V^{\top}X_{\perp}\|_{F} + \|X^{\top}V_{\perp}\Sigma_{\perp} V_{\perp}^{\top}X_{\perp}\|_{F})^{2}\\
&\leq & (1+2\alpha)(\|X^{\top}\|_{2}\|V\|_{2}\|\Sigma\|_{2}\|V^{\top}X_{\perp}\|_{F} + \|X^{\top}V_{\perp}\|_{F}\|\Sigma_{\perp}\|_{2} \|V_{\perp}^{\top}\|_{2}\|X_{\perp}\|_{2})^{2}\\
&\leq & (1+2\alpha)(\|A\|_{2}\|V^{\top}X_{\perp}\|_{F} + \|X^{\top}V_{\perp}\|_{F}\|A\|_{2})^{2}\\
&\leq & (1+2\alpha)(\|V^{\top}X_{\perp}\|_{F} + \|X^{\top}V_{\perp}\|_{F})^{2},
\end{eqnarray*}
where
\begin{eqnarray*}
\|V^{\top}X_{\perp}\|_{F}^{2}&=&\tr(X_{\perp}^{\top}VV^{\top}X_{\perp})=\tr(VV^{\top}X_{\perp}X_{\perp}^{\top})\\
&=&\tr(VV^{\top}(I-XX^{\top})) = k - \tr(VV^{\top}XX^{\top}) \\
&=& k - \tr(X^{\top}VV^{\top}X) = k - \|V^{\top}X\|_{F}^{2},
\end{eqnarray*}
and similarly $\|X^{\top}V_{\perp}\|_{F}^{2}=k - \|V^{\top}X\|_{F}^{2}$.
Thus, we could write
\begin{eqnarray*}
\tr(C_{1}X^{\top}AX_{\perp}X_{\perp}^{\top}AX)
&\leq & (1+2\alpha)(\|V^{\top}X_{\perp}\|_{F} + \|X^{\top}V_{\perp}\|_{F})^{2}\\
&= & 4(1+2\alpha)(k - \|V^{\top}X\|_{F}^{2}).
\end{eqnarray*}
Therefore, we now can arrive at
\begin{eqnarray*}
\tr(B_{1}B_{2}^{-1})
&\geq & \tr(C_{1})-\alpha^{2}\tr(C_{1}X^{\top}AX_{\perp}X_{\perp}^{\top}AX)-\alpha^{2}\kappa_{F}^{2}\tr(C_{1})\\
&\geq & \|V^{\top}X\|_{F}^{2} + 2\alpha \tau (\|X^{\top}V\|_{F}^{2} - \|V^{\top}XX^{\top}V\|_{F}^{2})-\\
&& \alpha^{2}(1+2\alpha)(4(k - \|V^{\top}X\|_{F}^{2}) + k^{2}\kappa_{F}^{2}).
\end{eqnarray*}

\end{proof}

\begin{lemma} \label{lemma12}
Follow the notations made in Lemma \ref{lemma13}, assume $A=\frac{1}{L}\sum_{l=1}^{L}A^{(l)}$ with $\max_{l}\|A^{(l)}\|_{2}\leq 1$ (thus $\|A\|_{2}\leq 1$),
and let
\begin{eqnarray*}
W &=& (I-XX^{\top})(A_{t+1}-A)(X-\tilde{X}B)+(I-XX^{\top})\tilde{X}\tilde{X}^{\top}(A_{t+1}-A)\tilde{X}B- \\
&&X\mathrm{skew}(X^{\top}(I-\tilde{X}\tilde{X}^{\top})(A_{t+1}-A)\tilde{X}B)
\end{eqnarray*}
recalling from Section 3.2 of the main paper. Then it holds that
$\mathbb{E}[W|X]=0$ and we can take
\begin{eqnarray*}
\kappa_{2} = 8\quad\textrm{and}\quad\kappa_{F}^{2}=96(k-\|V^{\top}X\|_{F}^{2}+k-\|V^{\top}\tilde{X}\|_{F}^{2}).
\end{eqnarray*}
\end{lemma}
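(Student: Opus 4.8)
The plan is to verify the three assertions of the lemma separately: the unbiasedness $\mathbb{E}[W\mid X]=0$, the uniform spectral bound $\kappa_{2}=8$, and the decaying Frobenius bound $\kappa_{F}^{2}$. For unbiasedness, I would note that conditioning on $\mathcal{F}_{t}$ (equivalently, on $X$) fixes $X$, $\tilde{X}$ and $B$ (the last being a deterministic function of $X$ and $\tilde{X}$), so the only remaining randomness is $y_{t+1}$, drawn uniformly on $\{1,2,\cdots,L\}$; hence $\mathbb{E}[A_{t+1}-A\mid X]=\frac{1}{L}\sum_{l=1}^{L}A^{(l)}-A=0$. Each of the three summands of $W$ is linear in the single factor $(A_{t+1}-A)$ with $\mathcal{F}_{t}$-measurable coefficient matrices, and since $\mathrm{skew}(\cdot)$ is linear, $W$ itself is linear in $(A_{t+1}-A)$, so $\mathbb{E}[W\mid X]=0$. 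For $\kappa_{2}$, I would bound each summand by submultiplicativity of $\|\cdot\|_{2}$ using $\|I-XX^{\top}\|_{2}=\|I-\tilde{X}\tilde{X}^{\top}\|_{2}=\|X\|_{2}=\|\tilde{X}\|_{2}=1$, $\|B\|_{2}=1$ (as $B=Q_{2}Q_{1}^{\top}$ is orthogonal), $\|A_{t+1}-A\|_{2}\le\|A_{t+1}\|_{2}+\|A\|_{2}\le2$, $\|\mathrm{skew}(M)\|_{2}\le\|M\|_{2}$, and $\|X-\tilde{X}B\|_{2}\le\|X\|_{2}+\|\tilde{X}B\|_{2}\le2$; this gives at most $2\cdot2=4$ for the first term and $2$ for each of the other two, so $\|W\|_{2}\le8$.

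For $\kappa_{F}^{2}$ I would pass through $\|W\|_{F}^{2}\le3(\|W_{1}\|_{F}^{2}+\|W_{2}\|_{F}^{2}+\|W_{3}\|_{F}^{2})$ and bound each piece so as to expose the closeness of the column spaces of $X$, $\tilde{X}$ and $V$. Using $\|BC\|_{F}\le\|B\|_{F}\|C\|_{2}$ together with its transposed version $\|BC\|_{F}\le\|B\|_{2}\|C\|_{F}$, $\|\mathrm{skew}(M)\|_{F}\le\|M\|_{F}$, and $\|X(\cdot)\|_{F}=\|(\cdot)\|_{F}$, the three pieces are controlled by $2\|X-\tilde{X}B\|_{F}$, $2\|(I-XX^{\top})\tilde{X}\|_{F}$ and $2\|X^{\top}(I-\tilde{X}\tilde{X}^{\top})\|_{F}$ respectively. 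The key identities are $\|(I-XX^{\top})\tilde{X}\|_{F}^{2}=\|X^{\top}(I-\tilde{X}\tilde{X}^{\top})\|_{F}^{2}=k-\|X^{\top}\tilde{X}\|_{F}^{2}$, together with $\|X-\tilde{X}B\|_{F}^{2}\le2(k-\|X^{\top}\tilde{X}\|_{F}^{2})$, the latter holding because the algorithm's $B$ coincides with the minimizer $B^{\star}$ of Lemma~\ref{lemma5} applied with the matrices $X$ and $\tilde{X}$ (whose product $X^{\top}\tilde{X}$ supplies the SVD factors). Then $k-\|X^{\top}\tilde{X}\|_{F}^{2}$ is converted into the two potential functions via $2(k-\|X^{\top}\tilde{X}\|_{F}^{2})=\|XX^{\top}-\tilde{X}\tilde{X}^{\top}\|_{F}^{2}$, the triangle inequality through $VV^{\top}$, $(a+b)^{2}\le2a^{2}+2b^{2}$, and $\|XX^{\top}-VV^{\top}\|_{F}^{2}=2(k-\|V^{\top}X\|_{F}^{2})$ (likewise for $\tilde{X}$), yielding $k-\|X^{\top}\tilde{X}\|_{F}^{2}\le2(k-\|V^{\top}X\|_{F}^{2})+2(k-\|V^{\top}\tilde{X}\|_{F}^{2})$. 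Substituting back, $\|W_{1}\|_{F}^{2}$, $\|W_{2}\|_{F}^{2}$, $\|W_{3}\|_{F}^{2}$ are each at most a constant ($16$, $8$, $8$ respectively) times $(k-\|V^{\top}X\|_{F}^{2})+(k-\|V^{\top}\tilde{X}\|_{F}^{2})$, and the factor-$3$ triangle inequality gives the claimed $3(16+8+8)=96$.

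The only non-mechanical step, and hence the main obstacle, is the last part: one must realize that the quantity naturally produced by the three summands, namely $k-\|X^{\top}\tilde{X}\|_{F}^{2}$, is not itself a potential function and has to be re-expressed through the orthogonal-projector triangle inequality via $VV^{\top}$, and the constants must be tracked carefully so that the three pieces recombine to exactly $96$. A secondary point requiring care is invoking Lemma~\ref{lemma5} with the correct assignment $C=X$, $D=\tilde{X}$, so that the algorithm's $B=Q_{2}Q_{1}^{\top}$ is genuinely the minimizer $B^{\star}$ and the bound $\|X-\tilde{X}B\|_{F}^{2}\le2(k-\|X^{\top}\tilde{X}\|_{F}^{2})$ is legitimately available; everything else follows from the matrix facts recalled earlier and the notational setup of Lemma~\ref{lemma13}.
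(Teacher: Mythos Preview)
Your proposal is correct and arrives at the same constants as the paper; the treatment of $\mathbb{E}[W\mid X]=0$ and of $\kappa_{2}=8$ is identical to the paper's. The route to $\kappa_{F}^{2}$, however, is organized differently. The paper bounds $\|W\|_{F}^{2}\le 12(\|X-\tilde{X}B\|_{F}^{2}+\|X_{\perp}^{\top}\tilde{X}\|_{F}^{2}+\|X^{\top}\tilde{X}_{\perp}\|_{F}^{2})$ and then handles the three pieces by two distinct tricks: for $\|X-\tilde{X}B\|_{F}^{2}$ it chains two Procrustes alignments through $V$, writing $\|X-\tilde{X}B\|_{F}\le\|X-VC^{\star}\|_{F}+\|VC^{\star}-\tilde{X}D^{\star}\|_{F}$ and applying Lemma~\ref{lemma5} twice to land directly on $4(k-\|V^{\top}X\|_{F}^{2}+k-\|V^{\top}\tilde{X}\|_{F}^{2})$; for the other two pieces it inserts $I=VV^{\top}+V_{\perp}V_{\perp}^{\top}$ inside the product to obtain $2(\cdot)$ each. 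You instead funnel all three pieces through the single intermediate quantity $k-\|X^{\top}\tilde{X}\|_{F}^{2}$ (using Lemma~\ref{lemma5} once, with $C=X$, $D=\tilde{X}$, which is legitimate since the algorithm's $B$ is exactly that $B^{\star}$), and then convert this once and for all via the projector identity $2(k-\|X^{\top}\tilde{X}\|_{F}^{2})=\|XX^{\top}-\tilde{X}\tilde{X}^{\top}\|_{F}^{2}$ and the triangle inequality through $VV^{\top}$. Both routes yield $12(4+2+2)=96$; yours is more uniform and makes the role of the projector distance explicit, while the paper's double-Procrustes step avoids introducing the projector identity but is slightly more ad hoc.
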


\begin{proof}
Note that $\mathbb{E}[A_{t+1}]=A$ and $\mathbb{E}[B|X]=B$. Then we have
\begin{eqnarray*}
\mathbb{E}[W|X] &=&
(I-XX^{\top})(\mathbb{E}[A_{t+1}]-A)(X-\tilde{X}B)+(I-XX^{\top})\tilde{X}\tilde{X}^{\top}(\mathbb{E}[A_{t+1}]-A)\tilde{X}B- \\
&&X\mathrm{skew}(X^{\top}(I-\tilde{X}\tilde{X}^{\top})(\mathbb{E}[A_{t+1}]-A)\tilde{X}B)\\
&=& 0.
\end{eqnarray*}
We now upper bound the spectral norm and Frobenius norm of $W$. First we rewrite it as
\begin{eqnarray*}
W &=& X_{\perp}X_{\perp}^{\top}(A_{t+1}-A)(X-\tilde{X}B)+X_{\perp}X_{\perp}^{\top}\tilde{X}\tilde{X}^{\top}(A_{t+1}-A)\tilde{X}B- \\
&&X\mathrm{skew}(X^{\top}\tilde{X}_{\perp}\tilde{X}_{\perp}^{\top}(A_{t+1}-A)\tilde{X}B).
\end{eqnarray*}
Noting that $B^{\top}B=BB^{\top}=I$, we get
\begin{eqnarray*}
\|W\|_{2} &\leq & \|X_{\perp}X_{\perp}^{\top}(A_{t+1}-A)(X-\tilde{X}B)\|_{2}+\|X_{\perp}X_{\perp}^{\top}\tilde{X}\tilde{X}^{\top}(A_{t+1}-A)\tilde{X}B\|_{2}+ \\
&&\|X\mathrm{skew}(X^{\top}\tilde{X}_{\perp}\tilde{X}_{\perp}^{\top}(A_{t+1}-A)\tilde{X}B)\|_{2}\\
&\leq & (\|A_{t+1}\|_{2}+\|A\|_{2})(\|X\|_{2}+\|\tilde{X}B\|_{2})+2(\|A_{t+1}\|_{2}+\|A\|_{2})\\
&\leq & 8 \triangleq \kappa_{2},
\end{eqnarray*}
while
\begin{eqnarray*}
\|W\|_{F}^{2} &\leq & (\|X_{\perp}X_{\perp}^{\top}(A_{t+1}-A)(X-\tilde{X}B)\|_{F}+\|X_{\perp}X_{\perp}^{\top}\tilde{X}\tilde{X}^{\top}(A_{t+1}-A)\tilde{X}B\|_{F}+ \\
&&\|X\mathrm{skew}(X^{\top}\tilde{X}_{\perp}\tilde{X}_{\perp}^{\top}(A_{t+1}-A)\tilde{X}B)\|_{F})^{2}\\
&\leq & (\|X_{\perp}X_{\perp}^{\top}(A_{t+1}-A)\|_{2}\|X-\tilde{X}B\|_{F}+\|X_{\perp}\|_{2}\|X_{\perp}^{\top}\tilde{X}\|_{F}\|\tilde{X}^{\top}(A_{t+1}-A)\tilde{X}B\|_{2}+ \\
&&\|X\|_{2}\|X^{\top}\tilde{X}_{\perp}\|_{F}\|\tilde{X}_{\perp}^{\top}(A_{t+1}-A)\tilde{X}B\|_{2})^{2}\\
&\leq & 4(\|X-\tilde{X}B\|_{F}+\|X_{\perp}^{\top}\tilde{X}\|_{F}+\|X^{\top}\tilde{X}_{\perp}\|_{F})^{2}\\
&\leq & 12(\|X-\tilde{X}B\|_{F}^{2}+\|X_{\perp}^{\top}\tilde{X}\|_{F}^{2}+\|X^{\top}\tilde{X}_{\perp}\|_{F}^{2}).
\end{eqnarray*}
To proceed further, each of three items in above bracket needs to upper bounded. To this end, note that $B=\mathrm{arg}\min_{D}\|X-\tilde{X}D\|_{F}^{2}$ by the definition of $B$ and Lemma \ref{lemma5}. Then if we let $C^{\star}=\mathrm{arg}\min_{C}\|X - VC\|_{F}^{2}$ and $D^{\star}=\mathrm{arg}\min_{D}\|VC^{\star} - \tilde{X}D\|_{F}^{2}$, we can get
\begin{eqnarray*}
\|X-\tilde{X}B\|_{F}^{2} &\leq & \|X-\tilde{X}D^{\star}\|_{F}^{2}\quad\textrm{(Lemma \ref{lemma5})}\\
&= & \|X-VC^{\star} + VC^{\star}- \tilde{X}D^{\star}\|_{F}^{2}\\
&\leq & (\|X-VC^{\star}\|_{F} + \|VC^{\star}- \tilde{X}D^{\star}\|_{F})^{2}\\
&\leq & 2(\|X-VC^{\star}\|_{F}^{2} + \|VC^{\star}- \tilde{X}D^{\star}\|_{F}^{2})\\
&\leq & 4(k-\|X^{\top}V\|_{F}^{2} + k-\|C^{\star^{\top}}V^{\top}\tilde{X}\|_{F}^{2}) \quad\textrm{(Lemma \ref{lemma5})}\\
&= & 4(k-\|V^{\top}X\|_{F}^{2} + k-\|V^{\top}\tilde{X}\|_{F}^{2}). \quad\textrm{(orthogonal invariance)}
\end{eqnarray*}
For other two items, noting that $I=VV^{\top}+V_{\perp}V_{\perp}^{\top}$, we have
\begin{eqnarray*}
\|X_{\perp}^{\top}\tilde{X}\|_{F}^{2}&=& \|X_{\perp}^{\top}(VV^{\top}+V_{\perp}V_{\perp}^{\top})\tilde{X}\|_{F}^{2}\\
&\leq & (\|X_{\perp}^{\top}VV^{\top}\tilde{X}\|_{F}+\|X_{\perp}^{\top}V_{\perp}V_{\perp}^{\top}\tilde{X}\|_{F})^{2}\\
&\leq & (\|X_{\perp}^{\top}V\|_{F}\|V^{\top}\tilde{X}\|_{2}+\|X_{\perp}^{\top}V_{\perp}\|_{2}\|V_{\perp}^{\top}\tilde{X}\|_{F})^{2}\\
&\leq & (\|X_{\perp}^{\top}V\|_{F}\|V^{\top}\|_{2}\|\tilde{X}\|_{2}+\|X_{\perp}^{\top}\|_{2}\|V_{\perp}\|_{2}\|V_{\perp}^{\top}\tilde{X}\|_{F})^{2}\\
&= & (\|X_{\perp}^{\top}V\|_{F}+\|V_{\perp}^{\top}\tilde{X}\|_{F})^{2}\\
&\leq & 2(\|X_{\perp}^{\top}V\|_{F}^{2}+\|V_{\perp}^{\top}\tilde{X}\|_{F}^{2})\\
&\leq & 2(k-\|V^{\top}X\|_{F}^{2}+k-\|V^{\top}\tilde{X}\|_{F}^{2}),
\end{eqnarray*}
and similarly
\begin{eqnarray*}
\|X^{\top}\tilde{X}_{\perp}\|_{F}^{2}\leq 2(k-\|V^{\top}X\|_{F}^{2}+k-\|V^{\top}\tilde{X}\|_{F}^{2}).
\end{eqnarray*}
Therefore, we get
\begin{eqnarray*}
\|W\|_{F}^{2}\leq 96(k-\|V^{\top}X\|_{F}^{2}+k-\|V^{\top}\tilde{X}\|_{F}^{2})\triangleq \kappa_{F}^{2}.
\end{eqnarray*}

\end{proof}

\begin{lemma} \label{lemma13}
Assume $A$ is an $n\times n$ symmetric matrix with the eigenvalues $\lambda_{1}\geq\lambda_{2}\geq\cdots\geq\lambda_{n}$ and the eigen-gap $\tau=\lambda_{k}-\lambda_{k+1}>0$. And it could be written as $A=\frac{1}{L}\sum_{l=1}^{L}A^{(l)}$ with $\max_{l}\|A^{(l)}\|_{2}\leq 1$ (thus $\|A\|_{2}\leq 1$). Let $W$ be an $n\times k$ stochastic zero-mean matrix (i.e., $\mathbb{E}[W]=0$) with $\|W\|_{2}\leq\kappa_{2}$ and $\|W\|_{2}\leq\kappa_{F}$ almost surely. Let $X\in\mathrm{St}(n,k)$ and define
$$
Y = (I+\alpha (I-XX^{\top})A)X + \alpha W,\quad X' = Y(Y^{\top}Y)^{-1/2}
$$
for some $\alpha\in [0,\frac{1}{4\max\{0,\kappa_{F}\}}]$. If $V\in\mathrm{St}(n,k)$ consisting of $A$'s $k$ eigenvectors corresponding to eigenvalues $\lambda_{1}\geq\lambda_{2}\geq\cdots\geq\lambda_{k}$ and accordingly $V_{\perp}\in\mathrm{St}(n,n-k)$ consisting of $A$'s $n-k$ eigenvectors corresponding to eigenvalues $\lambda_{k+1}\geq\lambda_{k+2}\geq\cdots\geq\lambda_{n}$, 
then it holds that
\begin{eqnarray*}
\mathbb{E}[\|V^{\top}X'\|_{F}^{2}]&\geq &
\|V^{\top}X\|_{F}^{2} + 2\alpha \tau (\|V^{\top}X\|_{F}^{2} - \|X^{\top}VV^{\top}X\|_{F}^{2})-\\
&& \alpha^{2}(1+2\alpha)(4(k - \|V^{\top}X\|_{F}^{2}) + k^{2}\kappa_{F}^{2}) - \frac{200}{27}\alpha^{2}(31+10\kappa_{2})\kappa_{F}^{2}
\end{eqnarray*}
\end{lemma}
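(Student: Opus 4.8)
The plan is to rewrite $\|V^{\top}X'\|_{F}^{2}$ as a trace, expand $Y$ around its deterministic part $Z\triangleq(I+\alpha(I-XX^{\top})A)X = X + \alpha X_{\perp}X_{\perp}^{\top}AX$, and control the stochastic perturbation with Lemmas~\ref{lemma1}, \ref{lemma3}, \ref{lemma7} and \ref{lemma11}. First, since $X'=Y(Y^{\top}Y)^{-1/2}$ and $Y^{\top}Y\succ 0$ for $\alpha$ small, the cyclic property of the trace gives $\|V^{\top}X'\|_{F}^{2}=\tr\big(Y^{\top}VV^{\top}Y(Y^{\top}Y)^{-1}\big)$, and all expectations below are finite because $W$, hence $(Y^{\top}Y)^{-1}$, is bounded almost surely. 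A direct computation using $X^{\top}X_{\perp}=0$ shows $Z^{\top}Z=I+\alpha^{2}X^{\top}AX_{\perp}X_{\perp}^{\top}AX$, which motivates defining
\begin{eqnarray}
B_{1}&\triangleq& Z^{\top}VV^{\top}Z = X^{\top}(I+\alpha AX_{\perp}X_{\perp}^{\top})VV^{\top}(I+\alpha X_{\perp}X_{\perp}^{\top}A)X,\label{eqn.B1_def}\\
B_{2}&\triangleq& Z^{\top}Z+\alpha^{2}\kappa_{F}^{2}I = I+\alpha^{2}X^{\top}AX_{\perp}X_{\perp}^{\top}AX+\alpha^{2}\kappa_{F}^{2}I.\label{eqn.B2_def}
\end{eqnarray}
Substituting $Y=Z+\alpha W$ yields $Y^{\top}VV^{\top}Y=B_{1}+Z_{1}$ with $Z_{1}=\alpha(Z^{\top}VV^{\top}W+W^{\top}VV^{\top}Z)+\alpha^{2}W^{\top}VV^{\top}W$, and $Y^{\top}Y=Z^{\top}Z+\alpha(Z^{\top}W+W^{\top}Z)+\alpha^{2}W^{\top}W$.

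The crucial step is to reshape both expressions so that the fluctuation around the fixed matrices $B_{1},B_{2}$ is genuinely zero-mean. On the numerator side $\mathbb{E}[Z_{1}]=\alpha^{2}\mathbb{E}[W^{\top}VV^{\top}W]\succeq 0$, so with $\hat{Z}_{1}\triangleq Z_{1}-\mathbb{E}[Z_{1}]$ (zero-mean), Lemma~\ref{lemma1} applied with $D=\mathbb{E}[Z_{1}]\succeq 0$ lets me replace $B_{1}+Z_{1}$ by $B_{1}+\hat{Z}_{1}$ at the cost of only decreasing the trace. On the denominator side the point is that $W^{\top}W\preceq\|W\|_{F}^{2}I\preceq\kappa_{F}^{2}I$ holds \emph{almost surely}, not merely in expectation; hence, setting $\hat{Z}_{2}\triangleq\alpha(Z^{\top}W+W^{\top}Z)$ (zero-mean),
\begin{eqnarray*}
Y^{\top}Y = B_{2}+\hat{Z}_{2}-\alpha^{2}(\kappa_{F}^{2}I-W^{\top}W)\preceq B_{2}+\hat{Z}_{2}\quad\textrm{a.s.}
\end{eqnarray*}
This is precisely what the extra $\alpha^{2}\kappa_{F}^{2}I$ in $B_{2}$ is for, and it keeps $\hat{Z}_{2}$ free of any $\sqrt{k}$ factor. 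Since $Y^{\top}Y\succ 0$ and $B_{2}+\hat{Z}_{2}\succeq Y^{\top}Y$, Lemma~\ref{lemma7} gives $\tr(Y^{\top}VV^{\top}Y(Y^{\top}Y)^{-1})\geq\tr(Y^{\top}VV^{\top}Y(B_{2}+\hat{Z}_{2})^{-1})$, and combining with the numerator reduction, $\mathbb{E}[\|V^{\top}X'\|_{F}^{2}]\geq\mathbb{E}[\tr((B_{1}+\hat{Z}_{1})(B_{2}+\hat{Z}_{2})^{-1})]$.

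Next I would invoke Lemma~\ref{lemma3} with $(B_{1},B_{2},Z_{1},Z_{2})\mapsto(B_{1},B_{2},\hat{Z}_{1},\hat{Z}_{2})$ to obtain $\mathbb{E}[\|V^{\top}X'\|_{F}^{2}]\geq\tr(B_{1}B_{2}^{-1})-\beta^{2}(1+\gamma/\delta)/\delta^{2}$, and then use Lemma~\ref{lemma11}, which lower bounds $\tr(B_{1}B_{2}^{-1})$ by exactly the first three terms of the asserted inequality. Checking the hypotheses of Lemma~\ref{lemma3} is elementary: using $\|Z\|_{2}\leq 1+\alpha$, $\|W\|_{2}\leq\kappa_{2}$, $\|W\|_{F}\leq\kappa_{F}$, $\|W^{\top}VV^{\top}W\|_{F}\leq\|V^{\top}W\|_{F}^{2}\leq\kappa_{F}^{2}$, and the step-size restriction (which gives $\alpha\kappa_{F}\leq\tfrac14$ and, as in Lemma~\ref{lemma11}, $\alpha\leq\tfrac14$), one obtains $\|\hat{Z}_{2}\|_{F}\leq 2\alpha(1+\alpha)\kappa_{F}$, $\|\hat{Z}_{1}\|_{F}\leq 2\alpha(1+\alpha)\kappa_{F}+2\alpha^{2}\kappa_{F}^{2}$, $\|B_{1}\|_{2}\leq(1+\alpha)^{2}$, and $B_{2}+\nu\hat{Z}_{2}\succeq(1-2\alpha(1+\alpha)\kappa_{F})I\succ 0$ for all $\nu\in[0,1]$; thus $\beta=O(\alpha\kappa_{F})$, $\gamma=O(1)$, $\delta$ is bounded away from $0$, and $\beta^{2}(1+\gamma/\delta)/\delta^{2}=O(\alpha^{2}\kappa_{F}^{2})$.

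The one genuinely laborious part I anticipate is this final estimate: turning $\beta^{2}(1+\gamma/\delta)/\delta^{2}=O(\alpha^{2}\kappa_{F}^{2})$ into the explicit $\tfrac{200}{27}\alpha^{2}(31+10\kappa_{2})\kappa_{F}^{2}$, i.e.\ pinning down $\delta$ and tracking how the cross terms ($\alpha^{2}\kappa_{F}^{2}$, $\alpha^{3}\kappa_{F}^{2}$, $\alpha^{2}\kappa_{F}\kappa_{2}$, $\ldots$) are reabsorbed using $\alpha\kappa_{F}\leq\tfrac14$ and $\alpha\leq\tfrac14$, and then merging with the Lemma~\ref{lemma11} bound. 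This is bookkeeping rather than a conceptual obstacle — all the structural content sits in the decomposition above and in the two sign facts, $\mathbb{E}[Z_{1}]\succeq 0$ and $W^{\top}W\preceq\kappa_{F}^{2}I$ a.s., that make the perturbations zero-mean, together with Lemmas~\ref{lemma1}, \ref{lemma3}, \ref{lemma7} and \ref{lemma11}.
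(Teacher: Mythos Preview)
Your approach is correct and essentially identical to the paper's: the same $B_{1},B_{2}$, the same chain of Lemmas~\ref{lemma7}, \ref{lemma1}, \ref{lemma3}, \ref{lemma11}, and your $\hat Z_{2}$ coincides exactly with the paper's $Z_{2}$. The only cosmetic difference is that the paper drops $\alpha^{2}W^{\top}VV^{\top}W\succeq 0$ from the numerator \emph{pointwise} via Lemma~\ref{lemma1} rather than centering $Z_{1}$ as you do; this keeps the numerator fluctuation purely linear in $W$, gives $\beta=\tfrac{5}{2}\alpha\kappa_{F}$, $\delta=\tfrac{3}{8}$, $\gamma=\tfrac{5(5+2\kappa_{2})}{16}$, and lands exactly on the stated constant $\tfrac{200}{27}(31+10\kappa_{2})$, whereas your $\hat Z_{1}$ carries the extra $\alpha^{2}(W^{\top}VV^{\top}W-\mathbb{E}[W^{\top}VV^{\top}W])$ and would yield a slightly larger one.
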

\begin{proof}
First, we have
\begin{eqnarray*}
  \|V^{\top}X'\|_{F}^{2} &=& \tr(X'^{\top}VV^{\top}X') \\
     &=& \tr((Y^{\top}Y)^{-1/2}Y^{\top}VV^{\top}Y(Y^{\top}Y)^{-1/2}) \\
     &=& \tr(Y^{\top}VV^{\top}Y(Y^{\top}Y)^{-1}).
\end{eqnarray*}
Using the definition of $Y$ and the fact $XX^{\top}+X_{\perp}X_{\perp}^{\top}=I$, we have the expansion $Y^{\top}VV^{\top}Y = \tilde{B}_{1} + Z_{1}\succeq 0$ where
\begin{eqnarray*}
\tilde{B}_{1} &=& X^{\top}(I+\alpha AX_{\perp}X_{\perp}^{\top})VV^{\top}(I+\alpha X_{\perp}X_{\perp}^{\top}A)X + \alpha^{2}W^{\top}VV^{\top}W\succ 0,\\
Z_{1} &=& \alpha X^{\top}(I+\alpha AX_{\perp}X_{\perp}^{\top})VV^{\top}W + \alpha W^{\top}VV^{\top}(I+\alpha X_{\perp}X_{\perp}^{\top}A)X.
\end{eqnarray*}
Similarly, $Y^{\top}Y$ can be written as $Y^{\top}Y = \tilde{B}_{2} + Z_{2}$ with
\begin{eqnarray*}
\tilde{B}_{2} &=& X^{\top}(I+\alpha AX_{\perp}X_{\perp}^{\top})(I+\alpha X_{\perp}X_{\perp}^{\top}A)X + \alpha^{2}W^{\top}W\\
&=& I + \alpha^{2}X^{\top}AX_{\perp}X_{\perp}^{\top}AX + \alpha^{2}W^{\top}W\succ 0,\\
Z_{2} &=& \alpha X^{\top}(I+\alpha AX_{\perp}X_{\perp}^{\top})W + \alpha W^{\top}(I+\alpha X_{\perp}X_{\perp}^{\top}A)X.
\end{eqnarray*}
Then we get
\begin{eqnarray*}
\|V^{\top}X'\|_{F}^{2} = \tr((\tilde{B}_{1} + Z_{1})(\tilde{B}_{2} + Z_{2})^{-1}).
\end{eqnarray*}
Note that $W^{\top}W\preceq \lambda_{\max}(W^{\top}W)I=\|W\|_{2}^{2}I\leq \|W\|_{F}^{2}I\leq \kappa_{F}^{2}I$. Thus,
\begin{eqnarray}
\tilde{B}_{2} \preceq  I + \alpha^{2}X^{\top}AX_{\perp}X_{\perp}^{\top}AX + \alpha^{2}\kappa_{F}^{2}I \;\; \triangleq B_{2} \label{eqn.B2_def}
\end{eqnarray}
In addition, let
\begin{eqnarray}
B_{1} \triangleq X^{\top}(I+\alpha AX_{\perp}X_{\perp}^{\top})VV^{\top}(I+\alpha X_{\perp}X_{\perp}^{\top}A)X. \label{eqn.B1_def}
\end{eqnarray}
and note that $\alpha^{2}W^{\top}VV^{\top}W\succeq 0$. Then by Lemma \ref{lemma7} and \ref{lemma1}, we obtain
\begin{eqnarray*}
\|V^{\top}X'\|_{F}^{2} &\geq & \tr((B_{1} + \alpha^{2}W^{\top}VV^{\top}W + Z_{1})(B_{2} + Z_{2})^{-1})\\
&\geq & \tr((B_{1} + Z_{1})(B_{2} + Z_{2})^{-1})
\end{eqnarray*}
We now would like to apply Lemma\footnote{Note that this lemma was mistakenly applied in \cite{shamir2015fast} since $B_{1}$ and $B_{2}$ are not fixed. It could be rectified by what we do here.} \ref{lemma3} for removing $Z_{1}$ and $Z_{2}$. Doing so needs to meet the conditions of Lemma \ref{lemma3}. In fact,
\begin{itemize}
  \item $Z_{1}$ and $Z_{2}$ are stochastic zero-mean: $Z_{1}$ and $Z_{2}$ are linear functions of the stochastic zero-mean matrix $W$. Thus they are stochastic zero-mean as well.
  \item $B_{1}$ and $B_{2}$ are fixed. This is true since no stochastic quantities are involved.
  \item $B_{2}+\nu Z_{2}\succeq\frac{3}{8} I$ for all $\nu\in [0,1]$. It's easy to see that $B_{2}\succeq 0$, and meanwhile since $\|A\|_{2}\leq 1$, $\|X\|_{2}=\|X_{\perp}\|_{2}=1$, $\alpha\kappa_{F}\leq\frac{1}{4}$ and $\alpha\leq\frac{1}{4}$, we get
  \begin{eqnarray*}
  \|Z_{2}\|_{2} &\leq & 2\alpha\|W^{\top}(I+\alpha X_{\perp}X_{\perp}^{\top}A)X\|_{2}\\
  &\leq & 2\alpha\|W\|_{2}(\|I\|_{2}+\alpha \|X_{\perp}\|_{2}\|X_{\perp}^{\top}\|_{2}\|A\|_{2})\|X\|_{2}\\
  &\leq & 2\alpha\|W\|_{F}(1+\alpha) \leq 2\alpha\kappa_{F}(1+\alpha)\leq\frac{1}{2}(1+\alpha)\leq\frac{5}{8}.
  \end{eqnarray*}
  Note that $Z_{2}$ is symmetric and $\rho(Z_{2})\leq \|Z_{2}\|_{2}$. We thus could write $Z_{2}\succeq -\rho(Z_{2})I\succeq -\|Z_{2}\|_{2}I$. Then $B_{2}+\nu Z_{2}\succeq (1 - v\|Z_{2}\|_{2})I\succeq (1 - \|Z_{2}\|_{2})I\succeq\frac{3}{8}I$.
  \item $\max\{\|Z_{1}\|_{F},\|Z_{2}\|_{F}\}\leq\frac{5}{2}\alpha\kappa_{F}$. Note that $\|V\|_{2}=1$. Then
  \begin{eqnarray*}
  \|Z_{1}\|_{F} &\leq & 2\alpha\|W^{\top}VV^{\top}(I+\alpha X_{\perp}X_{\perp}^{\top}A)X\|_{F}\\
  &\leq & 2\alpha\|W^{\top}\|_{F}\|V\|_{2}\|V^{\top}\|_{2}(\|I\|_{2}+\alpha \|X_{\perp}\|_{2}\|X_{\perp}^{\top}\|_{2}\|A\|_{2})\|X\|_{2}\\
  &\leq & 2\alpha(1+\alpha)\kappa_{F}\leq 2\alpha(1+\frac{1}{4})\kappa_{F}=\frac{5}{2}\alpha\kappa_{F}.
  \end{eqnarray*}
  Similarly, we could get $\|Z_{1}\|_{F}\leq \frac{5}{2}\alpha\kappa_{F}$.
  \item $\|B_{1}+\alpha Z_{1}\|_{2}\leq \frac{5(5+2\kappa_{2})}{16}$. Note that similar to right above, we could have $\|Z_{1}\|_{2}\leq \frac{5}{2}\alpha\kappa_{2}$. Then
  \begin{eqnarray*}
      \|B_{1}\|_{2} &\leq & \|V^{\top}(I+\alpha X_{\perp}X_{\perp}^{\top}A)X\|_{2}^{2}\\
      &\leq & (\|V^{\top}\|_{2}(\|I\|_{2}+\alpha \|X_{\perp}\|_{2}\|X_{\perp}^{\top}\|_{2}\|A\|_{2})\|X\|_{2})^{2}\leq (1+\alpha)^{2}.
  \end{eqnarray*}
  Thus,
  \begin{eqnarray*}
  \|B_{1}+\alpha Z_{1}\|_{2} &\leq & \|B_{1}\|_{2}+\alpha \|Z_{1}\|_{2}\leq (1+\alpha)^{2} + \frac{5}{2}\alpha\kappa_{2}\\
  &\leq & (1+\frac{1}{4})^{2} + \frac{5}{8}\kappa_{2}=\frac{5(5+2\kappa_{2})}{16}.
  \end{eqnarray*}
\end{itemize}

Thus, we have $\delta = \frac{3}{8}$, $\beta=\frac{5}{2}\alpha\kappa_{F}$ and $\gamma=\frac{5(5+2\kappa_{2})}{16}$. Then by Lemma \ref{lemma3}, we get
\begin{eqnarray*}
\mathbb{E}[\|V^{\top}X'\|_{F}^{2}]&\geq & \mathbb{E}[\tr((B_{1} + Z_{1})(B_{2} + Z_{2})^{-1})]\\
&\geq & \tr(B_{1}B_{2}^{-1})
- \frac{400}{9}\alpha^{2}\kappa_{F}^{2}(1+\frac{5(5+2\kappa_{2})}{6})\\
&= &\tr(B_{1}B_{2}^{-1}) -\frac{200}{27}\alpha^{2}(31+10\kappa_{2})\kappa_{F}^{2}.
\end{eqnarray*}
Moreover, by Lemma \ref{lemma11}, we arrive at
\begin{eqnarray*}
\mathbb{E}[\|V^{\top}X'\|_{F}^{2}]&\geq &
\|V^{\top}X\|_{F}^{2} + 2\alpha \tau (\|V^{\top}X\|_{F}^{2} - \|X^{\top}VV^{\top}X\|_{F}^{2})-\\
&& \alpha^{2}(1+2\alpha)(4(k - \|V^{\top}X\|_{F}^{2}) + k^{2}\kappa_{F}^{2}) -\frac{200}{27}\alpha^{2}(31+10\kappa_{2})\kappa_{F}^{2}.
\end{eqnarray*}

\end{proof}

\begin{lemma} \label{lemma14}
Let $A$ and $W$ be defined as by our SVRRG-EIGS algorithm in Section 3.2 of the main paper. Assume that $A$ has the eigen-decomposition as defined in Section 2.1 of the main paper, the eigen-gap $\tau=\lambda_{k}-\lambda_{k+1}>0$, and $\max_{l}\|A^{(l)}\|_{2}\leq 1$. Further suppose that $\alpha=\mu\tau\in (0,\frac{1}{32\sqrt{3k}}]$ and $\|V^{\top}X\|_{F}^{2}\geq k-\frac{1}{2}$. Then it holds that
\begin{eqnarray*}
\mathbb{E}[k-\|V^{\top}X'\|_{F}^{2}]
&\leq &
(k-\|V^{\top}X\|_{F}^{2})(1-c_{1}\mu\tau^{2})+c_{2}\mu^{2}\tau^{2}(k-\|V^{\top}\tilde{X}\|_{F}^{2}),
\end{eqnarray*}
where the expectation is taken with respect to the random $y_{t+1}$ for $X'=X^{(t+1)}$ conditioned on $X=X^{(t)}$, and in addition $c_{1} = 2(\frac{1}{8}-2\mu(1+2\mu\tau)(1+24k^{2})-\frac{118400}{3}\mu)>0$ for any $\mu\in(0,c_{0})$ with
$$
c_{0}=\min\{\frac{1}{32\sqrt{3k\tau^{2}}},\frac{1}{c_{1}\tau^{2}},\frac{-(118406+144k^{2})+\sqrt{(118406+144k^{2})^{2}+18\tau(1+24k^{2})}}{24\tau(1+24k^{2})}\}>0,
$$
and $c_{2} = 96(k^{2}(1+2\mu\tau)+823)$.
\end{lemma}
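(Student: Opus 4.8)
The plan is to feed the explicit moment bounds of Lemma~\ref{lemma12} into the one-iteration estimate of Lemma~\ref{lemma13}, to convert the quadratic ``gap'' term into a genuine contraction in $k-\|V^{\top}X\|_{F}^{2}$ via Lemma~\ref{lemma4}, and finally to absorb all absolute constants into $c_{1},c_{2}$ after setting $\alpha=\mu\tau$.

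First I would check that Lemma~\ref{lemma13} applies. The SVRRG-EIGS iterate is exactly $X'=Y(Y^{\top}Y)^{-1/2}$ with $Y=(I+\alpha(I-XX^{\top})A)X+\alpha W$ (using $X^{\top}\tilde{G}(y_{t+1},X)=0$, so that $Y^{\top}Y=I+\alpha^{2}\tilde{G}^{\top}\tilde{G}$, as in the proof of Lemma~\ref{lemma6}), and Lemma~\ref{lemma12} supplies $\mathbb{E}[W\,|\,X]=0$, $\kappa_{2}=8$, and $\kappa_{F}^{2}=96(b+\tilde b)$, where $b\triangleq k-\|V^{\top}X\|_{F}^{2}$ and $\tilde b\triangleq k-\|V^{\top}\tilde X\|_{F}^{2}$. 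Since $b\le\frac12$ and $\tilde b\le k$, we have $b+\tilde b\le 2k$, so $\alpha\kappa_{F}=\alpha\sqrt{96(b+\tilde b)}\le\frac{\sqrt{192k}}{32\sqrt{3k}}=\frac14$ and likewise $\alpha\le\frac14$; thus Lemma~\ref{lemma13} is in force. Plugging $\kappa_{2}=8$ and $\kappa_{F}^{2}=96(b+\tilde b)$ into its conclusion and subtracting both sides from $k$ gives an upper bound on $\mathbb{E}[b']$, with $b'\triangleq k-\|V^{\top}X'\|_{F}^{2}$, that is affine in $b$ and $\tilde b$ with $O(\alpha^{2})$ coefficients, except for the term $-2\alpha\tau\big(\|V^{\top}X\|_{F}^{2}-\|X^{\top}VV^{\top}X\|_{F}^{2}\big)$.

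Next I would apply Lemma~\ref{lemma4} to the $k\times k$ matrix $B=V^{\top}X$, which satisfies $\|B\|_{2}\le 1$; multiplying its conclusion by $\|V^{\top}X\|_{F}^{2}$ yields $\|V^{\top}X\|_{F}^{2}-\|X^{\top}VV^{\top}X\|_{F}^{2}\ge\frac{\sigma^{2}}{k}\|V^{\top}X\|_{F}^{2}\,b$ with $\sigma=\sigma_{\min}(V^{\top}X)$. Because the squared singular values of $V^{\top}X$ are each at most $1$ and sum to $\|V^{\top}X\|_{F}^{2}\ge k-\frac12$, the smallest obeys $\sigma^{2}\ge\frac12$; together with $\|V^{\top}X\|_{F}^{2}\ge k/2$ this gives $\frac{\sigma^{2}}{k}\|V^{\top}X\|_{F}^{2}\ge\frac18$, so the gap term contributes at most $-\frac{\alpha\tau}{4}b=-\frac{\mu\tau^{2}}{4}b$. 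Substituting $\alpha=\mu\tau$ throughout and regrouping, the $\tilde b$-terms (which arise only from $k^{2}\kappa_{F}^{2}$ and $(31+10\kappa_{2})\kappa_{F}^{2}$) collapse to $96\mu^{2}\tau^{2}\big(k^{2}(1+2\mu\tau)+\frac{7400}{9}\big)\tilde b\le c_{2}\mu^{2}\tau^{2}\tilde b$ with $c_{2}=96(k^{2}(1+2\mu\tau)+823)$, while the $b$-terms become $b\big(1-\frac{\mu\tau^{2}}{4}+4\mu^{2}\tau^{2}(1+2\mu\tau)(1+24k^{2})+\frac{236800}{3}\mu^{2}\tau^{2}\big)$, which is exactly $b(1-c_{1}\mu\tau^{2})$ for $c_{1}=2\big(\frac18-2\mu(1+2\mu\tau)(1+24k^{2})-\frac{118400}{3}\mu\big)$.

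The delicate point is establishing $c_{1}>0$. The inequality $\frac18-2\mu(1+2\mu\tau)(1+24k^{2})-\frac{118400}{3}\mu>0$ is a downward quadratic in $\mu$, namely $4\tau(1+24k^{2})\mu^{2}+\big(2(1+24k^{2})+\frac{118400}{3}\big)\mu-\frac18<0$; its unique positive root, after clearing the factor of $3$, is precisely the third entry in the definition of $c_{0}$, so every $\mu\in(0,c_{0})$ forces $c_{1}>0$. The remaining two entries of the $\min$ in $c_{0}$ are exactly what make $\alpha\le\frac{1}{32\sqrt{3k}}$ hold (the first) and keep the contraction factor $1-c_{1}\mu\tau^{2}$ positive (the second, $\mu<\frac{1}{c_{1}\tau^{2}}$); the mild circularity that $c_{1}$ itself depends on $\mu$ through $1+2\mu\tau$ is harmless. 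Beyond this, the argument is pure bookkeeping of absolute constants: tedious, but routine.
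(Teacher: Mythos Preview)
Your proposal is correct and follows essentially the same route as the paper: verify the hypotheses of Lemma~\ref{lemma13} via the moment bounds of Lemma~\ref{lemma12}, invoke Lemma~\ref{lemma4} on $B=V^{\top}X$, use $\|V^{\top}X\|_{F}^{2}\ge k-\tfrac12$ to lower-bound both $\sigma_{\min}(V^{\top}X)$ and $\|V^{\top}X\|_{F}^{2}$ (yielding the $\tfrac18$ factor), then substitute $\alpha=\mu\tau$ and collect constants into $c_{1},c_{2}$, with $c_{1}>0$ following from the quadratic root that gives the third entry of $c_{0}$. Your bound $\sigma^{2}\ge\tfrac12$ is in fact sharper than the paper's $\sigma\ge\tfrac12$, but you correctly use only the weaker $\tfrac{\sigma^{2}}{k}\|V^{\top}X\|_{F}^{2}\ge\tfrac18$ to match the stated constants.
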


\begin{proof}
First by Lemma \ref{lemma12}, $W$ is conditionally stochastic zero-mean. And $4\max\{1,\kappa_{F}\}\leq 4\max\{1,\sqrt{96\times 2k}\}=32\sqrt{3k}$. Then $\alpha\leq\frac{1}{32\sqrt{3k}}\leq\frac{1}{4\max\{1,\kappa_{F}\}}$. Thus Lemma \ref{lemma13} can be applied, and we have
\begin{eqnarray*}
\mathbb{E}[\|V^{\top}X'\|_{F}^{2}]&\geq &
\|V^{\top}X\|_{F}^{2} + 2\alpha \tau (\|V^{\top}X\|_{F}^{2} - \|X^{\top}VV^{\top}X\|_{F}^{2})-\\
&& 4\alpha^{2}(1+2\alpha)(k - \|V^{\top}X\|_{F}^{2})-\alpha^{2}(k^{2}(1+2\alpha)+\frac{200}{27}(31+10\kappa_{2}))\kappa_{F}^{2}.
\end{eqnarray*}
Let $\sigma$ be the minimum singular value of $V^{\top}X$. Since $\|V^{\top}X\|_{2}\leq \|V^{\top}\|_{2}\|X\|_{2}=1$, then by Lemma \ref{lemma4} we have
\begin{eqnarray*}
\mathbb{E}[\|V^{\top}X'\|_{F}^{2}]
&\geq & \|V^{\top}X\|_{F}^{2} + 2\alpha \tau \|V^{\top}X\|_{F}^{2}(1 - \frac{\|X^{\top}VV^{\top}X\|_{F}^{2}}{\|V^{\top}X\|_{F}^{2}})-\\
&& 4\alpha^{2}(1+2\alpha)(k - \|V^{\top}X\|_{F}^{2})-\alpha^{2}(k^{2}(1+2\alpha)+\frac{200}{27}(31+10\kappa_{2}))\kappa_{F}^{2}\\
&\geq & \|V^{\top}X\|_{F}^{2} + 2\alpha \tau \frac{\delta^{2}}{k}\|V^{\top}X\|_{F}^{2}(1 - \|V^{\top}X\|_{F}^{2})-\\
&& 4\alpha^{2}(1+2\alpha)(k - \|V^{\top}X\|_{F}^{2})-\alpha^{2}(k^{2}(1+2\alpha)+\frac{200}{27}(31+10\kappa_{2}))\kappa_{F}^{2},
\end{eqnarray*}
and then by Lemma \ref{lemma12},
\begin{eqnarray*}
&&\mathbb{E}[k-\|V^{\top}X'\|_{F}^{2}]\\
&\leq & k-\|V^{\top}X\|_{F}^{2} - 2\alpha \tau \frac{\sigma^{2}}{k}\|V^{\top}X\|_{F}^{2}(1 - \|V^{\top}X\|_{F}^{2})+4\alpha^{2}(1+2\alpha)(k - \|V^{\top}X\|_{F}^{2})+\\
&& 96\alpha^{2}(k^{2}(1+2\alpha)+\frac{200}{27}(31+80))(k-\|V^{\top}X\|_{F}^{2}+k-\|V^{\top}\tilde{X}\|_{F}^{2})\\
&= & (k-\|V^{\top}X\|_{F}^{2})(1-(2\alpha \frac{\tau\sigma^{2}}{k}\|V^{\top}X\|_{F}^{2}-4\alpha^{2}(1+2\alpha)-96\alpha^{2}(k^{2}(1+2\alpha)+\frac{200}{27}(31+80)))) \\
&& +\; 96\alpha^{2}(k^{2}(1+2\alpha)+\frac{200}{27}(31+80))(k-\|V^{\top}\tilde{X}\|_{F}^{2})\\
&= & (k-\|V^{\top}X\|_{F}^{2})(1-2\alpha(\frac{\tau\sigma^{2}}{k}\|V^{\top}X\|_{F}^{2}-2\alpha(1+2\alpha)(1+24k^{2})-\frac{118400}{3}\alpha)) \\
&& +\; 96\alpha^{2}(k^{2}(1+2\alpha)+\frac{7400}{9})(k-\|V^{\top}\tilde{X}\|_{F}^{2}).
\end{eqnarray*}
Note that $\|V^{\top}X\|_{2}\leq 1$ implies the singular values of $V^{\top}X$ fall into $[0,1]$. If $\sigma<\frac{1}{2}$ then $\|V^{\top}X\|_{F}^{2}< k-1 + \frac{1}{2}=k-\frac{1}{2}$, which contradicts the assumption $\|V^{\top}X\|_{F}^{2}\geq k-\frac{1}{2}$. Thus, $\sigma\geq\frac{1}{2}$. Furthermore, $\|V^{\top}X\|_{F}^{2}\geq k-\frac{1}{2}=\frac{k}{2}+\frac{k}{2}-\frac{1}{2}\geq \frac{k}{2}$. We thus get
\begin{eqnarray*}
\mathbb{E}[k-\|V^{\top}X'\|_{F}^{2}]
&\leq &
(k-\|V^{\top}X\|_{F}^{2})(1-2\alpha(\frac{1}{8}\tau-2\alpha(1+2\alpha)(1+24k^{2})-\frac{118400}{3}\alpha)) \\
&& +\; 96\alpha^{2}(k^{2}(1+2\alpha)+\frac{7400}{9})(k-\|V^{\top}\tilde{X}\|_{F}^{2}).
\end{eqnarray*}
Since $\alpha=\mu\tau$, $0<\mu\leq\frac{1}{32\sqrt{3k\tau^{2}}}$. Then
\begin{eqnarray*}
&&2\alpha(\frac{1}{8}\tau-2\alpha(1+2\alpha)(1+24k^{2})-\frac{118400}{3}\alpha)\\
&=& 2\mu\tau^{2}(\frac{1}{8}-2\mu(1+2\mu\tau)(1+24k^{2})-\frac{118400}{3}\mu),
\end{eqnarray*}
and
\begin{eqnarray*}
96\alpha^{2}(k^{2}(1+2\alpha)+\frac{7400}{9}) &=& 96\mu^{2}\tau^{2}(k^{2}(1+2\mu\tau)+\frac{7400}{9}).
\end{eqnarray*}
Further let
\begin{eqnarray*}
c_{1} &=& 2(\frac{1}{8}-2\mu(1+2\mu\tau)(1+24k^{2})-\frac{118400}{3}\mu),\\
c_{2} &=& 96(k^{2}(1+2\mu\tau)+\frac{7400}{9}).
\end{eqnarray*}
We then arrive at
\begin{eqnarray*}
\mathbb{E}[k-\|V^{\top}X'\|_{F}^{2}]
&\leq &
(k-\|V^{\top}X\|_{F}^{2})(1-c_{1}\mu\tau^{2})+c_{2}\mu^{2}\tau^{2}(k-\|V^{\top}\tilde{X}\|_{F}^{2}).
\end{eqnarray*}
And solving the equation $c_{1}=0$, ensuring $c_{1}\mu\tau^{2}<1$, together with the assumption about $\mu$ made in this lemma, yields $\mu\in (0,c_{0})$ with
$$
c_{0}=\min\{\frac{1}{32\sqrt{3k\tau^{2}}},\frac{1}{c_{1}\tau^{2}},\frac{-(118406+144k^{2})+\sqrt{(118406+144k^{2})^{2}+18\tau(1+24k^{2})}}{24\tau(1+24k^{2})}\}>0,
$$
which simultaneously satisfies $c_{1}>0$, $c_{1}\mu\tau^{2}<1$, and $\mu\leq\frac{1}{32\sqrt{3k\tau^{2}}}$.
\end{proof}

\subsection{One Epoch Analysis}
We now solve the stochastic recurrence relation for a single epoch\footnote{Note that the proofs of Lemma \ref{lemma.one_epoch}-\ref{lemma.one_epoch2} are a bit different from those in \cite{shamir2015fast}, without similar errors.} of our SVRRG-EIGS algorithm. In this subsection, we still assume that $\max_{l}\|A^{(l)}\|_{2}\leq 1$. Let $\tilde{X}=\tilde{X}^{(s-1)}$, $b_{t}=k-\|V^{\top}X^{(t)}\|_{F}^{2}$ and $\tilde{b}=k-\|V^{\top}\tilde{X}\|_{F}^{2}$ (note that $X^{(0)}=\tilde{X}$). Then by Lemma \ref{lemma14} we have that if $\mu<c_{0}$ and $b_{t-1}\leq \frac{1}{2}$, then
$$
\mathbb{E}[b_{t}|X^{(t-1)}]\leq (1-c_{1}\mu\tau^{2})b_{t-1}+c_{2}\mu^{2}\tau^{2}\tilde{b},
$$
where the expectation is taken with respect to the random $y_{t}$ for $X^{(t)}$.
\begin{lemma} \label{lemma.one_epoch}
Assume $X^{(0)}$ is fixed and $\tilde{b}\leq\frac{1}{2}$. Let $b_{0}=\tilde{b}$ and $E_{t}=\{b_{t'}\leq \frac{1}{2}: t'=0,1,2,\cdots,t\}$. Then
\begin{eqnarray*}
\mathbb{E}[b_{m}|E_{m}]\leq((1-c_{1}\mu\tau^{2})^{m}+\frac{c_{2}}{c_{1}}\mu)\tilde{b}.
\end{eqnarray*}
\end{lemma}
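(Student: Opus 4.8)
The plan is to iterate the one-step recurrence $\mathbb{E}[b_t \mid X^{(t-1)}] \le (1-c_1\mu\tau^2) b_{t-1} + c_2\mu^2\tau^2 \tilde b$ over the $m$ steps of a single epoch, but with care taken about the conditioning event $E_m$ that keeps every $b_{t'}$ below $\tfrac12$ (which is what licenses Lemma \ref{lemma14} at each step). First I would introduce the shorthand $\rho = 1 - c_1\mu\tau^2 \in (0,1)$ (positivity of $c_1$ and $c_1\mu\tau^2 < 1$ both coming from $\mu < c_0$) and $\gamma = c_2\mu^2\tau^2\tilde b$, so the one-step bound reads $\mathbb{E}[b_t \mid X^{(t-1)}] \le \rho\, b_{t-1} + \gamma$ whenever $b_{t-1} \le \tfrac12$. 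The goal is to show $\mathbb{E}[b_m \mid E_m] \le \rho^m \tilde b + \tfrac{\gamma}{\gamma\text{-geometric-sum}}$; more precisely the geometric series $\sum_{j=0}^{m-1}\rho^j \le \tfrac{1}{1-\rho} = \tfrac{1}{c_1\mu\tau^2}$ turns $\sum_{j} \rho^j \gamma$ into $\tfrac{c_2\mu^2\tau^2}{c_1\mu\tau^2}\tilde b = \tfrac{c_2}{c_1}\mu\tilde b$, which is exactly the second term in the claimed bound.

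The key steps, in order: (i) show by induction on $t$ that $\mathbb{E}[b_t \mathbf{1}_{E_t}] \le \rho^t \tilde b + \gamma\sum_{j=0}^{t-1}\rho^j$ — the indicator $\mathbf{1}_{E_t}$ is the device that handles the conditioning, since on $E_{t-1}$ we have $b_{t-1}\le\tfrac12$ so the one-step inequality applies, and $\mathbf{1}_{E_t} \le \mathbf{1}_{E_{t-1}}$ lets us drop back to the previous event; (ii) use the tower property, writing $\mathbb{E}[b_t\mathbf{1}_{E_t}] = \mathbb{E}\big[\mathbf{1}_{E_t}\,\mathbb{E}[b_t \mid X^{(t-1)}]\big] \le \mathbb{E}\big[\mathbf{1}_{E_{t-1}}(\rho b_{t-1}+\gamma)\big] = \rho\,\mathbb{E}[b_{t-1}\mathbf{1}_{E_{t-1}}] + \gamma\,\mathbb{P}(E_{t-1}) \le \rho\,\mathbb{E}[b_{t-1}\mathbf{1}_{E_{t-1}}] + \gamma$, closing the induction; (iii) bound the geometric sum $\sum_{j=0}^{m-1}\rho^j = \tfrac{1-\rho^m}{1-\rho} \le \tfrac{1}{1-\rho} = \tfrac{1}{c_1\mu\tau^2}$, giving $\mathbb{E}[b_m\mathbf{1}_{E_m}] \le \rho^m\tilde b + \tfrac{c_2}{c_1}\mu\tilde b = \big(\rho^m + \tfrac{c_2}{c_1}\mu\big)\tilde b$; (iv) convert the unconditional-with-indicator bound to the conditional expectation via $\mathbb{E}[b_m \mid E_m] = \mathbb{E}[b_m\mathbf{1}_{E_m}]/\mathbb{P}(E_m) \le \mathbb{E}[b_m\mathbf{1}_{E_m}]/\mathbb{P}(E_m)$, and since $\mathbb{P}(E_m) \le 1$ this only helps in the wrong direction — so one actually wants $\mathbb{E}[b_m\mathbf{1}_{E_m}] \ge \mathbb{P}(E_m)\mathbb{E}[b_m\mid E_m]$, i.e. $\mathbb{E}[b_m\mid E_m] \le \mathbb{E}[b_m\mathbf{1}_{E_m}]/\mathbb{P}(E_m)$, and one must argue $\mathbb{P}(E_m)$ is not too small, OR, more cleanly, replace step (ii)'s $\mathbb{P}(E_{t-1})\le 1$ with $\mathbb{P}(E_{t-1}) = \mathbb{P}(E_m \text{ restricted}) \cdot(\ldots)$ and carry the division by $\mathbb{P}(E_m)$ through the recursion from the start.

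The main obstacle I expect is precisely this last point: squaring the indicator/conditioning bookkeeping so that the final inequality is genuinely about $\mathbb{E}[b_m \mid E_m]$ and not about $\mathbb{E}[b_m \mathbf{1}_{E_m}]$. The cleanest fix is to run the induction on the quantity $\mathbb{E}[b_t \mid E_t]\,\mathbb{P}(E_t)$ (which equals $\mathbb{E}[b_t\mathbf{1}_{E_t}]$), derive $\mathbb{E}[b_t\mathbf{1}_{E_t}] \le \rho\,\mathbb{E}[b_{t-1}\mathbf{1}_{E_{t-1}}] + \gamma\,\mathbb{P}(E_{t-1})$, and then \emph{divide the whole recursion by $\mathbb{P}(E_m)$} using the nesting $\mathbb{P}(E_{t-1}) \ge \mathbb{P}(E_m)$ for $t-1 \le m$: this gives $\tfrac{\mathbb{E}[b_t\mathbf{1}_{E_t}]}{\mathbb{P}(E_m)} \le \rho\,\tfrac{\mathbb{E}[b_{t-1}\mathbf{1}_{E_{t-1}}]}{\mathbb{P}(E_m)} + \gamma\,\tfrac{\mathbb{P}(E_{t-1})}{\mathbb{P}(E_m)}$, but the ratio $\mathbb{P}(E_{t-1})/\mathbb{P}(E_m) \ge 1$ goes the wrong way. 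The genuinely correct route is to note $\mathbb{E}[b_m \mid E_m] = \mathbb{E}[b_m \mathbf{1}_{E_m}]/\mathbb{P}(E_m)$ and separately that the event $E_m = E_{m-1} \cap \{b_m \le \tfrac12\}$ together with $\mathbb{E}[b_m \mathbf{1}_{E_{m-1}} \mid X^{(m-1)}] \le \rho b_{m-1} + \gamma$ on $E_{m-1}$ lets one write $\mathbb{E}[b_m \mathbf{1}_{E_m}] \le \mathbb{E}[b_m \mathbf{1}_{E_{m-1}}]$ and then push through; I would model the argument closely on the corresponding lemma in \cite{shamir2015fast} (the excerpt's footnote flags that their proof had errors here, so extra caution is warranted), verifying at each conditioning step that the hypothesis $b_{t-1} \le \tfrac12$ of Lemma \ref{lemma14} holds on the event being conditioned on, and collecting the geometric sum at the end.
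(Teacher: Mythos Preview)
Your plan---iterate the one-step bound from Lemma~\ref{lemma14} and sum the resulting geometric series---is exactly the paper's approach. Where you work with indicators $\mathbf{1}_{E_t}$, the paper works directly with conditional expectations via the chain
\[
\mathbb{E}[b_t \mid X^{(t-1)}, E_t]
= \mathbb{E}[b_t \mid X^{(t-1)}, E_{t-1}, b_t \le \tfrac12]
\le \mathbb{E}[b_t \mid X^{(t-1)}, E_{t-1}]
\le \rho\, b_{t-1} + \gamma,
\]
the first inequality because $b_t \ge 0$ so conditioning additionally on $\{b_t \le \tfrac12\}$ can only lower its expectation, the second by Lemma~\ref{lemma14} (licensed since $b_{t-1}\le\tfrac12$ on $E_{t-1}$). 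The paper then ``takes expectation over $X^{(t-1)}$'' to assert $\mathbb{E}[b_t\mid E_t]\le \rho\,\mathbb{E}[b_{t-1}\mid E_{t-1}]+\gamma$ and unwinds to the stated bound.

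The conditioning concern you raise is real, and it is not fully resolved in the paper either: integrating the displayed inequality over $X^{(t-1)}$ \emph{conditioned on $E_t$} yields $\mathbb{E}[b_t\mid E_t]\le \rho\,\mathbb{E}[b_{t-1}\mid E_t]+\gamma$, and the paper does not justify replacing $\mathbb{E}[b_{t-1}\mid E_t]$ by $\mathbb{E}[b_{t-1}\mid E_{t-1}]$ (that inequality is not automatic). So the gap you identified in your indicator route has a mirror image in the paper's conditional-expectation route; the paper simply elides it. If you want a clean fix, note that the only downstream use of this lemma (Lemma~\ref{lemma.one_epoch3}) feeds $\mathbb{E}[b_m\mid E_m]$ into a Markov bound $\mathbb{P}(b_m \ge \mathbb{E}[b_m\mid E_m]/\vartheta \mid E_m)\le\vartheta$, for which the \emph{unconditional-with-indicator} quantity $\mathbb{E}[b_m\mathbf{1}_{E_{m-1}}]$ would serve just as well; your step~(ii) already gives a rigorous recursion for that quantity, and Lemma~\ref{lemma.one_epoch2} supplies the separate high-probability control of $E_m$.
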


\begin{proof}
We need to examine the evolution of $b_{t}$ as a function of $t$, while $b_{t}$ itself is a deterministic function of $X^{(t)}$ and $X^{(t)}=R_{X^{(t-1)}}(\alpha\tilde{G}(y_{t},X^{(t-1)}))$. Then we have
\begin{eqnarray*}
\mathbb{E}[b_{t}|X^{(t-1)},E_{t}]&=& \mathbb{E}[b_{t}|X^{(t-1)},E_{t-1},b_{t}\leq \frac{1}{2}]\\
&\leq & \mathbb{E}[b_{t}|X^{(t-1)},E_{t-1}]\\
&\leq& (1-c_{1}\mu\tau^{2})[b_{t-1}|X^{(t-1)},E_{t-1}]+c_{2}\mu^{2}\tau^{2}\tilde{b}.
\end{eqnarray*}
Taking expectation over $X^{(t-1)}$ (on behalf of the filtration $\mathcal{F}_{t-1}$) on both sides, unwinding the recursion and noting that $\tilde{b}$ is fixed, we have
\begin{eqnarray*}
\mathbb{E}[b_{t}|E_{t}]&\leq& (1-c_{1}\mu\tau^{2})\mathbb{E}[b_{t-1}|E_{t-1}]+c_{2}\mu^{2}\tau^{2}\tilde{b}\\
&\leq & (1-c_{1}\mu\tau^{2})^{2}\mathbb{E}[b_{t-2}|E_{t-2}]+c_{2}\mu^{2}\tau^{2}\tilde{b}\sum_{i=0}^{1}(1-c_{1}\mu\tau^{2})^{i}\leq\cdots\\
&\leq & (1-c_{1}\mu\tau^{2})^{t}\mathbb{E}[b_{0}|E_{0}]+c_{2}\mu^{2}\tau^{2}\tilde{b}\sum_{i=0}^{t-1}(1-c_{1}\mu\tau^{2})^{i}\\
&= &
(1-c_{1}\mu\tau^{2})^{t}\tilde{b}+c_{2}\mu^{2}\tau^{2}\tilde{b}\sum_{i=0}^{t-1}(1-c_{1}\mu\tau^{2})^{i}\\
&\leq & (1-c_{1}\mu\tau^{2})^{t}\tilde{b}+c_{2}\mu^{2}\tau^{2}\tilde{b}\sum_{i=0}^{\infty}(1-c_{1}\mu\tau^{2})^{i}\\
&=&(1-c_{1}\mu\tau^{2})^{t}\tilde{b}+c_{2}\mu^{2}\tau^{2}\tilde{b}\frac{1}{c_{1}\mu\tau^{2}}=((1-c_{1}\mu\tau^{2})^{t}+\frac{c_{2}}{c_{1}}\mu)\tilde{b}.
\end{eqnarray*}
Setting $t=m$ above completes the proof.

\end{proof}

We now need to show that the event $E_{m}$ occurs w.h.p. so that Lemma \ref{lemma.one_epoch} makes sense in practice.

\begin{lemma} \label{lemma.one_epoch2}
Assume $\mu<c_{0}$. Then for any $\varrho\in(0,1)$ and $m$, if
$$
\tilde{b} + c_{3}km\mu^{2}\tau^{2} + c_{5}k\sqrt{m\mu^{2}\tau^{2}\log(1/\varrho)}\leq \frac{1}{2},
$$
then it holds that the event $E_{m}$ (i.e., $b_{t}\leq \frac{1}{2}$ for all $t=0,1,2,\cdots,m$) occurs with probability at least $1-\varrho$.
\end{lemma}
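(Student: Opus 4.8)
The plan is to recognize the claim as an escape-probability estimate and to control it with a stopped supermartingale together with the Azuma--Hoeffding inequality recalled in the preliminaries. Introduce the first exit time $\tau^{\star}=\min\{t\ge 0:b_{t}>\tfrac12\}$ (with $\tau^{\star}=\infty$ if no such $t$ exists). Since $b_{0}=\tilde{b}\le\tfrac12$ by hypothesis, the event $E_{m}$ coincides with $\{\tau^{\star}>m\}$, so it suffices to prove $\Pr(\tau^{\star}\le m)\le\varrho$. The reason to stop at $\tau^{\star}$ is that the event $\{t\le\tau^{\star}\}$ equals $\{b_{0}\le\tfrac12,\dots,b_{t-1}\le\tfrac12\}$, hence is $\mathcal{F}_{t-1}$-measurable, and on it we have $b_{t-1}\le\tfrac12$, so Lemma~\ref{lemma14} applies there (its other hypothesis $\alpha=\mu\tau\le\frac{1}{32\sqrt{3k}}$ follows from $\mu<c_{0}$): on $\{t\le\tau^{\star}\}$ one has $\mathbb{E}[b_{t}\mid\mathcal{F}_{t-1}]\le(1-c_{1}\mu\tau^{2})b_{t-1}+c_{2}\mu^{2}\tau^{2}\tilde{b}$.

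Next I would convert this into a uniform per-step drift bound: dropping the contraction $(1-c_{1}\mu\tau^{2})\le 1$ and using $\tilde{b}\le\tfrac12$ (and $b_{t-1}\le\tfrac12$ to absorb the residual $O(\alpha^{2})$ pieces), one gets $\mathbb{E}[b_{t}-b_{t-1}\mid\mathcal{F}_{t-1}]\le\delta$ on $\{t\le\tau^{\star}\}$ for a deterministic $\delta$ of order $k\mu^{2}\tau^{2}$, chosen so that $m\delta=c_{3}k m\mu^{2}\tau^{2}$, the linear-in-$m$ term in the hypothesis. The stopped process $M_{t}\triangleq b_{t\wedge\tau^{\star}}-(t\wedge\tau^{\star})\delta$ is then a supermartingale for $\{\mathcal{F}_{t}\}$: on $\{\tau^{\star}\le t-1\}$ it is constant, and on $\{\tau^{\star}\ge t\}$ the drift bound gives $\mathbb{E}[M_{t}\mid\mathcal{F}_{t-1}]\le M_{t-1}$; moreover $M_{0}=b_{0}=\tilde{b}$.

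The remaining work is an increment bound plus Azuma--Hoeffding. By Lemma~\ref{lemma6} (applicable since $\alpha<\tfrac15$), $|b_{t}-b_{t-1}|\le\frac{20k\alpha}{1-5\alpha}$, so $|M_{t}-M_{t-1}|\le\frac{20k\alpha}{1-5\alpha}+\delta$; substituting $\alpha=\mu\tau$ and bounding $\alpha$ by $c_{0}\tau$ inside the error terms, this is at most $k\mu\tau$ times an absolute quantity built from $c_{0}$ and $c_{3}$ --- this is exactly the calibration that produces $c_{4}=\frac{20}{1-5c_{0}\tau}+c_{0}c_{3}\tau$ and $c_{5}=\sqrt{2c_{4}}$, with $\sum_{s=1}^{m}d_{s}^{2}\le m k^{2}\mu^{2}\tau^{2}c_{4}$. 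Finally, on $\{\tau^{\star}\le m\}$ we have $M_{m}=M_{\tau^{\star}}=b_{\tau^{\star}}-\tau^{\star}\delta>\tfrac12-m\delta$, hence $\{\tau^{\star}\le m\}\subseteq\{M_{m}-M_{0}>\tfrac12-\tilde{b}-m\delta\}$; the hypothesis $\tilde{b}+c_{3}k m\mu^{2}\tau^{2}+c_{5}k\sqrt{m\mu^{2}\tau^{2}\log(1/\varrho)}\le\tfrac12$ makes $\tfrac12-\tilde{b}-m\delta\ge c_{5}k\sqrt{m\mu^{2}\tau^{2}\log(1/\varrho)}$, and Azuma--Hoeffding then yields $\Pr(\tau^{\star}\le m)\le\exp\!\big(-c_{5}^{2}k^{2}m\mu^{2}\tau^{2}\log(1/\varrho)\,/\,(2m k^{2}\mu^{2}\tau^{2}c_{4})\big)=\varrho$.

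I expect the main obstacle to be precisely the constant calibration: one must track the per-step increment of $b_{t}$ and the per-step drift $\delta$ tightly enough that, after the substitutions $\alpha=\mu\tau$ and $\mu<c_{0}$, the Azuma exponent collapses to exactly $\log(1/\varrho)$ with the stated $c_{3}$, $c_{4}$ and $c_{5}$; loose intermediate bounds destroy this. A secondary technical point is making the stopping argument watertight --- checking the $\mathcal{F}_{t-1}$-measurability of $\{\tau^{\star}\ge t\}$, that the standing hypotheses of Lemma~\ref{lemma14} hold on all of $\{t\le\tau^{\star}\}$, and that the stopped supermartingale indeed has bounded differences so that the Azuma--Hoeffding inequality applies.
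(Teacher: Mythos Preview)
Your approach is correct and reaches the same conclusion as the paper, but the route differs in one structural respect. You invoke Lemma~\ref{lemma14} (which needs $b_{t-1}\le\tfrac12$) and therefore introduce the stopping time $\tau^{\star}$ to manufacture a stopped supermartingale; the paper instead goes back inside the proof of Lemma~\ref{lemma13}/\ref{lemma14} and extracts an \emph{unconditional} drift bound $\mathbb{E}[b_{t}\mid X^{(t-1)}]\le b_{t-1}+c_{3}k\mu^{2}\tau^{2}$ (by dropping the nonnegative term $2\alpha\tau(\|V^{\top}X\|_{F}^{2}-\|X^{\top}VV^{\top}X\|_{F}^{2})$ and bounding $k-\|V^{\top}X\|_{F}^{2}\le k$, $\kappa_{F}^{2}\le 192k$), so that $\Psi_{t}=b_{t}-c_{3}k\mu^{2}\tau^{2}t$ is a supermartingale outright, with no stopping. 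Both then finish with Azuma--Hoeffding and the same increment bound via Lemma~\ref{lemma6}, producing the same $c_{4},c_{5}$.

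Each approach buys something. The paper's unconditional drift avoids the stopping-time bookkeeping entirely, but to conclude ``$b_{t}\le\tfrac12$ for all $t\le m$'' from a single Azuma bound it tacitly needs a maximal inequality (or a union bound over $t$), which it does not make explicit. Your stopping-time trick is cleaner on precisely this point: because $M_{m}=M_{\tau^{\star}}$ on $\{\tau^{\star}\le m\}$, a single Azuma bound at time $m$ suffices and the ``for all $t$'' comes for free. One small caution on your side: the drift $\delta$ you need to land on the stated $c_{3}$ does not fall out of Lemma~\ref{lemma14} alone (dropping the contraction there gives $\frac{c_{2}}{2}\mu^{2}\tau^{2}$); to match $c_{3}k\mu^{2}\tau^{2}$ exactly you must, as the paper does, go back to the intermediate inequality from Lemma~\ref{lemma13} with the crude bounds $k-\|V^{\top}X\|_{F}^{2}\le k$ and $\kappa_{F}^{2}\le 192k$, or else verify $\tfrac{c_{2}}{2}\le c_{3}k$ separately. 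You already flag this calibration as the main obstacle, and that diagnosis is accurate.
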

\begin{proof}
The key here is that $b_{0},b_{1},\cdots,b_{m}$ as a stochastic process induces a super-martingale with respect to the filtration $\mathcal{F}=\{\mathcal{F}_{t}\}$ about random draws $y_{t}$ (note that $X^{(t)}$ is used on behalf of $\mathcal{F}_{t}$ for brevity), and thus is amenable to a concentration of measure argument. In fact, according to the proof of Lemma \ref{lemma14} and noting that
$$
\|X^{\top}VV^{\top}X\|_{F}\leq \|X^{\top}V\|_{2}\|V^{\top}X\|_{F}\leq\|V^{\top}X\|_{F},
$$
we have
\begin{eqnarray*}
\mathbb{E}[b_{t+1}|X^{(t)}]&\leq & b_{t}
 - 2\alpha \tau (\|V^{\top}X\|_{F}^{2} - \|X^{\top}VV^{\top}X\|_{F}^{2})+\\
&& 4\alpha^{2}(1+2\alpha)(k - \|V^{\top}X\|_{F}^{2})+\alpha^{2}(k^{2}(1+2\alpha)+\frac{200}{27}(31+10\kappa_{2}))\kappa_{F}^{2}\\
&\leq & b_{t}+4\alpha^{2}(1+2\alpha)k+192\alpha^{2}(k^{2}(1+2\alpha)+\frac{7400}{9})k\\
&= & b_{t}+c_{3}k\mu^{2}\tau^{2}
\end{eqnarray*}
where $c_{3}=4(1+2\mu\tau)+192(k^{2}(1+2\mu\tau)+\frac{7400}{9})$. Define $\Psi_{t}=b_{t} - c_{3}k\mu^{2}\tau^{2}t$ for $t=0,1,2,\cdots,m$. Note that $0\leq b_{t}\leq k$, and \{$\Psi_{t}:t=0,1,2,\cdots,m$\} is a finite sequence of random variables and thus the natural continuation can be applied to arrive at an infinite sequence such that
$$
|\Psi_{t}|\leq b_{t}+c_{3}k\mu^{2}\tau^{2}m\leq k+c_{3}k\mu^{2}\tau^{2}m
$$
for any $t$ including $t>m$. Meanwhile, we have
\begin{eqnarray*}
\mathbb{E}[\Psi_{t}|X^{(t-1)}]&=& \mathbb{E}[b_{t}|X^{(t-1)}] - c_{3}k\mu^{2}\tau^{2}t\\
&\leq & b_{t-1}+c_{3}k\mu^{2}\tau^{2}- c_{3}k\mu^{2}\tau^{2}t\\
&=& b_{t-1}- c_{3}k\mu^{2}\tau^{2}(t-1) = \Psi_{t-1}.
\end{eqnarray*}
Thus, $\{\Psi_{t}\}$ is a super-martingale. Furthermore, by Lemma \ref{lemma6}, we have
\begin{eqnarray*}
|\Psi_{t+1}-\Psi_{t}|&\leq & |\|V^{\top}X^{(t+1)}\|_{F}^{2}-\|V^{\top}X^{(t)}\|_{F}^{2}|+c_{3}k\mu^{2}\tau^{2}\\
&\leq & \frac{20k\mu\tau}{1-5\mu\tau}+c_{3}k\mu^{2}\tau^{2}\leq \frac{20k\mu\tau}{1-5c_{0}\tau}+c_{3}c_{0}k\mu\tau^{2} =c_{4}k\mu\tau
\end{eqnarray*}
where $c_{4}=\frac{20}{1-5c_{0}\tau}+c_{0}c_{3}\tau$. Now we are able to apply Azuma-Hoeffding inequality and have that for any $t\geq 0$ and $a>0$
\begin{eqnarray*}
P(\Psi_{t}-\Psi_{0}\geq a)&\leq & \exp\{-\frac{a^{2}}{2\sum_{s=1}^{t}(c_{4}k\mu\tau)^{2}}\}=\exp\{-\frac{a^{2}}{2c_{4}tk^{2}\mu^{2}\tau^{2}}\}\\
&\leq & \exp\{-\frac{a^{2}}{2c_{4}mk^{2}\mu^{2}\tau^{2}}\}\triangleq\varrho
\end{eqnarray*}
where $\varrho\in (0,1)$. 
Solving $\varrho=\exp\{-\frac{a^{2}}{2c_{4}mk^{2}\mu^{2}\tau^{2}}\}$ with respect to $a$ yields $a=c_{5}k\sqrt{m\mu^{2}\tau^{2}\log(1/\varrho)}$ with $c_{5}=\sqrt{2c_{4}}$. Therefore, we get that $\Psi_{t}-\Psi_{0}< a$ , i.e.,
$$
b_{t} \leq \tilde{b} + c_{3}k\mu^{2}\tau^{2}t + a \leq  \tilde{b} + c_{3}km\mu^{2}\tau^{2} + c_{5}k\sqrt{m\mu^{2}\tau^{2}\log(1/\varrho)}\leq \frac{1}{2}
$$
for all $t=0,1,2,\cdots,m$, with probability at least $1-\varrho$. 
Note that the condition $\tilde{b} + c_{3}km\mu^{2}\tau^{2} + c_{5}k\sqrt{m\mu^{2}\tau^{2}\log(1/\varrho)}\leq \frac{1}{2}$ implies that $\tilde{b}<\frac{1}{2}$. Then it's reduced to $c_{3}km\mu^{2}\tau^{2} + c_{5}k\sqrt{m\mu^{2}\tau^{2}\log(1/\varrho)}\leq \frac{1}{2}-\tilde{b}$, which can be satisfied by using a sufficiently small $\mu\in(0,c_{0})$ when $\varrho$ is set properly, e.g., $\varrho=\exp\{-1/\mu\}$.

\end{proof}

\begin{lemma} \label{lemma.one_epoch3}
Fix confidence parameters $\varrho,\vartheta\in (0,\frac{1}{2})$ and assume that $\mu,m$ are set such that $\mu<c_{0}$ and
$$
\tilde{b} + c_{3}km\mu^{2}\tau^{2} + c_{5}k\sqrt{m\mu^{2}\tau^{2}\log(1/\varrho)}\leq \frac{1}{2}.
$$
Then it holds that with probability at least $1-(\varrho+\vartheta)$,
$$
b_{m}\leq\frac{1}{\vartheta}((1-c_{1}\mu\tau^{2})^{m}+\frac{c_{2}}{c_{1}}\mu)\tilde{b}.
$$

\begin{proof}
By Markov inequality, we get
\begin{eqnarray*}
P(b_{m}\geq\frac{\mathbb{E}[b_{m}|E_{m}]}{\vartheta})\leq\mathbb{E}[b_{m}|E_{m}]/\frac{\mathbb{E}[b_{m}|E_{m}]}{\vartheta}=\vartheta,
\end{eqnarray*}
while by Lemma \ref{lemma.one_epoch}, we have
\begin{eqnarray*}
\mathbb{E}[b_{m}|E_{m}]\leq((1-c_{1}\mu\tau^{2})^{m}+\frac{c_{2}}{c_{1}}\mu)\tilde{b}.
\end{eqnarray*}
Thus,
$$
P(b_{m}\geq\frac{1}{\vartheta}((1-c_{1}\mu\tau^{2})^{m}+\frac{c_{2}}{c_{1}}\mu)\tilde{b}\;|E_{m})
\leq P(b_{m}\geq\frac{\mathbb{E}[b_{m}|E_{m}]}{\vartheta})\leq \vartheta.
$$
That is, with probability at least $1-\vartheta$,
$b_{m}\leq\frac{1}{\vartheta}((1-c_{1}\mu\tau^{2})^{m}+\frac{c_{2}}{c_{1}}\mu)\tilde{b}$
conditioned on $E_{m}$. Combining with Lemma \ref{lemma.one_epoch2}, we get that
$b_{m}\leq\frac{1}{\vartheta}((1-c_{1}\mu\tau^{2})^{m}+\frac{c_{2}}{c_{1}}\mu)\tilde{b}$
with probability at least $1-(\varrho+\vartheta)$.

\end{proof}

\end{lemma}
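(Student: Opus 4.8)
The plan is to combine the two preceding epoch-level lemmas through a conditional Markov argument, so no genuinely new estimate is needed here. First I would invoke Lemma~\ref{lemma.one_epoch2}: the standing hypotheses $\mu<c_{0}$ and $\tilde{b}+c_{3}km\mu^{2}\tau^{2}+c_{5}k\sqrt{m\mu^{2}\tau^{2}\log(1/\varrho)}\leq\frac{1}{2}$ are exactly its assumptions, so the ``stays-in-the-basin'' event $E_{m}=\{b_{t'}\leq\frac{1}{2}:t'=0,1,\dots,m\}$ holds with probability at least $1-\varrho$. In particular the same condition forces $\tilde{b}<\frac{1}{2}$, which is what licenses the application of Lemma~\ref{lemma.one_epoch} (whose hypothesis is $\tilde{b}\leq\frac{1}{2}$, with $X^{(0)}=\tilde{X}$ fixed).

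Next I would apply Lemma~\ref{lemma.one_epoch} to obtain the conditional-mean bound $\mathbb{E}[b_{m}\mid E_{m}]\leq\big((1-c_{1}\mu\tau^{2})^{m}+\tfrac{c_{2}}{c_{1}}\mu\big)\tilde{b}$. Since $b_{m}\geq 0$, Markov's inequality applied to the \emph{conditional} law of $b_{m}$ given $E_{m}$ yields
$$
P\!\left(b_{m}\geq \tfrac{1}{\vartheta}\,\mathbb{E}[b_{m}\mid E_{m}]\ \big|\ E_{m}\right)\leq\vartheta ,
$$
and hence, conditioned on $E_{m}$, the target bound $b_{m}\leq\tfrac{1}{\vartheta}\big((1-c_{1}\mu\tau^{2})^{m}+\tfrac{c_{2}}{c_{1}}\mu\big)\tilde{b}$ holds with probability at least $1-\vartheta$. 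Finally I would stitch the two estimates together: writing $\mathcal{C}$ for the event that the claimed bound on $b_{m}$ holds, one has $P(\mathcal{C}^{c})\leq P(\mathcal{C}^{c}\mid E_{m})+P(E_{m}^{c})\leq \vartheta+\varrho$, which is the assertion.

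I do not expect a real obstacle: the lemma is essentially bookkeeping layered on Lemmas~\ref{lemma.one_epoch} and~\ref{lemma.one_epoch2}. The only points demanding a little care are that Markov's inequality must be used on the \emph{conditional} distribution of $b_{m}$ given $E_{m}$ (the contraction recursion of Lemma~\ref{lemma14}, and hence Lemma~\ref{lemma.one_epoch}, is only valid while the iterates remain in the basin), and that the two failure probabilities combine additively rather than multiplicatively, so the stated confidence is the conservative but clean $1-(\varrho+\vartheta)$ rather than $(1-\varrho)(1-\vartheta)$.
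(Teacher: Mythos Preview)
Your proposal is correct and follows essentially the same approach as the paper: apply Lemma~\ref{lemma.one_epoch2} to control $P(E_m^c)$, apply Lemma~\ref{lemma.one_epoch} together with a conditional Markov inequality to bound $b_m$ on $E_m$, and then combine the two failure probabilities additively. The only difference is expository ordering; your explicit union-bound decomposition $P(\mathcal{C}^{c})\leq P(\mathcal{C}^{c}\mid E_{m})+P(E_{m}^{c})$ is in fact slightly more careful than the paper's concluding sentence.
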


\subsection{One Run Analysis}
We now proceed to the analysis on one complete run of our SVRRG-EIGS algorithm. Again, assume that $\max_{l}\|A^{(l)}\|_{2}\leq 1$. 
Let $\tilde{b}_{s}=k-\|V^{\top}\tilde{X}^{(s)}\|_{F}^{2}$ and assume that $\tilde{b}_{0}< \frac{1}{2}$. Then by Lemma \ref{lemma.one_epoch3}, for any $\varrho,\vartheta\in(0,\frac{1}{2})$, $\mu\in(0,\min\{c_{0},\frac{c_{1}}{2c_{2}}\vartheta^{2}\})$, and $m\geq \frac{3\log(1/\vartheta)}{c_{1}\mu\tau^{2}}$ such that
$$
\tilde{b}_{0} + c_{3}km\mu^{2}\tau^{2} + c_{5}k\sqrt{m\mu^{2}\tau^{2}\log(1/\varrho)}\leq \frac{1}{2},
$$
we have that
\begin{eqnarray*}
b_{m}&\leq&\frac{1}{\vartheta}((1-c_{1}\mu\tau^{2})^{m}+\frac{c_{2}}{c_{1}}\mu)\tilde{b}_{0}\\
&\leq&\frac{1}{\vartheta}((1-c_{1}\mu\tau^{2})^{\frac{3\log(1/\vartheta)}{c_{1}\mu\tau^{2}}}+\frac{c_{2}}{c_{1}}\frac{c_{1}}{2c_{2}}\vartheta^{2})\tilde{b}_{0}
\end{eqnarray*}
with probability at least $1-(\varrho+\vartheta)$. Note that $1+x\leq\exp\{x\}$ for any $x$ and hence $\log(1-x)\leq -x$ for any $0<x<1$ which in turn induces $\frac{a}{x}\log(1-x)\leq -a$, i.e., $(1-x)^{\frac{a}{x}}\leq\exp\{-a\}$, for any $0<x<1$ and $a>0$. Since $0<c_{1}\mu\tau^{2}<1$ by $\mu<c_{0}$, we can write
\begin{eqnarray*}
b_{m}\leq\frac{1}{\vartheta}((1-c_{1}\mu\tau^{2})^{\frac{3\log(1/\vartheta)}{c_{1}\mu\tau^{2}}}+\frac{c_{2}}{c_{1}}\frac{c_{1}}{2c_{2}}\vartheta^{2})\tilde{b}_{0}
\leq\frac{1}{\vartheta}(\vartheta^{3}+\frac{1}{2}\vartheta^{2})\tilde{b}_{0}=\vartheta(\vartheta+\frac{1}{2})\tilde{b}_{0}\leq \vartheta\tilde{b}_{0}\leq\frac{1}{2}.
\end{eqnarray*}
Noting that $\tilde{b}_{1}=b_{m}$, we get $\tilde{b}_{1}\leq\vartheta\tilde{b}_{0}\leq\tilde{b}_{0}$ with probability at least $1-(\varrho+\vartheta)$. In a similar fashion, since $\tilde{b}_{1}\leq\tilde{b}_{0}$ and thus
$$
\tilde{b}_{1} + c_{3}km\mu^{2}\tau^{2} + c_{5}k\sqrt{m\mu^{2}\tau^{2}\log(1/\varrho)}\leq \frac{1}{2},
$$
we can apply Lemma \ref{lemma.one_epoch3} on the second epoch and get $\tilde{b}_{2}\leq\vartheta\tilde{b}_{1}\leq\vartheta^{2}\tilde{b}_{0}$ with probability at least $1-(\varrho+\vartheta)$, conditioned on the first epoch. If conditioned on the initial setting, we then have $\tilde{b}_{2}\leq\vartheta^{2}\tilde{b}_{0}$ with probability at least $1-2(\varrho+\vartheta)$ provided that $\varrho,\vartheta\in (0,\frac{1}{4})$. In this way, we can see that repeating above process till the $T$-th epoch will result in
$$
k-\|V^{\top}\tilde{X}^{(T)}\|_{F}^{2} = \tilde{b}_{T}\leq\vartheta^{T}\tilde{b}_{0}< \vartheta^{T}
$$
with probability at least $1-T(\varrho+\vartheta)$, conditioned on the initial setting and $\varrho,\vartheta\in (0,\frac{1}{2T})$. Then solving $\vartheta^{T}\leq \varepsilon$ for $T$ tells that $T=\lceil\frac{\log(1/\varepsilon)}{\log(1/\vartheta)}\rceil$ epochs suffice to achieve any aimed accuracy $\varepsilon\in (0,1)$ for $k-\|V^{\top}\tilde{X}^{(T)}\|_{F}^{2}\leq\varepsilon$ with probability at least $1-\lceil\frac{\log(1/\varepsilon)}{\log(1/\vartheta)}\rceil(\varrho+\vartheta)$.

To simplify these expressions, setting $\varrho=\vartheta=\frac{\varphi}{2}$ leads to
$$
\lceil\frac{\log(1/\varepsilon)}{\log(1/\vartheta)}\rceil(\varrho+\vartheta)
=\lceil\frac{\log(1/\varepsilon)}{\log(2/\varphi)}\rceil\varphi\leq \lceil\frac{\log(1/\varepsilon)}{\log(2)}\rceil\varphi=\lceil\log_{2}(1/\varepsilon)\rceil\varphi.
$$
Accordingly, the initial conditions become
$\varphi\in (0,\frac{1}{\lceil\log_{2}(1/\varepsilon)\rceil})$, $\mu\in(0,\min\{c_{0},\frac{c_{1}}{8c_{2}}\varphi^{2}\})$, $m\geq \frac{3\log(2/\varphi)}{c_{1}\mu\tau^{2}}$ and
$$
\tilde{b}_{0} + c_{3}km\mu^{2}\tau^{2} + c_{5}k\sqrt{m\mu^{2}\tau^{2}\log(2/\varphi)}\leq \frac{1}{2}.
$$
With the assumption $\tilde{b}_{0}<\frac{1}{2}$, we could rewrite the above inequality as
$$
c_{3}km\mu^{2}\tau^{2} + c_{5}k\sqrt{m\mu^{2}\tau^{2}\log(2/\varphi)}\leq \frac{1}{2}-\tilde{b}_{0}.
$$
Now we can conclude that, for any $\varepsilon\in (0,1)$ and any $\varphi\in (0,\frac{1}{\lceil\log_{2}(1/\varepsilon)\rceil})$, we have $k-\|V^{\top}\tilde{X}^{(T)}\|_{F}^{2}\leq\varepsilon$ with probability at least $1-\lceil\log_{2}(1/\varepsilon)\rceil\varphi$ by running $T=\lceil\frac{\log(1/\varepsilon)}{\log(2/\varphi)}\rceil$ epochs of our SVRRG-EIGS algorithm, if the following conditions are satisfied:
\begin{eqnarray*}
&&\max_{l}\|A^{(l)}\|_{2}\leq 1,\quad \tilde{b}_{0}<\frac{1}{2},\quad \alpha\in(0,\min\{c_{0}\tau,\frac{c_{1}}{8c_{2}}\tau\varphi^{2}\}),\\
&&\quad m\geq \frac{3\log(2/\varphi)}{c_{1}\alpha\tau},\quad c_{3}km\alpha^{2} + c_{5}k\sqrt{m\alpha^{2}\log(2/\varphi)}\leq \frac{1}{2}-\tilde{b}_{0},
\end{eqnarray*}
where the positive constants are
\begin{eqnarray*}
c_{0}&=&\min\{\frac{1}{32\sqrt{3k\tau^{2}}},\frac{1}{c_{1}\tau^{2}},\frac{-(118406+144k^{2})+\sqrt{(118406+144k^{2})^{2}+18\tau(1+24k^{2})}}{24\tau(1+24k^{2})}\},\\
c_{1}&=& \frac{2}{\tau}(\frac{1}{8}\tau-2\alpha(1+2\alpha)(1+24k^{2})-\frac{118400}{3}\alpha),\quad c_{2} = 96(k^{2}(1+2\alpha)+823),\\
c_{3}&=& 4(1+2\alpha)+192(k^{2}(1+2\alpha)+\frac{7400}{9}),\quad c_{4}=\frac{20}{1-5c_{0}\tau}+c_{0}c_{3}\tau,\quad c_{5}=\sqrt{2c_{4}}.
\end{eqnarray*}

\bibliographystyle{plain}
\bibliography{xuzq16}

\end{document}